\newcommand{\beqa}{\begin{eqnarray}}
\newcommand{\eeqa}{\end{eqnarray}}
\newcommand{\beqas}{\begin{eqnarray*}}
\newcommand{\eeqas}{\end{eqnarray*}}
\newcommand{\ba}{\begin{array}}
\newcommand{\ea}{\end{array}}
\newcommand{\bi}{\begin{itemize}}
\newcommand{\ei}{\end{itemize}}
\newcommand{\RN}[1]{%
  \textup{\uppercase\expandafter{\romannumeral#1}}%
}
\newtheorem{lemma}{Lemma}
\newtheorem{theorem}{Theorem}
\newtheorem{definition}{Definition}
\newtheorem{proposition}{Proposition}
\newtheorem{assumption}{Assumption}
\newtheorem{example}{Example}
\newcounter{spb}
\newcommand{\email}[1]{\protect\href{mailto:#1}{#1}}
\def\b0{\bm{0}}
\def\b1{\bm{1}}
\def\ba{\bm{a}}
\def\bu{\bm{u}}
\def\bA{\bm{A}}
\def\bD{\bm{D}}
\def\bE{\bm{E}}
\def\tE{\tilde{\bm{E}}}
\def\bH{\bm{H}}
\def\bI{\bm{I}}
\def\bM{\bm{M}}
\def\bP{\bm{P}}
\def\bQ{\bm{Q}}
\def\bR{\bm{R}}
\def\bU{\bm{U}}
\def\tU{\tilde{\bm{U}}}
\def\bV{\bm{V}}
\def\bDelta{\bm{\Delta}}
\def\tDelta{\tilde{\bm{\Delta}}}
\def\mI{\mathcal{I}}
\def\P{\mathbb{P}}
\def\R{\mathbb{R}}
\def\oU{\overline{\bm U}}
\begin{document}
\title{Symmetric Matrix Completion with ReLU Sampling\thanks{This work has been accepted for publication in the Proceedings of the 41st International Conference on Machine Learning (ICML 2024). The first and second authors contributed equally to this work. Correspondence to: Laura Balzano (\email{girasole@umich.edu}).}}
\author{
Huikang Liu\thanks{Antai College of Economics and Management, Shanghai Jiao Tong University, Shanghai}
\and
Peng Wang\thanks{Department of Electrical Engineering and Computer Science, University of Michigan, Ann Arbor}
\and  
Longxiu Huang\thanks{Department of Computational Mathematics, Science and Engineering, Michigan State University, Lansing}
\and
Qing Qu\thanks{Department of Electrical Engineering and Computer Science, University of Michigan, Ann Arbor}
\and
Laura Balzano\thanks{Department of Electrical Engineering and Computer Science, University of Michigan, Ann Arbor}
}
\maketitle

\begin{abstract}

We study the problem of symmetric positive semi-definite low-rank matrix completion (MC) with deterministic entry-dependent sampling. In particular, we consider rectified linear unit (ReLU) sampling, where only positive entries are observed, as well as a generalization to threshold-based sampling. We first empirically demonstrate that the landscape of this MC problem is not globally benign: Gradient descent (GD) with random initialization will generally converge to stationary points that are not globally optimal. Nevertheless, we prove that when the matrix factor with a small rank satisfies mild assumptions, the nonconvex objective function is geodesically strongly convex on the quotient manifold in a neighborhood of a planted low-rank matrix. Moreover, we show that our assumptions are satisfied by a matrix factor with i.i.d. Gaussian entries. Finally, we develop a tailor-designed initialization for GD to solve our studied formulation, which empirically always achieves convergence to the global minima.  We also conduct extensive experiments and compare MC methods, investigating convergence and completion performance with respect to initialization, noise level, dimension, and rank.  

\end{abstract} 

\section{Introduction}\label{sec:intro}

Low-rank matrix completion (MC) refers to the task of filling in the missing entries of a partially observed low-rank matrix. It has found applications in diverse fields, such as recommendation systems \cite{koren2009matrix}, sequential bioinformatics \cite{zheng2013collaborative}, and computer vision \cite{ji2010robust}, to name a few. 
In particular, symmetric positive semi-definite (PSD) low-rank MC has applications in covariance matrix completion \cite{candes2010matrix,hosseini2017array}, Hankel matrix completion \cite{chen2013spectral,usevich2016hankel,cai2023structured}, and Euclidean distance matrix completion \cite{laurent2001polynomial, al2005approximate,dokmanic2015euclidean}. An extensive body of literature investigates the statistical and optimization properties of the low-rank MC problem using different approaches, such as nuclear-norm minimization \cite{recht2011simpler,candes2010matrix,candes2012exact}, spectral method \cite{keshavan2010matrix,chatterjee2015matrix}, and matrix factorization \cite{sun2016guaranteed}, among others. 
To analyze the MC problem, most of these approaches rely on the following assumptions: (1) the underlying matrix is low-rank and incoherent, and (2) the entries are observed \emph{independent of the matrix entries} according to a probabilistic mechanism, e.g., uniformly at random. Based on these assumptions, it is possible to prove how many observed entries are required so that the missing entries can be exactly or approximately completed. 

The majority of existing work assumes that the entries are observed uniformly at random, independent of the underlying matrix values. However, this assumption is strict and often violated in practice. In real-world applications where data are collected from measurements, such as distance matrices, missing entries tend to be those that are harder to collect. 
When data are collected from
participants, such as online shopping or surveys, missing answers are typically highly correlated with the question and the value of the true answer. For example, surveys about sensitive topics will have missing entries on any culturally or morally problematic answers. Moreover,
online ratings tend to be skewed to the high end; for example, most people do not read a book or watch a movie that they might dislike. 
Even in sensor systems, missing data are not likely to be independent and completely random. Sensors may saturate at a certain value or break down based on environmental conditions that also affect the sensor value. In all these applications, the probability of missing entries in a matrix is dependent on the underlying values, sometimes deterministically.  

Despite being highly relevant for applications, the problem of recovering missing entries when the sampling mechanism depends on the entries remains a challenging and relatively under-explored area of research. 
Existing papers along these lines give impractical results, such as those where the recovery metric provides no guarantees for recovering entries that are never observed \cite{foygel2011learning,foucart2020weighted}, or high-dimensional consistency results that do not admit finite sample guarantees or make strong assumptions on the sampling probability functions, for example that they are Lipschitz in the matrix entry value and nonzero everywhere \cite{bhattacharya2022matrix}. 
Research works on MC with deterministic sampling focused on understanding fundamental properties that the sampling patterns must exhibit \cite{lee2023matrix}. 
However, none of these works provide clear and practical guidelines for what MC problems can be solved in their given settings and for what algorithms will successfully complete them. 
 
\subsection{Our Contributions}

Our work aims to advance the understanding of MC with deterministic sampling by focusing on symmetric PSD low-rank MC with ReLU sampling, where only the positive entries of a matrix are observed. In this setting, under relatively mild assumptions, we first show that the globally optimal solutions of the low-rank MC problem with a known rank can exactly (resp. approximately) complete the missing entries in the noiseless (resp. noisy) setting. 
Moreover, we prove that the objective function is geodesically strongly convex on the quotient manifold around the {\em planted} low-rank matrix, i.e., the underlying true low-rank matrix. Therefore, with an initial point close enough to the planted low-rank matrix, GD will converge to this desired matrix. This motivates us to tailor an initialization method for GD. 
We empirically demonstrate that GD with our initialization always converges to the planted low-rank matrix. It is worth mentioning that this objective function landscape result is the first of its kind within the literature on dependent or deterministic sampling for MC.

While ReLU sampling is a specific setting, it has the potential to be generalized to a broad class of common missing data problems, e.g., where only a range of values is observed.  In this direction, we have also provided more general assumptions for a broader class of sampling functions. For example, a threshold sample where we observe all entries above a positive threshold will also lead to a strongly convex objective around the planted low-rank matrix. This more general setting requires stronger assumptions, but it gives us more general results that also apply to the noisy setting.
We note that this thresholding sampling setting generally means that a constant fraction of entries are observed. This is an interesting sampling regime when observations are entry-dependent, and a useful one for many sensor data applications where it may be feasible to collect a moderate number of matrix entries.

Even in the best case of our theorems, empirical results (shown in \Cref{sec:expe}) drastically outperform the theory, leaving an exciting open question as to whether better assumptions and proof techniques can deliver theory that matches empirical observation. 
 
\subsection{Literature Review}

%

We review three categories of work that are closed related to our work: deterministic sampling, probabilistic sampling that depends on entry values, and general matrix completion and factorization literature.

\vspace{-0.1in}
\paragraph{Deterministic sampling.} Within deterministic sampling, there is work that must make incoherence or genericity assumptions on the underlying matrix, and work that removes almost all assumptions on the matrix but either needs to leverage other properties (i.e. PSD) or ignore problematic parts of the matrix. We will start with work that does not make assumptions on the underlying matrix.

Possibly most similar to our work is \cite{bishop2014deterministic}, 
where the authors consider deterministic sampling of PSD matrices. They require sampling of principal submatrices, which is very similar to the assumptions we have made, since under ReLU sampling, we observe the diagonal, and under the partition/permutation of \eqref{eq:Upartition}, we also observe the diagonal blocks. 
To the best of our knowledge, neither of the assumptions in our paper or in \cite{bishop2014deterministic} implies the other. More specifically, \citet{bishop2014deterministic}  consider deterministic sampling and assume that a collection of subsets of $\Omega$ admit an ordering such that any two adjacent parts have enough overlap. 
However, in the ReLU sampling setting, this order is hard to construct, and it is not clear whether there exists such an ordered collection of subsets. Moreover, both the analysis and the algorithm proposed in \cite{bishop2014deterministic} heavily rely on these ordered subsets, while we only use our partition for analysis.
They design an algorithm for completion based on their theoretical guarantees, whereas our work simply uses the well-known gradient descent algorithm for completion.
Additionally, to the best of our knowledge, it is not possible to extend their approach to non-PSD matrices.
Our approach also relies heavily on the synergy of PSD matrices and ReLU sampling, but we believe that the generalization we provide in Assumption \ref{ass:main}, and further generalizations thereof, can break this dependency. 

\citet{mazumdar2018representation} studies the ReLU recovery problem in the context of neural network parameters. 
They provide theoretical guarantees for estimating a low-rank matrix observed after adding a bias vector and passing through a ReLU. They formulate a likelihood based on the distribution of the bias vector and show that the maximizer of the likelihood is close to the planted low-rank matrix.
However, it is not clear how to solve their proposed likelihood. 
Their work was inspired in part by the related work in \cite{soltanolkotabi2017learning}, which uses gradient descent to estimate vectors that are observed through a known linear operator and then passed through a ReLU, and provides high-probability finite sample guarantees on the accuracy of the estimates when the known linear measurement operator is random. The matrix completion problem in \cite{ganti2015matrix} seeks to estimate a partially observed matrix, which is a low-rank matrix observed through an unknown entry-wise monotonic and Lipschitz nonlinearity (like ReLU). They estimate both the nonlinearity and the low-rank matrix in an alternating fashion. Their guarantees are for estimating the matrix entries after the nonlinearity, as opposed to the latent low-rank matrix entries directly. 

\citet{foucart2020weighted} studies the non-symmetric matrix completion problem when the observation pattern is deterministic.  It introduces a methodology for deriving a weighting matrix tailored to the specific sampling pattern presented.  They then provide an efficient initialization scheme by making good use of the weighting matrix for the matrix completion problem. Their recovery guarantees are with respect to this weighting matrix, so if parts of the true matrix are not recoverable, the corresponding part of the weight matrix will be zero. This allows them to avoid assumptions on the underlying matrix (other than standard assumptions on the maximum value of the matrix). Other works in this direction include \cite{chatterjee2020deterministic}, which proves necessary asymptotic properties of the sampling patterns for a notion of ``stable recovery'' when no assumptions are made on the underlying matrix. The requirements are strong and general, and so the results are somewhat limited, in particular, their results imply that no sparse deterministic pattern can guarantee stable recovery.

For works that make assumptions on the underlying matrix, 
\citet{pimentel2016characterization} assumes the matrix columns (or rows) are in general position and gives nearly necessary assumptions on the deterministic pattern. \citet{kiraly2015algebraic} also assumes genericity and makes stronger assumptions. 
\citet{liu2017new, liu2019matrix} develops a different assumption related to the spark of the sensing matrix or observation pattern and shows that the planted matrix uniquely fits the entries. 
\citet{shapiro2018matrix} develops conditions for a low-rank solution to be locally unique, i.e. the unique solution in a neighborhood around that point, and makes several interesting connections to the above literature. Finally, \citet{singer2010uniqueness} makes a connection to rigidity theory and provides a method to determine whether a unique completion of a given partially observed matrix and a given rank is possible.


\vspace{-0.1in}
\paragraph{Probabilistic entry-dependent sampling.} 
There is a line of work in matrix completion that considers arbitrary sampling distributions \cite{foygel2011learning, shamir2014matrix} and seeks to bound the expected loss, where the expectation is taken with respect to the sampling distribution.
Therefore, distinct from our work, if an entry is observed with probability zero, their results provide no guarantees for recovery. In classical statistics literature,  one models the data as a random variable and the missingness mechanism as random, represented by a binary random variable, and potentially dependent on the data. In this context a commonly used taxonomy of missing data mechanisms follows \cite{little2019statistical}, which defines MCAR, MAR, and MNAR: (i) MCAR, missing completely at random, where the missing data variable is independent of the data, e.g. entries missing uniformly at random in classical matrix completion literature fits into this category; (ii) MAR, missing at random where the missing data variable is independent of the unobserved data when conditioned on that which is observed (data and/or covariates); and (iii) MNAR, missing not at random where the missing data variable depends on that which is unobserved, even conditioned on that which is observed.

Using this probabilistic setup and terminology, \citet{hernandez2014probabilistic} proposes a Bayesian approach to jointly infer a complete data factorization model and a missing data mechanism, also modeled with matrix factorization. They also provide approximate inference methods for the resulting intractable Bayesian inference problem.
\citet{sportisse2020estimation} considers the identifiability of PPCA with MNAR data and provides 
an estimation algorithm for the principal components in this setting. \citet{sportisse2020imputation} proposes interesting EM-style methods to estimate the joint distribution of the missing data mechanism and the data. \citet{ma2021identifiable} identifies novel sufficient conditions such that the ground truth parameters of a parametric data model are identifiable, and maximum likelihood identifies them uniquely, in the MNAR setting. 

Other papers focused on the MNAR setting include \cite{yang2021tenips,jin2022matrix, agarwal2023causal}. \citet{sengupta2023sparse} compares matrix factorization to multiple imputation, the most popular framework in biostatistics and social sciences for imputing missing entries in the MAR setting. An empirical work that includes ReLU sampling with other MNAR sampling schemes is found in \cite{naik2022truncated}. The authors conclude that convex methods generally do not work as well as nonconvex in the dependent-sampling setting. Closely related to our work is also \cite{bhattacharya2022matrix}, which assumes that each entry of a low-rank $\bM^*$ is observed with some probability $f(\bM^*)$, where $f$ is applied entry-wise and is essentially Lipschitz and non-zero. 
The authors provide modified singular value thresholding and nuclear norm estimators for this setting and prove consistency in the high-dimensional regime, where the dimensions of $\bM^*$ grow but the rank remains uniformly bounded. A similar estimator is proposed in \cite{ma2019missing}, and under a low-rank assumption on the matrix of probabilities of observation, they prove error bounds for estimating the matrix of observation probabilities as well as the underlying matrix.

\vspace{-0.1in}
\paragraph{More general factorization and completion literature.}
The work in \cite{saul2022nonlinear} is highly related to our work, though it is not a matrix completion problem per se. 
This paper seeks a non-negative low-rank matrix $\mathbf{X}$ such that very sparse observation $\bM = f(\mathbf{X})$ and $f$ is a given nonlinearity, such as ReLU. They provide an EM algorithm for learning $\mathbf{X}$ but no theoretical guarantees for when this low-rank approximation exists. 
The work in \cite{seraghiti2023accelerated} studies the same problem and provides an alternating block coordinate descent approach. 

Finally, the work in \cite{ongie2020tensor} studies matrix completion with uniformly missing entries, but the underlying matrix columns are points on a low-dimensional nonlinear variety. The algorithm for completion lifts the partially observed data using a polynomial lifting, which creates a non-uniform sampling structure in the lifted space, where an entire deterministic set of entries in the lifted space must be missing if one entry is missing in the original space. The authors show that it is still possible to perform completion with this structured missing pattern under certain genericity conditions.

\vspace{-0.1in}
\paragraph{Notation.} We write matrices in bold capital letters like $\bm A$, vectors in bold lower-case letters like $\bm{a}$, and scalars in plain letters. Given a matrix $\bm A \in \R^{m\times n}$, we use $\sigma_{\max}(\bm A)$ or $\|\bm A\|$ to denote its largest singular value (i.e., spectral norm), $\sigma_{\min}(\bm A)$ its smallest non-zero singular value, $\|\bm A\|_F$ its Frobenius norm, and $a_{ij}$ its $(i,j)$-th element. Given a vector $\bm a \in \R^n$, we denote its Euclidean norm by $\|\bm a\|$ and the $i$-th entry by $a_i$. Given a positive integer $n$, we denote by $[n]$ the set $\{1,\ldots,n\}$. Let $\mathcal{O}^{m\times n} = \left\{\bm Q \in \R^{m\times n}: \bm Q^T\bm Q = \bm I_n \right\}$ denote the set of all $m\times n$ orthonormal matrices. In particular, let $\mathcal{O}^m = \{\bQ \in \mathbb{R}^{m \times m}: \bQ^T\bQ = \bI_m\}$ denote the set of all $m\times m$ orthogonal matrices. Given an integer n, we define $[n] := \{1,\dots,n\}$. 

\section{Problem Formulation}\label{sec:setup}


\paragraph{MC with ReLU sampling.} In this work, we consider a noisy symmetric PSD MC completion problem. Specifically, let $\bm M^\star := \bm{U}^\star \bm{U}^{\star T} \in \R^{n\times n}$ be a symmetric PSD matrix, where $\bm{U}^\star \in \R^{n \times r}$. In addition, let $\bm M \in \R^{n \times n}$  be a noisy version of $\bm M^\star$ generated by
\begin{align}\label{model:UV}
\bm{M} := \bm M^\star + \bm \Delta,
\end{align}
where $\bm \Delta \in \R^{n\times n}$ is a noise matrix. 
It is worth noting that $\bm M$ is non-symmetric when the noise matrix $\bm \Delta$ is not symmetric. 

There are many applications
where it is common to observe only a partial set of the entries of $\bm M$ \cite{nguyen2019low}. This could be due to data collection, experimental constraints, or inherent missing information \cite{hu2008collaborative}. In this work, we consider a setting where the missingness pattern of the matrix is dependent on the underlying values in the matrix and is deterministic given the matrix entries. Specifically, we suppose that only the non-negative entries in $\bm M$ can be observed, i.e., the observed set 
\begin{align}\label{eq:Ob}
\Omega = \left\{(i,j) \in [n] \times [n]: m_{ij} \ge 0 \right\}. 
\end{align}
Notably, this sampling regime is commonly referred to as ReLU sampling in the literature (see, e.g., \citet{naik2022truncated,mazumdar2018representation}), as it utilizes the function $f(x) = \max\{0,x\}$, known as the rectified linear unit (ReLU) function in deep learning\footnote{See \cite{nair2010rectified} for early use of the phrase ``rectified linear unit,'' but it had been in use for neural nets long before, referred to by the name ``linear threshold unit'' \cite{wersing2001dynamical} or ``positive part'' \cite{jarrett2009best}, among others.}. Then, our goal is to complete the missing entries of $\bm M^\star$ from the observed entries in $\bm M_{\Omega}$.

\begin{figure}[t]
\begin{center}
	\begin{minipage}[b]{0.45\linewidth}
		\centering
		\centerline{\includegraphics[width=1\linewidth]{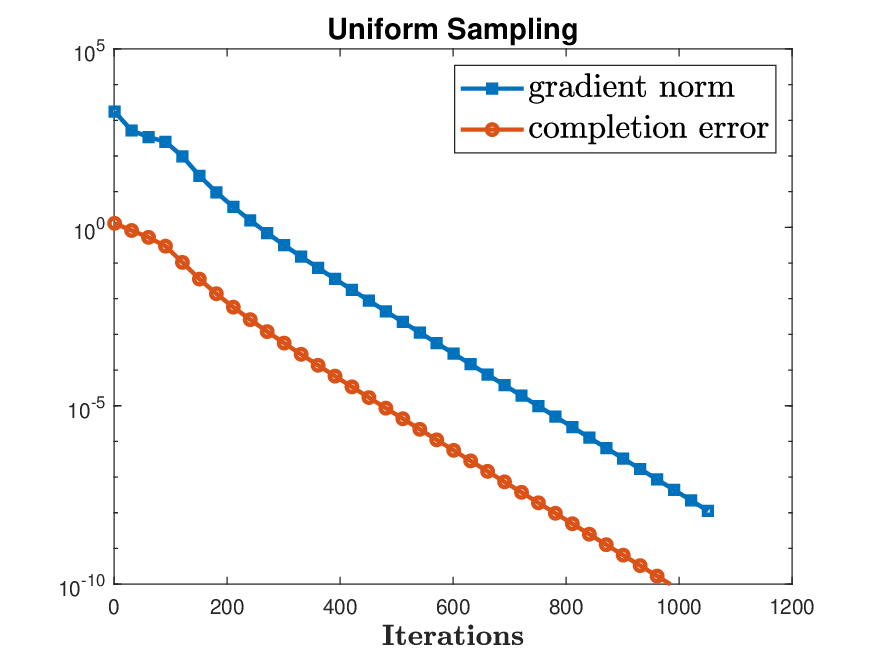}}
	\end{minipage}\vspace{-0.1in}
	\begin{minipage}[b]{0.45\linewidth}
		\centering
		\centerline{\includegraphics[width=1\linewidth]{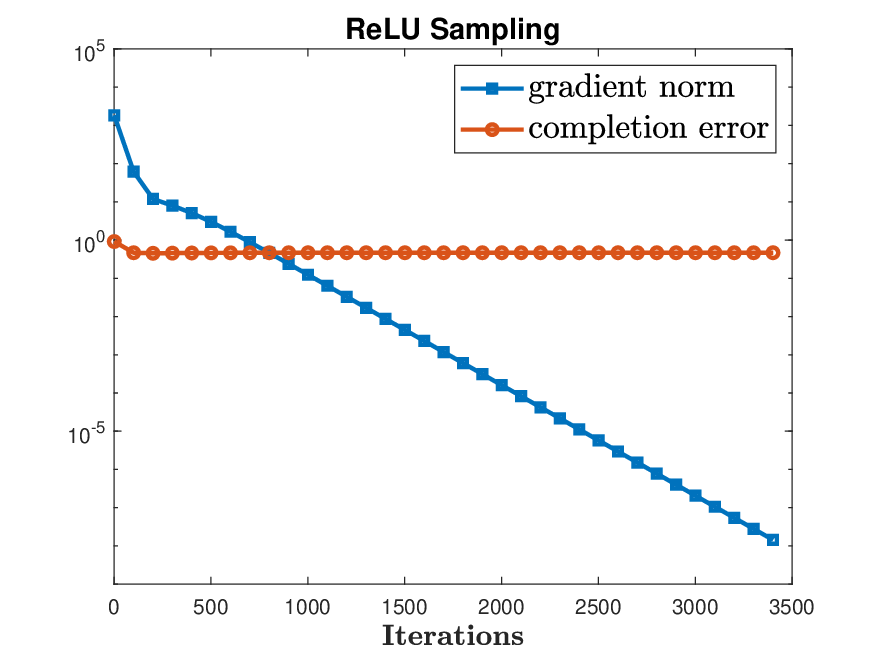}}
	\end{minipage}\vspace{-0.1in}
\end{center}
	\vspace{-0.1in}
	\caption{{\bf Recovery and convergence performance of GD for solving the MC problem with the uniform $(p=0.2)$ and ReLU sampling in the noiseless case.} We apply GD with Gaussian random initialization for solving Problem \eqref{eq:MC} with the uniform and ReLU sampling, respectively. Then, we plot the gradient norm (i.e., $ \|\nabla F(\bm U^{(t)}) \|_F$) and completion error (i.e., $\| \bm U^{(t)}\bm U^{(t)^T} - \bm M \|_F/\|\bm M\|_F$) against number of iterations.} \vspace{-0.1in}
   \label{fig:crit}
\end{figure}  

Before proceeding, we make some remarks on this MC problem. First, existing works \cite{naik2022truncated, saul2022nonlinear, seraghiti2023accelerated} have empirically studied the low-rank MC problem with ReLU sampling and focused on proposing efficient algorithms to address this problem. In particular, \citet{mazumdar2018representation} has theoretically studied the recovery performance of the ReLU-based representation learning problem under a probabilistic model. This differs from our work, which studies the global optimality and optimization landscape of the MC problem. Second, this sampling is merely a specific instance within a broader set of general deterministic sampling schemes.
We prove results on one such generalization, where we observe values above or below a threshold.
We believe our work will be a springboard for the study of more general practical entry-dependent sampling schemes. 

\vspace{-0.1in}
\paragraph{Optimization formulation.} Leveraging the low-rank structure in \Cref{model:UV}, we consider the following non-convex MC problem to complete the missing entries of $\bm M^*$:
\begin{align}\label{eq:MC}
    \min_{\bm{U} \in \R^{n\times r}} F(\bm U) := \frac{1}{4} \left\|\left(\bm U\bm U^T - \bm M \right)_{\Omega} \right\|_F^2.
\end{align}
Note that the rank of the optimization variable $\bm U$ is exactly the rank of the planted low-rank matrix $\bm M^*$. In the noiseless setting with uniform sampling and incoherence assumptions, \citet{ge2016matrix} have shown that Problem \eqref{eq:MC} has a benign global optimization landscape in the sense that it has no spurious local minima---all local minima must also be global minima. 
This, together with the result in \cite{lee2016gradient}, implies that gradient descent (GD) with random initialization with high probability converges to globally optimal solutions that achieve exact completion; see \Cref{fig:crit}(a). One may then conjecture that Problem \eqref{eq:MC} in the ReLU sampling setting also has such a benign optimization landscape. However, this conjecture is 
refuted by empirical evidence in \Cref{fig:crit}(b). This result is typical in the ReLU sampling setting, illustrating that GD with random initialization may converge to a spurious critical point that is not globally optimal. 
The next question is whether it is even possible to recover the missing entries by applying GD for this setting. In this work, we answer this question in the affirmative.  

\section{Main Results}\label{sec:main}

In this section, we first present our theoretical result on symmetric PSD low-rank MC with ReLU sampling in the noiseless case under mild assumptions in \Cref{subsec:noiseless}. 
In \Cref{subsec:noisy}, we extend these results to the noisy case under more general sampling assumptions. In \Cref{subsec:Gaus}, we show that all the introduced assumptions hold with high probability when the entries of $\bm U^\star$ are i.i.d. sampled from the standard Gaussian distribution.   

\subsection{Noiseless Case and ReLU Sampling}\label{subsec:noiseless}

In this subsection, we consider the noiseless case, i.e., $\bDelta = \bm 0$, and ReLU sampling in \eqref{eq:Ob}. We start by introducing some assumptions on the underlying matrix $\bm U^\star$ and the sampling set $\Omega$. Noting that the $r$-dimensional space has $2^r$ orthants, we denote by $\mathcal{C}_i$ the $i$-th orthant of the $r$-dimensional space for each $i \in [2^r]$. For example, there are $4$ orthants (i.e., quadrants) in the $2$-dimensional space and $8$ orthants (i.e., octants) in the $3$-dimensional space. Here, $\mathcal{C}_i$ for each $i \in [2^r]$ is ordered such that the signs of components of a vector belonging to orthant $\mathcal{C}_i$ differ from those in orthant $\mathcal{C}_{i+1}$ in only one component. For ease of exposition, without loss of generality, we assume the rows of $\bm U^\star \in \R^{n\times r}$ are partitioned into the following blocks
\begin{align} \label{eq:Upartition}
    \bm U^\star = \begin{bmatrix}
        \bm U_1^{\star T} & \bm U_2^{\star T} & \dots & \bm U_{2^r}^{\star T}
    \end{bmatrix}^T \in \R^{n\times r},
\end{align}
where each row of $\bm U_i^\star \in \R^{n_i\times r}$ belongs to the $i$-th orthant $\mathcal{C}_i$ for each $i \in [2^r]$ and $\sum_{i=1}^{2^r} n_i = n$. For example,  when $r=2$, the rows of different blocks of $\bm U^*_i$ takes the signs as shown in \Cref{fig:sign}. Based on the above setup, we make the following assumption.  

\begin{assumption}\label{ass:uistar}
     For each $i \in [2^r]$, ${\rm rank}(\bU_i^\star) = r$.
\end{assumption}

We remark that if $r$ is much smaller than $\log n$ and the entries of $\bU^\star$ are i.i.d. sampled from a distribution symmetric about zero, such as the standard normal distribution, 
then the generated submatrices are full rank with high probability.  

\begin{figure}[t]
\begin{center}
\centerline{\includegraphics[width=0.6\columnwidth]{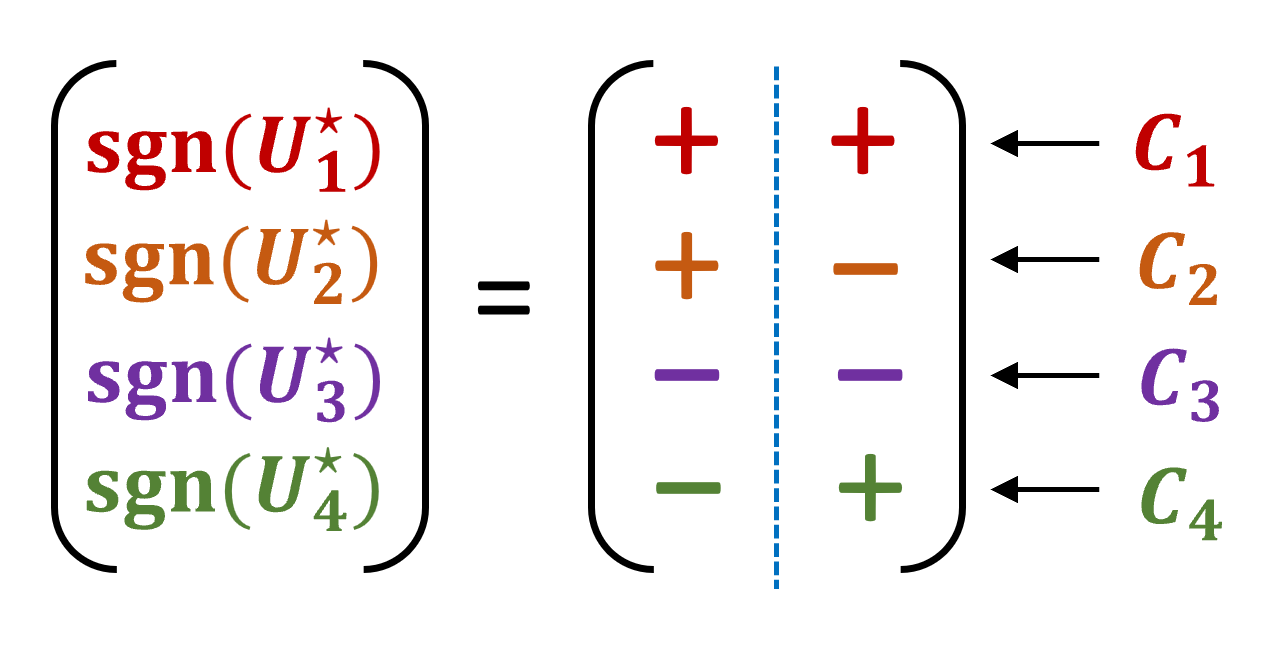}}\vspace{-0.2in}
\caption{{\bf An illustrative figure on the partition of rows of $\bm U^\star \in \R^{n\times 2}$.} We rearrange the rows of $\bm U^\star$ and partition them into 4 blocks, each belonging to different orthants.}\label{fig:sign}
\end{center}
\vskip -0.3in
\end{figure} 
 
Next, we discuss the second assumption that will lead to unique completion. This assumption is illustrated in \Cref{fig:samping}. For any pair of indices $(i, j)\in [2^r] \times [2^r]$, we denote by $\bM^{(i,j)} \in \mathbb{R}^{n_i \times n_j}$ the $(i,j)$-th block of $\bM$, i.e., $\bM^{(i,j)} = \bU^{\star}_{i} \bU^{\star T}_{j}$. Moreover, we denote the set of observations in the $(i,j)$ block by $\Omega_{i,j} = \{ (k, l) \in [n_i] \times [n_j]: m^{(i,j)}_{kl} \geq 0 \}$ 
where $m^{(i,j)}_{kl}$ is the $(k,l)$-th entry of $\bM^{(i,j)}$. Let a column vector $\bu^{\star}_{i, k} \in \R^{r}$ denote the $k$-th row of $\bU^\star_{i}$ for each $k \in [n_i]$. Then, we have $m^{(i,j)}_{kl} = \bu^{\star T}_{i, k}\bu_{j,l}^{\star}$. It is obvious that $\Omega_{i,i} = [n_i] \times [n_i]$ under the ReLU sampling since $\bu^{\star T}_{i,k} \bu^{\star}_{i,l} \geq 0$ always holds for all $k,l \in [n_i]$ due to the fact that the rows of $\bU^\star_i$ have the same sign. 
Besides, since $\bM^{(i+1, i)} = \bU^{\star}_{i+1}\bU_i^{\star T}$ and the signs of components in each row of $\bU^\star_{i+1}$ and those of $\bU^\star_{i}$ only differ in one component, one can image that there are enough observations in $\Omega_{i+1, i}$ with ReLU sampling. More precisely, we can formalize the above observation as follows: 

\begin{assumption}\label{ass:omegai}
    For any $i \in [2^r - 1]$, we have $|\Omega_{i+1,i}| \geq r^2$ and the matrix space spanned by $\{ \bu^{\star}_{i+1, k} \bu^{\star T}_{i, l} :(k, l) \in \Omega_{i+1,i} \}$ is the whole space $\mathbb{R}^{r \times r}$. 
\end{assumption}

Assumptions \ref{ass:uistar} and \ref{ass:omegai} guarantee that there are enough positive (observed) entries for the uniqueness of the completion. We note that these assumptions can be checked from the observed matrix by finding a permutation of rows and columns such that the diagonal blocks are fully observed and of rank $r$ and the off-diagonal blocks have enough entries.  In Section \ref{subsec:Gaus}, we will show that, when $\bU^\star$ is i.i.d. Gaussian random matrix and $r \leq \frac12 \log n$, Assumptions \ref{ass:uistar} and \ref{ass:omegai} hold with high probability.

\begin{figure}[t]
\centerline{\includegraphics[width=0.6\columnwidth]{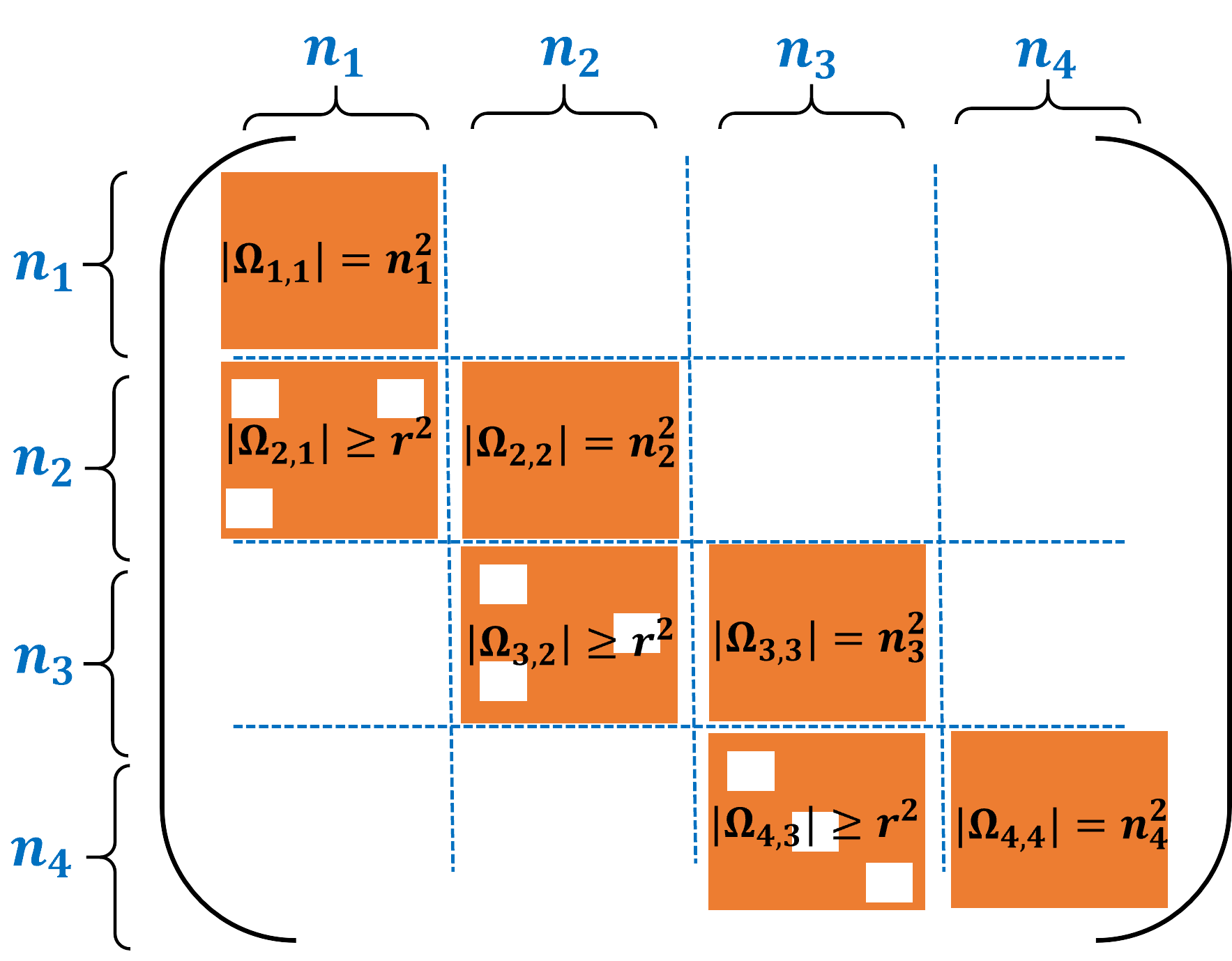}}\vspace{-0.1in}
\caption{{\bf An illustrative figure on \Cref{ass:omegai} when $r = 2$.} Orange pixels denote observed entries, while white pixels denote missing entries.}\label{fig:samping}
\end{figure}

\vspace{-0.1in}
\paragraph{Characterization of global optimality.} Based on the two assumptions, we are ready to characterize the global optimality of the MC problem \eqref{eq:MC} under ReLU sampling. 

\begin{theorem}\label{thm:noiseless:1}
Suppose that $\bm \Delta = \bm 0$ in \eqref{model:UV}, the observed set $\Omega$ is defined in \eqref{eq:Ob}, and Assumptions \ref{ass:uistar} and \ref{ass:omegai} hold. Then, $\bm U \in \R^{d\times r}$ is a global optimal solution to Problem \eqref{eq:MC} if and only if it satisfies $\bm U\bm U^{T} = \bm M^\star$. 
\end{theorem}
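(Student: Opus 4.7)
The proof splits into the trivial direction and the substantive direction. The trivial direction is that if $\bU\bU^T = \bM^\star$, then $(\bU\bU^T - \bM^\star)_\Omega = \bm 0$, so $F(\bU) = 0$, which is clearly a global minimum since $F$ is nonnegative. For the nontrivial direction, let $\bU$ be any global minimizer; since $F(\bU^\star) = 0$, we must have $F(\bU) = 0$, i.e., $(\bU\bU^T)_{ij} = m^\star_{ij}$ for every $(i,j) \in \Omega$. The plan is to leverage the partition \eqref{eq:Upartition} to bootstrap from this partial agreement to full agreement $\bU\bU^T = \bM^\star$.

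Partition the rows of $\bU$ conformally with $\bU^\star$ into blocks $\bU_1,\ldots,\bU_{2^r}$ with $\bU_i \in \R^{n_i\times r}$. The first key step is to handle the diagonal blocks. Because every row of $\bU_i^\star$ lies in the orthant $\mathcal{C}_i$, inner products of any two such rows are nonnegative, so $\bM^{\star(i,i)}$ has only nonnegative entries and $\Omega_{i,i} = [n_i]\times[n_i]$ (the diagonal blocks are fully observed). Combining with $F(\bU)=0$ gives $\bU_i \bU_i^T = \bU_i^\star \bU_i^{\star T}$. By Assumption \ref{ass:uistar}, the right-hand side has rank $r$, so $\bU_i$ also has rank $r$, and a standard SVD argument then yields an orthogonal matrix $\bR_i \in \mathcal{O}^r$ with
\begin{equation*}
\bU_i = \bU_i^\star \bR_i, \qquad i \in [2^r].
\end{equation*}

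The second key step is to show that all of these orthogonal factors $\bR_i$ coincide, using the off-diagonal observations guaranteed by Assumption \ref{ass:omegai}. For any $i \in [2^r - 1]$ and any $(k,l) \in \Omega_{i+1,i}$, the condition $F(\bU)=0$ reads $\bu_{i+1,k}^{\star T} \bR_{i+1} \bR_i^T \bu_{i,l}^{\star} = \bu_{i+1,k}^{\star T} \bu_{i,l}^{\star}$. Setting $\bQ_i := \bR_{i+1}\bR_i^T - \bI_r$ and using the identity $\bu_{i+1,k}^{\star T} \bQ_i \bu_{i,l}^{\star} = \langle \bu_{i+1,k}^\star \bu_{i,l}^{\star T}, \bQ_i\rangle$, this becomes
\begin{equation*}
\bigl\langle \bu_{i+1,k}^\star \bu_{i,l}^{\star T},\ \bQ_i \bigr\rangle = 0 \quad \text{for every } (k,l) \in \Omega_{i+1,i}.
\end{equation*}
By the spanning clause of Assumption \ref{ass:omegai}, the outer products on the left span all of $\R^{r\times r}$, forcing $\bQ_i = \bm 0$ and hence $\bR_{i+1} = \bR_i$. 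Chaining these equalities shows $\bR_1 = \bR_2 = \cdots = \bR_{2^r} =: \bR$, so $\bU = \bU^\star \bR$ and $\bU\bU^T = \bU^\star \bU^{\star T} = \bM^\star$.

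The only real subtlety is the off-diagonal step: I must check that the observed pattern forces agreement of the orthogonal factors across \emph{adjacent} orthants, and that adjacency (i.e., sign differing in exactly one coordinate) suffices to transitively couple all $2^r$ blocks. The first is handled cleanly by the bilinear-to-inner-product rewrite together with the spanning hypothesis, while the second is automatic once the orthants are ordered as specified in the paragraph preceding \eqref{eq:Upartition}. A secondary point worth verifying is that $n_i \geq r$ so that the diagonal-block argument applies; this is implicit in Assumption \ref{ass:uistar}, which requires $\bU_i^\star$ to have rank $r$.
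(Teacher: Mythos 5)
Your proposal is correct and follows essentially the same route as the paper: fully observed diagonal blocks plus Assumption \ref{ass:uistar} give $\bU_i = \bU_i^\star \bR_i$ for orthogonal $\bR_i$, and the spanning clause of Assumption \ref{ass:omegai} applied to the adjacent off-diagonal blocks forces all the $\bR_i$ to coincide. The only cosmetic difference is that the paper invokes its Lemma~\ref{ineq:tech-lemma} where you appeal to the standard SVD fact that $\bA\bA^T = \bB\bB^T$ with full column rank implies $\bA = \bB\bQ$ for some $\bQ \in \mathcal{O}^r$; these are the same content, and you also spell out the trivial converse direction that the paper leaves implicit.
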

We defer the proof of this theorem to \Cref{pf:thm 1}. This theorem demonstrates that even under the ReLU sampling regime, the global optimal solutions of Problem \eqref{eq:MC} still recover the underlying matrix $\bm M^\star$ exactly, just as in the uniform sampling regime. 
\vspace{-0.1in}
\paragraph{Uniqueness of global solutions on the manifold.} Remark that the objective function $F(\cdot)$ of Probelm \eqref{eq:MC} is invariant under any orthogonal matrices in $\mathbb{R}^{r \times r}$, i.e., $F(\bU \bQ) = F(\bU)$ for all $\bQ \in \mathcal{O}^r$. To characterize this invariance to orthogonal transformations, let $\sim$ denote an equivalent relation on $\mathbb{R}^{n\times r}$ with the equivalence class 
\begin{align*}
    [\bU] := \left\{\bV \in \mathbb{R}^{n\times r} : \bV \sim \bU \text{ iff }\ \exists\bQ \in \mathcal{O}^r, \bV = \bU \bQ\right\}.
\end{align*}
Then, we define the Riemannian quotient manifold $\mathcal{M}$ as
\begin{align}\label{eq:mani}
    \mathcal{M} \ :=\ \mathbb{R}^{n\times r} \setminus \sim  \ = \ \{[\bU]: \bU \in \mathbb{R}^{n\times r}\}.
\end{align}
If we consider Problem \eqref{eq:MC} on the manifold $\mathcal{M}$, \Cref{thm:noiseless:1} implies that Problem \eqref{eq:MC} has a unique global optimal solution $[\bU^\star]$. Next, we will show that the objective function $F(\cdot)$ exhibits geodesic strong convexity on the quotient manifold $\mathcal{M}$ near the global optimum $[\bU^\star]$ .  
 
\vspace{-0.1in}
\paragraph{Main technical ingredient.}  The proof of \Cref{thm:noiseless:1} relies on the following technical lemma, which could be independent interest. This lemma indicates that the distance between two low-rank matrices on the quotient manifold $\cal M$ is bounded by their subspace distance. Its proof is also deferred to \Cref{sec:app:techlem}. 

\begin{lemma}\label{ineq:tech-lemma} 
   For arbitrary $\bU, \bV \in \R^{n\times r}$ with $r \le n$, suppose that $\text{rank}(\bU) = r$. Then, there exists an orthogonal matrix $\bQ \in \mathcal{O}^{r}$ such that 
   \begin{align*}
        \|\bU - \bV\bQ\|_F \leq \frac{\sigma_{\min}(\bm U) + \|\bV\|}{\sigma_{\min}^2(\bm U)}  \|\bU\bU^T - \bV\bV^T\|_F. 
   \end{align*}
\end{lemma}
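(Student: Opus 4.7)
The plan is to construct the orthogonal matrix $\bQ$ explicitly via a polar decomposition and then bound $\|\bU - \bV\bQ\|_F$ by the triangle inequality, exploiting a single key algebraic identity.

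First I would derive the identity obtained by right-multiplying $\bM := \bU\bU^T - \bV\bV^T$ by $\bU(\bU^T\bU)^{-1}$, which is well defined since $\mathrm{rank}(\bU) = r$:
\begin{equation*}
\bU \;=\; \bM\,\bU(\bU^T\bU)^{-1} + \bV\bA^T, \qquad \text{where } \bA := (\bU^T\bU)^{-1}\bU^T\bV \in \R^{r\times r}.
\end{equation*}
This exhibits the deviation of $\bU$ from the ``linear reconstruction'' $\bV\bA^T$ as exactly $\bM\,\bU(\bU^T\bU)^{-1}$, whose Frobenius norm is at most $\|\bM\|_F/\sigma_{\min}(\bU)$ since $\|\bU(\bU^T\bU)^{-1}\| = 1/\sigma_{\min}(\bU)$.

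Next I would choose $\bQ \in \mathcal{O}^r$ as the orthogonal polar factor of $\bA^T$, so that $\bA^T = \bQ\bH$ with $\bH = (\bA\bA^T)^{1/2} \succeq \bm{0}$. Since $\bV\bA^T - \bV\bQ = \bV\bQ(\bH - \bI)$, the triangle inequality together with $\|\bV\bQ\| = \|\bV\|$ and the submultiplicative bound $\|\bm{A}\bm{B}\|_F \leq \|\bm{A}\|\,\|\bm{B}\|_F$ yields
\begin{equation*}
\|\bU - \bV\bQ\|_F \;\leq\; \frac{\|\bM\|_F}{\sigma_{\min}(\bU)} + \|\bV\|\,\|\bH - \bI\|_F.
\end{equation*}
The remaining task is to control $\|\bH - \bI\|_F$. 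Because $\bH \succeq \bm{0}$, the elementary scalar inequality $(\lambda - 1)^2 \leq (\lambda^2 - 1)^2$ for all $\lambda \geq 0$ gives $\|\bH - \bI\|_F \leq \|\bH^2 - \bI\|_F = \|\bA\bA^T - \bI\|_F$, and a direct computation using $\bU^T\bV\bV^T\bU - (\bU^T\bU)^2 = -\bU^T\bM\bU$ yields
\begin{equation*}
\bA\bA^T - \bI \;=\; -(\bU^T\bU)^{-1}\bU^T\bM\,\bU(\bU^T\bU)^{-1},
\end{equation*}
whose Frobenius norm is at most $\|\bM\|_F/\sigma_{\min}^2(\bU)$. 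Combining these three estimates produces the claimed bound $\frac{\sigma_{\min}(\bU) + \|\bV\|}{\sigma_{\min}^2(\bU)}\,\|\bM\|_F$.

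The main obstacle is choosing the right $\bQ$. The naive Procrustes minimizer (which directly minimizes $\|\bU - \bV\bQ\|_F$) couples $\bQ$ to both $\bU$ and $\bV$ in a way that generates self-referential inequalities with $\|\bU - \bV\bQ\|_F$ on both sides, forcing a local smallness assumption. Taking $\bQ$ as the polar factor of $\bA^T = \bV^T\bU(\bU^T\bU)^{-1}$ is the essential twist: it cleanly splits the total error into a ``projection residual'' scaling like $1/\sigma_{\min}(\bU)$ and a ``rotational adjustment'' scaling like $\|\bV\|/\sigma_{\min}^2(\bU)$, matching exactly the two additive terms of the stated bound.
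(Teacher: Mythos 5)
Your proposal is correct and takes essentially the same route as the paper's proof: your $\bA^T = \bV^T\bU(\bU^T\bU)^{-1}$ is exactly the paper's $\bV^T\bU^{\dagger T}$, your polar factor $\bQ$ coincides with the paper's $\bQ = \bP\bH^T$ built from the SVD of that matrix, and your bounds $\|\bH-\bI\|_F \le \|\bH^2-\bI\|_F$ and $\|\bA\bA^T - \bI\|_F \le \|\bU\bU^T - \bV\bV^T\|_F/\sigma_{\min}^2(\bU)$ are the same two estimates the paper derives via the singular values $\bm\Sigma$. The only difference is presentational (polar decomposition language versus an explicit SVD), so nothing further is needed.
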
 
\vspace{-0.1in} 

\vspace{-0.1in} 
\paragraph{Preliminary setup for manifold optimization.}  As discussed above, we study Problem \eqref{eq:MC} over the quotient manifold $\cal M$ defined in \eqref{eq:mani}. Toward this end, we introduce some concepts of manifold optimization, such as tangent space, Riemannian gradient, and Riemannian Hessian. Since the formal definition of the {\em tangent space} to a quotient manifold is abstract, we describe it informally as follows. We define the {\em vertical space} at $\bU \in \mathbb{R}^{n\times r}$ denoted by $\mathcal{S}_{\bU}$ (see \citet[Definition 9.23, Example 9.25]{boumal2023introduction}) as
\begin{align}\label{defi:verticalspace}
    \mathcal{S}_{\bU} := \left\{ \bU\bR \in \R^{n\times r}: \bR + \bR^T = \bm 0,  \bR \in \mathbb{R}^{r\times r}\right\}.
\end{align}
As shown in \eqref{eq:form-sperp}, the orthogonal complement of $\mathcal{S}_{\bU}$, which is called the horizontal space and denoted by $\mathcal{S}^\perp_{\bU}$, is  
\begin{align}\label{defi:horizontalspace}
    \mathcal{S}^\perp_{\bU} := \left\{\bD \in \mathbb{R}^{n \times r} : \bU^{T} \bD = \bD^T \bU \right\}.
\end{align}
According to \citet[Definition 9.24]{boumal2023introduction}), for any $\bU \in \mathbb{R}^{n\times r}$, there exists a bijective mapping $\mathrm{lift}_{\bU}$ between any tangent vector $\bm \xi \in \mathrm{T}_{[\bU]} \mathcal{M}$ and any matrix $\bm D \in \mathcal{S}^\perp_{\bU}$, i.e., 
\[
\mathrm{lift}_{\bU}: \mathrm{T}_{[\bU]} \mathcal{M} \mapsto \mathcal{S}^\perp_{\bU},  \, \mathrm{lift}_{\bU}(\bm \xi) = \bD,
\]  
where $\mathrm{T}_{[\bU]} \mathcal{M}: 
\mathcal{M} \to \R^{n\times r}$ denotes the tangent space to $\mathcal{M}$ at $[\bU]$.  According to \citet[Propositions 9.38 and 9.44]{boumal2023introduction}), the Riemannian gradient and Hessian at $[\bU]$ along a direction $\bm \xi \in \mathrm{T}_{[\bU]} \mathcal{M}$, denoted by $\mathrm{grad} F([\bU])[\bm \xi]$ and $\mathrm{Hess} F([\bU])[\bm \xi, \bm \xi]$, are computed by 
\begin{align}\label{eq:gd Hess}
    & \mathrm{grad} F([\bU])[\bm \xi] = \left\langle \nabla F(\bU), \mathrm{lift}_{\bU}(\bm \xi) \right\rangle, \notag\\
    & \mathrm{Hess} F([\bU])[\bm \xi, \bm \xi] = \left\langle \nabla^2 F(\bU)[\mathrm{lift}_{\bU}(\bm \xi)] , \mathrm{lift}_{\bU}(\bm \xi) \right\rangle.
\end{align}
\textbf{Geodesic strong convexity on $\mathcal{M}$.} Equipped with the above setup, we analyze the optimization landscape of Problem \eqref{eq:MC} around the global optimal solutions in the ReLU sampling regime. Although Problem \eqref{eq:MC} does not possess a benign {\em global} optimization landscape, we show that it has a favorable {\em local} optimization landscape. 
\begin{theorem}\label{thm:noiseless:2}
Under Assumptions \ref{ass:uistar} and \ref{ass:omegai}, $F(\cdot)$ is geodesically strongly convex on the quotient manifold $\mathcal{M}$ at $[\bU^\star]$, i.e., for all $\bm \xi \in \mathrm{T}_{[\bU^\star]} \mathcal{M}$, 
\begin{align}\label{eq:Hess}
    \mathrm{Hess} F([\bU^\star])[\bm\xi, \bm\xi] \geq \frac{\gamma}{2} \|\mathrm{lift}_{\bU^\star} (\bm\xi)\|_F^2,
\end{align}
where 
\begin{align}\label{eq:gamma}
        \gamma := \min_{\bD \in \mathcal{S}^\perp_{\bU^\star}, \|\bD\|_F = 1} \|(\bU^\star \bD^T + \bD\bU^{\star T})_\Omega\|_F^2 > 0. 
\end{align}
\end{theorem}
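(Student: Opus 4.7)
The plan is to combine a direct Hessian computation at $[\bU^\star]$ with a block-structured nullspace argument that exploits Assumptions \ref{ass:uistar} and \ref{ass:omegai}. First, I would compute the Euclidean Hessian quadratic form of $F$ at $\bU^\star$. Expanding $\phi(t) := F(\bU^\star + t\bH)$ in $t$, the residual $(\bU^\star \bU^{\star T} - \bM)_\Omega$ vanishes in the noiseless case, and the cross term at order $t^2$ gives
\begin{equation*}
\langle \nabla^2 F(\bU^\star)[\bH], \bH \rangle = \tfrac{1}{2}\|(\bU^\star \bH^T + \bH \bU^{\star T})_\Omega\|_F^2.
\end{equation*}
Taking $\bH = \mathrm{lift}_{\bU^\star}(\bm\xi) \in \mathcal{S}^\perp_{\bU^\star}$ and applying the lift identity \eqref{eq:gd Hess} shows that $\mathrm{Hess} F([\bU^\star])[\bm\xi, \bm\xi]$ equals the above right-hand side. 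Dividing by $\|\bH\|_F^2$ and invoking the definition of $\gamma$ gives the desired bound \eqref{eq:Hess} once the strict positivity $\gamma > 0$ is established.

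The crux is showing $\gamma > 0$. Since the unit sphere in $\mathcal{S}^\perp_{\bU^\star}$ is compact and $\bD \mapsto \|(\bU^\star \bD^T + \bD\bU^{\star T})_\Omega\|_F^2$ is continuous, it suffices to prove that $(\bU^\star \bD^T + \bD \bU^{\star T})_\Omega = \bm 0$ together with $\bD \in \mathcal{S}^\perp_{\bU^\star}$ forces $\bD = \bm 0$. I would partition $\bD = [\bD_1^T, \ldots, \bD_{2^r}^T]^T$ conformably with \eqref{eq:Upartition}. Under ReLU sampling one has $\Omega_{i,i} = [n_i] \times [n_i]$, so each diagonal block satisfies the Lyapunov-type equation $\bU^\star_i \bD_i^T + \bD_i \bU^{\star T}_i = \bm 0$. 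Pre-multiplying by $\bU^{\star T}_i$ and using invertibility of $\bS_i := \bU^{\star T}_i \bU^\star_i$ (Assumption \ref{ass:uistar}), together with the derived identity $\bS_i (\bU^{\star T}_i \bD_i)^T + (\bU^{\star T}_i \bD_i)\bS_i = \bm 0$, yields the clean reduction $\bD_i = \bU^\star_i \bX_i$ for some skew-symmetric $\bX_i \in \R^{r\times r}$.

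Next I would propagate across the off-diagonal blocks. Substituting the diagonal form into the $(i+1,i)$ block gives
\begin{equation*}
\bU^\star_{i+1} \bD_i^T + \bD_{i+1} \bU^{\star T}_i = \bU^\star_{i+1}(\bX_{i+1} - \bX_i) \bU^{\star T}_i,
\end{equation*}
whose $(k,l)$ entry equals $\langle \bu^\star_{i+1,k}\bu^{\star T}_{i,l},\, \bX_{i+1} - \bX_i \rangle$. Vanishing of this inner product on $\Omega_{i+1,i}$ combined with the spanning condition in Assumption \ref{ass:omegai} forces $\bX_{i+1} = \bX_i$. Iterating over $i \in [2^r - 1]$ collapses all $\bX_i$ to a single skew-symmetric matrix $\bX$, so $\bD = \bU^\star \bX \in \mathcal{S}_{\bU^\star}$ by \eqref{defi:verticalspace}. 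Intersecting with $\bD \in \mathcal{S}^\perp_{\bU^\star}$ yields $\bD = \bm 0$, contradicting $\|\bD\|_F = 1$, so $\gamma > 0$.

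The main obstacle is the second step above: reducing the diagonal Lyapunov equation to $\bD_i = \bU^\star_i \bX_i$ with $\bX_i$ genuinely skew-symmetric. This requires a careful manipulation of $\bU^{\star T}_i \bD_i$ using the positive definiteness of $\bS_i$ to extract the skew structure; directly inverting the Lyapunov operator does not expose it. Once this reduction is in place, the off-diagonal alignment is a clean application of the span hypothesis, and the orthogonality $\mathcal{S}_{\bU^\star} \perp \mathcal{S}^\perp_{\bU^\star}$ (verified by a short trace computation using skewness and symmetry) finishes the argument.
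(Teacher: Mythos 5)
Your proposal is correct and follows essentially the same route as the paper's proof: the same Hessian computation at $\bU^\star$, the same block-wise reduction of the fully observed diagonal blocks to $\bD_i = \bU^\star_i \bX_i$ with $\bX_i$ skew-symmetric (your $\bS_i$-based Lyapunov manipulation is an equivalent rewriting of the paper's pseudo-inverse argument), the same propagation $\bX_{i+1}=\bX_i$ via the spanning condition of Assumption \ref{ass:omegai}, and the same conclusion from $\mathcal{S}_{\bU^\star}\cap\mathcal{S}^\perp_{\bU^\star}=\{\bm 0\}$. Your explicit compactness argument for the attainment of the minimum defining $\gamma$ is a small point the paper leaves implicit, but otherwise the two proofs coincide.
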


We defer the proof of this theorem to \Cref{pf:thm 2}. Intuitively, this theorem demonstrates that the objective function $F(\cdot)$ is strongly convex on the manifold $\cal M$ in the neighborhood of $\bm U^\star$. Consequently, if a tailored initialization in the local neighborhood of $\bm U^\star$ is available, GD is guaranteed to find a global optimal solution.
\vspace{-0.1in}
\paragraph{From geodesic strong convexity on $\mathcal{M}$ to strong convexity in Euclidean space.} Notably, the above geodesic strong convexity on the manifold $\mathcal{M}$ implies strong convexity of $F(\cdot)$ along some directions in Euclidean space. Specifically, for any $\bU \in \mathbb{R}^{n \times r}
$, let $\bU^T \bU^\star = \bP\bm \Sigma \bQ^T$ be a singular value decomposition of $\bU^T \bU^\star$, where $\bm P, \bm Q \in \mathcal{O}^{r}$ and $\bm \Sigma \in \R^{r\times r}$, and $\widetilde{\bU} := \bU\bP\bQ^T$. Using this and \eqref{eq:Hess}, we can show 
\begin{align*} 
        \nabla^2 F(\bm U^\star)[\widetilde{\bU} - \bU^\star, \widetilde{\bU} - \bU^\star]  \geq \frac{\gamma}{2} \|\widetilde{\bU} - \bU^\star\|_F^2.
\end{align*}
Please refer to \cref{sec:app:noiseless} for detailed proof. 

\subsection{Noisy Case and General Deterministic Sampling}\label{subsec:noisy}

In this subsection, we extend our above results to the noisy case with a general deterministic sampling regime. Toward this goal, we need the following assumption on the sampling pattern, which generalizes Assumptions \ref{ass:uistar} and \ref{ass:omegai}. 

\begin{assumption}\label{ass:main}
    There exists a collection of index set $\{\mI_1, \mI_2, \dots, \mI_K\}$ such that the following conditions hold: \vspace{-0.1in}
    \begin{itemize}
        \item[(a)] $\bigcup_{k \in [K]} \mI_k = [n]$;  
        \item[(b)] $\mI_k \times \mI_k \subseteq \Omega$ holds for all $k \in [K]$;
        \item[(c)] There exists $\lambda > 0$ such that $\bU_k^{\star T} \bU_k^\star \succeq \lambda \bI_r$ for each $k \in [K]$, where $\bU_k^\star$ is the matrix with the rows consisting of $\bu_i^\star$ for all $i \in \mI_k$; 
        \item[(d)] For any $k \neq l \in [K]$, there exists a path $k_0 \rightarrow k_1 \rightarrow \cdots \rightarrow k_s$ such that $k_0 = k$, $k_s = l$ and $\mI_{k_{j-1}} \times \mI_{k_j} \subseteq \Omega$ for all $j \in [s]$. 
    \end{itemize}
\end{assumption}
Now, let us explain these conditions in more detail. Condition (a) guarantees that each row of $\bm U^\star$ belongs to at least one submatrix; Condition (b) guarantees that the diagonal blocks of $\bm M$ index by $\mathcal{I}_k$ for all $k \in [K]$ are fully observed; Condition (c) indicates that there are enough rows in each part such that the submatrix $\bU_k^\star$ is of full row rank; Finally, condition (d) ensures that there are some off-diagonal blocks that are also fully observed and any two of these blocks are connected via a path. We note that the last condition is strict but aids significantly in the proof of \cref{thm:noisy:1}. With a slightly more careful analysis, this condition could be weakened so that off-diagonal blocks are only partially observed. In \Cref{subsec:Gaus}, we show that under the setting where the entries of $\bU^\star$ are i.i.d. sampled from the standard Gaussian distribution, $r \leq O( \log n)$, and the noise is bounded, \Cref{ass:main} holds with high probability. 

\vspace{-0.1in}
\paragraph{Comparison to \citet{bishop2014deterministic}.} Assumption \ref{ass:main} is similar to the assumptions in \cite{bishop2014deterministic}, but as far as we know, neither implies the other. More specifically, \citet{bishop2014deterministic}  consider deterministic sampling and assume that a collection of subsets of $\Omega$ admit an ordering such that any two adjacent parts have enough overlap. However, in the ReLU sampling setting, this order is hard to construct, and it is not clear whether there exists such an ordered collection of subsets. Moreover, both the analysis and the algorithm proposed in \cite{bishop2014deterministic} heavily rely on these ordered subsets, while we only use our partition for analysis. 

\vspace{-0.1in}
\paragraph{General sampling regimes.} Notably, \Cref{ass:main} can be applied to more general sampling regimes. For example, consider a positive-threshold sampling regime, where the entry $m_{ij}$ of $\bm M$ is observed if $m_{ij} \ge \eta$ for a constant $\eta \geq 0$. In particular, ReLU sampling corresponds to the case $\eta = 0$.

\begin{figure}[ht]
\begin{center}
\centerline{\includegraphics[width=0.5\columnwidth]{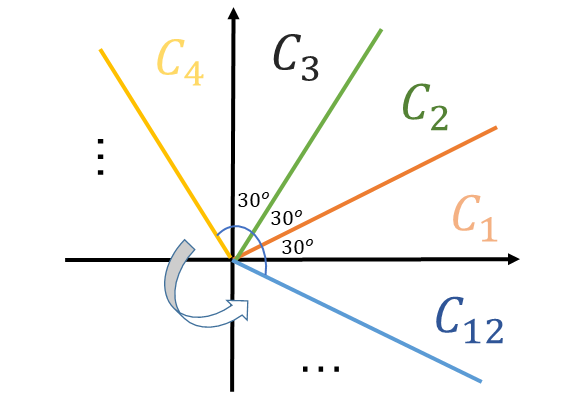}}\vspace{-0.1in}
\caption{{\bf A figure on the partition of the plane into 12 equal sectors.}}\label{figure:sector}
\end{center}
\vskip -0.2in
\end{figure}

\begin{example}[Positive-Threshold Mask]
Consider the noiseless case with $r = 2$ and the entry $m_{ij}$ is observed if $m_{ij} \ge \eta$ for a threshold $\eta >0$. We partition the 2D plane into 12 equal sectors, shown in Figure \ref{figure:sector}. Let us define a collection of index sets as follows:
\begin{align*}
    \mI_k := \{i \in [n] : \bu^\star_i \in \mathcal{C}_k\},\ \forall k \in [12].
\end{align*}
Obviously, we have $[n] = \cup_{k \in [K]} \mI_k$, and thus condition (a) in \Cref{ass:main} holds. For any $i, j \in \mathcal{I}_k$, the rows $\bu^\star_i$ and $\bu^\star_j$ belong to the same sector $\mathcal{C}_k$, which implies that the angle between $\bu^\star_i$ and $\bu^\star_j$ is less than ${\pi}/{6}$. Therefore, we have $m_{ij} = \bu^{\star T}_i \bu_j^{\star} = \|\bu^\star_i\|\|\bu^{\star}_j\| \cos({\pi}/{6}) = \sqrt{3}\|\bu^\star_i\|\|\bu^{\star}_j\|/2.$
As long as the threshold $\eta \leq \min\{\sqrt{3}\|\bu^\star_i\|\|\bu^{\star}_j\|/2:i, j \in \mathcal{C}_k,\ \forall k \in [12]\} $, the entry $m_{ij}$ can be observed for all  $i, j \in \mathcal{I}_k$, and thus $\mI_k \times \mI_k \subseteq \Omega$. This implies that condition (b) in \Cref{ass:main} holds. Condition (c) in  \Cref{ass:main} holds as long as each $\mathcal{I}_k$ contains at least two elements $i \neq j$ such that $\bm u_i^\star$ is not parallel to $\bm u_j^\star$. Finally, for any $i\in \mathcal{I}_k$ and $j \in \mathcal{I}_{k+1}$, one can verify that $m_{ij} \geq  \|\bu^\star_i\|\|\bu^{\star}_j\|/2$ using the similar argument.  
Consequently, as long as the threshold $\eta \leq \min\{\|\bu^\star_i\|\|\bu^{\star}_j\|/2: i \in \mathcal{I}_k,\ j \in \mathcal{I}_{k+1},\ \forall k \in [11]\} $, we have $\mathcal{I}_{k} \times \mathcal{I}_{k+1} \subseteq \Omega$. That is, all the near-diagonal blocks of $M$ will be fully observed, and thus condition (d) in \Cref{ass:main} holds.
\end{example}

\vspace{-0.1in}
\paragraph{Global optimality and local landscape analysis.} Based on \Cref{ass:main}, we are ready to characterize the global optimality and local optimization landscape of Problem \eqref{eq:MC} in the noisy case. 

\begin{theorem}\label{thm:noisy:1}
Let $\bm U \in \R^{n\times r}$ be any global optimal solution to Problem \eqref{eq:MC}. Under Assumption \ref{ass:main}, we have 
\begin{align*}
    \|\bm U\bm U^{T} - \bU^\star\bU^{\star T}\|_F \leq \frac{c}{\lambda} \|\bDelta\|_F
\end{align*} 
where $c >0$ depends on $\lambda$, $\|\bm U^\star\|_F$ and $\|\bm \Delta\|_F$. 
\end{theorem}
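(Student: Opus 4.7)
The plan is to exploit the block structure of \Cref{ass:main} to convert local errors on the observed diagonal and path-connected off-diagonal blocks into a global error bound, by applying \Cref{ineq:tech-lemma} blockwise and then stitching the resulting orthogonal factors into one global rotation. From global optimality of $\bm U$ and $\bm M-\bm M^\star=\bm \Delta$,
\begin{equation*}
    F(\bm U)\le F(\bm U^\star)=\tfrac{1}{4}\|\bm \Delta_\Omega\|_F^2\le\tfrac{1}{4}\|\bm \Delta\|_F^2,
\end{equation*}
so the triangle inequality gives $\|(\bm U\bm U^T-\bm U^\star\bm U^{\star T})_\Omega\|_F\le 2\|\bm \Delta\|_F$. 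Write $\bm U_k:=\bm U_{\mI_k}$ and $\bm U_k^\star:=\bm U^\star_{\mI_k}$. Condition (b) yields $\mI_k\times\mI_k\subseteq\Omega$, hence $\|\bm U_k\bm U_k^T-\bm U_k^\star\bm U_k^{\star T}\|_F\le 2\|\bm \Delta\|_F$. Combining with $\sigma_{\min}(\bm U_k^\star)\ge\sqrt{\lambda}$ from (c) and \Cref{ineq:tech-lemma} produces orthogonal matrices $\bm Q_k\in\mathcal{O}^r$ satisfying $\|\bm U_k-\bm U_k^\star\bm Q_k\|_F\le c_1\|\bm \Delta\|_F$, where $c_1$ depends on $\lambda,\|\bm U^\star\|_F,\|\bm \Delta\|_F$ through the trace-based a priori estimate $\|\bm U_k\|_F^2\le\|\bm U_k^\star\|_F^2+2\sqrt{2r}\|\bm \Delta\|_F$ for the diagonal block.

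The main obstacle is next: gluing the local rotations $\{\bm Q_k\}$ into a single global one. For any edge $k_{j-1}\to k_j$ of a path guaranteed by (d), $\mI_{k_{j-1}}\times\mI_{k_j}\subseteq\Omega$, so $\|\bm U_{k_{j-1}}\bm U_{k_j}^T-\bm U_{k_{j-1}}^\star\bm U_{k_j}^{\star T}\|_F\le 2\|\bm \Delta\|_F$. Substituting $\bm U_k=\bm U_k^\star\bm Q_k+\bm E_k$ with $\|\bm E_k\|_F\le c_1\|\bm \Delta\|_F$ and absorbing the linear-in-$\bm E$ cross terms using $\|\bm U_k^\star\|\le\|\bm U^\star\|_F$ isolates
\begin{equation*}
    \|\bm U_{k_{j-1}}^\star(\bm Q_{k_{j-1}}\bm Q_{k_j}^T-\bm I)\bm U_{k_j}^{\star T}\|_F\le c_2\|\bm \Delta\|_F.
\end{equation*}
The elementary sandwich inequality $\|\bm A\bm X\bm B^T\|_F\ge\sigma_{\min}(\bm A)\sigma_{\min}(\bm B)\|\bm X\|_F$, applied with $\bm A=\bm U_{k_{j-1}}^\star,\bm B=\bm U_{k_j}^\star$, then yields $\|\bm Q_{k_{j-1}}-\bm Q_{k_j}\|_F=\|\bm Q_{k_{j-1}}\bm Q_{k_j}^T-\bm I\|_F\le c_2\|\bm \Delta\|_F/\lambda$; without the PSD lower bound from (c), this sandwich inversion would fail, which is exactly why condition (c) is essential here. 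Chaining along paths of length at most $K$ gives $\|\bm Q_k-\bm Q_1\|_F\le K c_2\|\bm \Delta\|_F/\lambda$ for every $k$.

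To conclude, set $\bm Q:=\bm Q_1$ and apply the triangle inequality blockwise,
\begin{equation*}
    \|\bm U_k-\bm U_k^\star\bm Q\|_F\le\|\bm U_k-\bm U_k^\star\bm Q_k\|_F+\|\bm U_k^\star\|\,\|\bm Q_k-\bm Q\|_F\le \tfrac{c_3}{\lambda}\|\bm \Delta\|_F.
\end{equation*}
Condition (a) ensures $\bigcup_k\mI_k=[n]$, so $\|\bm U-\bm U^\star\bm Q\|_F^2\le\sum_k\|\bm U_k-\bm U_k^\star\bm Q\|_F^2\le K c_3^2\|\bm \Delta\|_F^2/\lambda^2$ (each row of $\bm U-\bm U^\star\bm Q$ is counted at least once). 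Finally, the identity
\begin{equation*}
    \bm U\bm U^T-\bm U^\star\bm U^{\star T}=(\bm U-\bm U^\star\bm Q)\bm U^T+\bm U^\star\bm Q(\bm U-\bm U^\star\bm Q)^T,
\end{equation*}
together with the a priori spectral bounds $\|\bm U\|,\|\bm U^\star\|\le\bigl(\|\bm U^\star\|_F^2+O(\|\bm \Delta\|_F)\bigr)^{1/2}$, produces the desired $\|\bm U\bm U^T-\bm U^\star\bm U^{\star T}\|_F\le (c/\lambda)\|\bm \Delta\|_F$, with $c$ depending on $\lambda,\|\bm U^\star\|_F,\|\bm \Delta\|_F$ and absorbing the structural parameters $K,r$.
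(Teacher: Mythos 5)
Your proposal is correct and follows essentially the same route as the paper's proof: optimality bounds the observed residual by $2\|\bm\Delta\|_F$, \Cref{ineq:tech-lemma} produces per-block rotations on the fully observed diagonal blocks via condition (c), the path condition (d) glues them into a single rotation, and condition (a) assembles the blockwise bounds. The only cosmetic difference is in the gluing step, where you substitute $\bm U_k = \bm U_k^\star\bm Q_k + \bm E_k$ and invert via the two-sided bound $\|\bm A\bm X\bm B^T\|_F \ge \sigma_{\min}(\bm A)\,\sigma_{\min}(\bm B)\,\|\bm X\|_F$, while the paper tracks the SVD factors of $\bm U_k^{\star\dagger}\bm U_k$ explicitly; both yield the same $O(\|\bm\Delta\|_F/\lambda)$ control on $\|\bm Q_{k_{j-1}} - \bm Q_{k_j}\|_F$.
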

We defer the proof of the theorem to \Cref{pf:thm 3}. In particular, the dependence of $c$ on $\lambda$, $\|\bm U^\star\|_F$ and $\|\bm \Delta\|_F$ in specified in \eqref{eq:c depe}. Notably, this theorem generalizes \Cref{thm:noiseless:1} to the noisy case with general deterministic sampling satisfying \Cref{ass:main}.


\begin{theorem}\label{thm:noisy:2}
Let $\bm U \in \R^{n\times r}$ be any global optimal solution to Problem \eqref{eq:MC}. Suppose that the Assumption \ref{ass:main} holds and the noise matrix $\bm \Delta$ in \eqref{model:UV} satisfies 
\begin{align}\label{ineq:noise-upperbound}
\begin{split}
    \|\bDelta\|_F \leq \min & \left\{ \frac{\gamma}{8},\ \frac{\lambda\sqrt{\lambda}}{4(\sqrt{\lambda} + \sqrt{\gamma} + \|\bU^\star\|)},\ \frac{\lambda \sqrt{\gamma}}{4 \tilde{c}_0 \left(1 + \kappa(\sqrt{\gamma} + 2\|\bU^\star\|)\right)}\right\}
\end{split}   
\end{align} 
where $\gamma$ is provided in \eqref{eq:gamma} in \Cref{thm:noiseless:2}, $\tilde{c}_0 > 0$ depends on $\gamma, \bU^\star,$ and $\lambda$,  and $\kappa >0$ only depends on $\bU^\star$. Then, $F(\cdot)$ is geodesically strongly convex on $\mathcal{M}$ at $[\bm U]$, i.e., for all $\bm \xi \in \mathrm{T}_{[\bm U]} \mathcal{M}$,
\begin{align*}
    \mathrm{Hess} F([\bm U])[\bm \xi, \bm \xi] 
    \geq \left( \frac{\gamma}{8} - \|\bDelta\|_F \right)\|\mathrm{lift}_{\bm U}(\bm \xi)\|_F^2.
\end{align*}
\end{theorem}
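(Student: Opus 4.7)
The plan is to transfer the noiseless geodesic strong convexity at $[\bU^\star]$ from Theorem~\ref{thm:noiseless:2} to the noisy global optimum $[\bU]$ via a quantitative perturbation argument. First, combining Theorem~\ref{thm:noisy:1} with Lemma~\ref{ineq:tech-lemma}, I pick $\bQ\in\mathcal{O}^r$ so that $\bE := \bU\bQ - \bU^\star$ has $\|\bE\|_F$ bounded by a constant multiple of $\|\bDelta\|_F/\lambda$. Because $F$ and the horizontal/vertical splitting are invariant under the right action of $\mathcal{O}^r$, I may compute the Hessian at the rotated point and pretend $\bU = \bU^\star + \bE$ with $\bE$ controllably small; correspondingly, the lift $\bD := \mathrm{lift}_{\bU}(\bm\xi) \in \mathcal{S}^\perp_{\bU}$ rotates to a matrix in $\mathcal{S}^\perp_{\bU\bQ}$ of the same Frobenius norm.

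Next I decompose $\bD = \bD_1 + \bD_2$ with $\bD_1 \in \mathcal{S}^\perp_{\bU^\star}$ and $\bD_2 = \bU^\star\bR \in \mathcal{S}_{\bU^\star}$ for some skew-symmetric $\bR$. Substituting $\bU = \bU^\star + \bE$ into the horizontality condition $\bU^T\bD = \bD^T\bU$ and using that $\bD_1$ is already horizontal at $\bU^\star$ yields the Lyapunov equation
\begin{equation*}
    \bU^{\star T}\bU^\star \bR + \bR\,\bU^{\star T}\bU^\star \;=\; \bD^T\bE - \bE^T\bD.
\end{equation*}
Condition (c) of Assumption~\ref{ass:main} gives $\bU^{\star T}\bU^\star \succeq \lambda\bI_r$, so standard Lyapunov estimates produce $\|\bD_2\|_F \lesssim (\|\bU^\star\|/\lambda)\|\bE\|_F\|\bD\|_F$, which is much smaller than $\|\bD\|_F$ under \eqref{ineq:noise-upperbound}.

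A direct expansion of the Hessian at $\bU$ along the horizontal direction $\bD$ gives
\begin{equation*}
    \nabla^2 F(\bU)[\bD,\bD] \;=\; \tfrac{1}{2}\|(\bD\bU^T + \bU\bD^T)_{\Omega}\|_F^2 \;+\; \langle (\bU\bU^T - \bU^\star\bU^{\star T} - \bDelta)_{\Omega},\,\bD\bD^T\rangle.
\end{equation*}
Expanding $\bD\bU^T + \bU\bD^T = (\bD_1\bU^{\star T} + \bU^\star\bD_1^T) + \bR_{\mathrm{pert}}$, where $\bR_{\mathrm{pert}}$ gathers cross terms in $\bE$ and $\bD_2$ with Frobenius norm $O((\|\bE\|+\|\bD_2\|_F)\|\bU^\star\|)$, and invoking \eqref{eq:gamma} for the leading part via $(a-b)^2 \geq \tfrac12 a^2 - b^2$ and Pythagoras yields a lower bound of $\tfrac{\gamma}{4}\|\bD\|_F^2$ for the quadratic part, modulo an error controlled by $\|\bDelta\|_F\|\bD\|_F^2$. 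For the residual inner product, Cauchy-Schwarz together with Theorem~\ref{thm:noisy:1} yields an absolute bound of $(c/\lambda + 1)\|\bDelta\|_F\|\bD\|_F^2$.

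Combining, the coefficient of $\|\bD\|_F^2$ takes the form $\gamma/4 - A(\lambda, \bU^\star, \gamma)\|\bDelta\|_F - \|\bDelta\|_F$; the three terms in the minimum of \eqref{ineq:noise-upperbound} are precisely calibrated so that, respectively, the residual linear-in-deviation contribution is absorbed, the perturbation of $\bU$ (through Theorem~\ref{thm:noisy:1} and Lemma~\ref{ineq:tech-lemma}) does not eat more than $\gamma/8$ of the leading constant, and the Lyapunov-induced vertical component $\bD_2$ contributes negligibly. I expect the main obstacle to be the bookkeeping of constants: the parameters $\kappa$ and $\tilde c_0$ in \eqref{ineq:noise-upperbound} must be matched exactly to the spectral quantities arising in the Lyapunov estimate and in Lemma~\ref{ineq:tech-lemma}, so that the final lower bound assumes the clean form $(\gamma/8 - \|\bDelta\|_F)\|\bD\|_F^2$.
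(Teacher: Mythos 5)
Your proposal is correct in outline and reaches the same conclusion, but it replaces the paper's central technical device with a genuinely different one. The paper works with the perturbed Rayleigh quotient $\widehat{\gamma} := \min\{\|(\bU \bE^T + \bE\bU^{T})_\Omega\|_F^2 : \bE \in \mathcal{S}^{\perp}_{\bU},\ \|\bE\|_F=1\}$, takes its minimizer $\tE$, observes that $\tE$ nearly satisfies the linear system $\bU^{\star T}\bD = \bD^T\bU^{\star}$ defining $\mathcal{S}^\perp_{\bU^\star}$ (since $\|\bU^{\star T}\tE - \tE^T\bU^\star\|_F = \|\oU^T\tE - \tE^T\oU\|_F$ is $O(\|\bDelta\|_F/\lambda)$), and then invokes Hoffman's error bound --- whence the non-constructive constant $\kappa$ in \eqref{ineq:noise-upperbound} --- to produce a nearby element of $\mathcal{S}^\perp_{\bU^\star}$ and conclude $\widehat{\gamma}\ge\gamma/4$. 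You instead decompose an arbitrary horizontal direction at $\bU$ orthogonally as $\bD=\bD_1+\bU^\star\bR$ with $\bD_1\in\mathcal{S}^\perp_{\bU^\star}$ and $\bR$ skew, and derive the Lyapunov equation $\bU^{\star T}\bU^\star \bR + \bR\,\bU^{\star T}\bU^\star = \bD^T\bE-\bE^T\bD$; since Assumption \ref{ass:main}(c) gives $\bU^{\star T}\bU^\star\succeq\lambda\bI_r$, this yields $\|\bU^\star\bR\|_F\le \tfrac{\|\bU^\star\|}{2\lambda}\|\bD^T\bE-\bE^T\bD\|_F$. This is a valid and arguably sharper route: it computes the relevant Hoffman constant explicitly (effectively $\kappa=\|\bU^\star\|/(2\lambda)$ for skew-symmetric right-hand sides), at the cost of having to re-derive the constants appearing in \eqref{ineq:noise-upperbound} in terms of $\lambda$ and $\|\bU^\star\|$, and of carrying error terms that are quadratic in $\|\bDelta\|_F$ from the $(a-b)^2\ge \tfrac12 a^2-b^2$ step (these are absorbable under \eqref{ineq:noise-upperbound}). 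One correction you should make: for the residual term $\langle(\bU\bU^T-\bM)_\Omega,\bD\bD^T\rangle$, do not bound it via \Cref{thm:noisy:1} --- that gives the coefficient $(c/\lambda+1)$ on $\|\bDelta\|_F\|\bD\|_F^2$ and would degrade the final constant below the stated $\gamma/8-\|\bDelta\|_F$. Instead use global optimality directly, as the paper does: $\|(\bU\bU^T-\bM)_\Omega\|_F^2 = 4F(\bU)\le 4F(\bU^\star)\le\|\bDelta\|_F^2$, which gives the clean coefficient $1$.
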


We defer the proof of this theorem to  \Cref{pf:thm 4}. 
Intuitively, this theorem demonstrates that the objective function $F(\cdot)$ is strongly convex on the quotient manifold $\cal M$ in the neighborhood of $[\bm U]$.

\subsection{Verification of Assumptions}\label{subsec:Gaus}

In this subsection, we show that when the entries of $\bm U^\star$ are i.i.d. sampled from the standard Gaussian distribution, Assumptions \ref{ass:uistar}, \ref{ass:omegai}, and \ref{ass:main} all hold with high probability using a concentration argument. 

\vspace{-0.1in}
\paragraph{Noiseless setting.} We first show that when $\bm \Delta = \bm 0$ in \eqref{model:UV}, if the entries of $\bm U^\star$ are i.i.d. Gaussian random variables and the rank is small, Assumptions \ref{ass:uistar} and \ref{ass:omegai} hold with a high probability. 

\begin{proposition}\label{prop:gaussian-noiseless}
    Suppose that $r \leq  (\log_2 n)/2$ and the entries of $\bU^\star$ are i.i.d. sampled from the standard Gaussian distribution. Then, Assumptions \ref{ass:uistar} and \ref{ass:omegai} hold with probability at least $1 - \sqrt{n} \exp(- \sqrt{n}/{16})$.
\end{proposition}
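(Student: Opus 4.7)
The plan is to establish both Assumptions \ref{ass:uistar} and \ref{ass:omegai} by combining one quantitative concentration step (a Chernoff bound on the orthant occupancies $n_i$) with qualitative almost-sure genericity arguments that exploit the absolute continuity of a Gaussian conditioned on any orthant. Essentially all of the quantitative probability comes from the Chernoff step, while the rank and span conditions will hold with probability one conditional on the orthant assignments of the rows of $\bU^\star$.

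For the concentration step, each row of $\bU^\star$ is standard Gaussian in $\R^r$, so it lies in orthant $\mathcal{C}_i$ independently with probability $2^{-r}$. Thus $n_i \sim \mathrm{Binomial}(n, 2^{-r})$ with $\mathbb{E}[n_i] = n/2^r \geq \sqrt{n}$ since $r \leq (\log_2 n)/2$. A multiplicative Chernoff bound yields $\mathbb{P}(n_i \leq \sqrt{n}/2) \leq \exp(-\sqrt{n}/8)$, and a union bound over the $2^r \leq \sqrt{n}$ orthants gives $n_i \geq \sqrt{n}/2$ simultaneously for all $i$ with probability at least $1 - \sqrt{n}\exp(-\sqrt{n}/16)$ (absorbing constants for large $n$). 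On this event each $\bU_i^\star$ has $\sqrt{n}/2 \gg r$ i.i.d.\ rows with absolutely continuous density on $\mathcal{C}_i$, so any $r$ of them are almost surely linearly independent, which yields $\mathrm{rank}(\bU_i^\star) = r$ and establishes Assumption \ref{ass:uistar}.

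For Assumption \ref{ass:omegai}, the cardinality condition $|\Omega_{i+1,i}| \geq r^2$ follows from a second conditional Chernoff bound. Writing $\bu^T\bv = \sum_{j\neq j_0}|g_{1j}g_{2j}| - |g_{1,j_0}g_{2,j_0}|$ for the inner product of independent conditioned Gaussians $\bu \in \mathcal{C}_i$, $\bv \in \mathcal{C}_{i+1}$ differing in coordinate $j_0$ (with all $g_{\cdot,\cdot}$ i.i.d.\ standard normal), the inequality $X+Y \geq Z$ for i.i.d.\ non-negative $X,Y,Z$ gives $\mathbb{P}(\bu^T\bv \geq 0) \geq 1/2$, so $\mathbb{E}[|\Omega_{i+1,i}|] \geq n_{i+1}n_i/2 \geq n/8 \gg r^2$; freezing the rows of $\bU_i^\star$ makes the rows of $\bU_{i+1}^\star$ conditionally i.i.d., so $|\Omega_{i+1,i}|$ is a sum of independent binomials and its deviation below $r^2$ has negligible failure probability. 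The span condition is equivalent to injectivity of the linear map $M \mapsto (\bU_{i+1}^\star M \bU_i^{\star T})_{\Omega_{i+1,i}}$, and it suffices to exhibit an $r\times r$ biclique $K\times L \subseteq \Omega_{i+1,i}$, since then $\{\bu_{i+1,k}^\star\bu_{i,l}^{\star T} : (k,l)\in K\times L\}$ is a.s.\ a basis of $\R^{r\times r}$ by the almost sure linear independence of any $r$ rows on each side. A standard random bipartite graph argument produces such a biclique with overwhelming probability when $r \leq (\log_2 n)/2$ and a constant fraction of the $n_{i+1}n_i$ pairs are observed.

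The main obstacle is the joint handling of $\Omega_{i+1,i}$ and the genericity required for the span condition, because $\Omega_{i+1,i}$ is itself a deterministic function of the Gaussian entries and naive conditioning on it could destroy the continuous distribution of the rows. The resolution I would pursue is to first establish the existence of an $r\times r$ biclique combinatorially, using only the lower bound $|\Omega_{i+1,i}| \geq n_{i+1}n_i/4$, and then invoke the fact that any finite collection of rows drawn from a Gaussian conditioned on an orthant remains absolutely continuous on a full-dimensional subset, ensuring that $r$ such rows are a.s.\ linearly independent irrespective of the further conditioning on the biclique event.
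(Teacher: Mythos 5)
Your first two steps coincide with the paper's: a Bernstein/Chernoff bound on the orthant occupancies $n_k$ with a union bound over the $2^r \le \sqrt{n}$ orthants (this is exactly where the stated probability $1-\sqrt{n}\exp(-\sqrt{n}/16)$ comes from), followed by an almost-sure genericity argument giving $\mathrm{rank}(\bU_i^\star)=r$. Where you genuinely diverge is the treatment of $\Omega_{i+1,i}$. You bound the \emph{unconditional} probability $\P(\bu^T\bv\ge 0)\ge 1/2$ by comparing $|u_{j_1}v_{j_1}|$ against $|u_{j_0}v_{j_0}|$, which is a clean observation; the paper instead restricts to the sub-orthant $\mathcal{D}_k=\{\bx\in\mathcal{C}_k: x_r\le\max\{x_1,\dots,x_{r-1}\}\}$, which captures at least half of $\mathcal{C}_k$, precisely so that the \emph{conditional} probability $\P(\langle\bu_i^\star,\bu_j^\star\rangle\ge 0\mid \bu_i^\star)\ge 1/2$ holds for every frozen $\bu_i^\star\in\mathcal{D}_k$. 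This is the one real soft spot in your version: once you freeze $\bU_i^\star$ to make the indicators independent across the rows of $\bU_{i+1}^\star$, the per-pair success probability is no longer uniformly at least $1/2$ (a frozen row whose $j_0$-th coordinate dominates makes it small), so ``a sum of independent binomials with conditional mean $\ge n_i n_{i+1}/2$'' does not follow from the unconditional symmetry bound alone. You need either the paper's $\mathcal{D}_k$ device or a separate high-probability lower bound on the conditional mean over the draw of $\bU_i^\star$; for the weak target $|\Omega_{i+1,i}|\ge r^2$ this is easily patched, but for the constant edge fraction your biclique step requires, it matters.

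On the span condition you actually go further than the paper, whose proof stops at $|\Omega_{k+1,k}|\ge n/16$ and leaves the spanning of $\{\bu_{i+1,k}^\star\bu_{i,l}^{\star T}\}$ implicit. Your biclique-plus-tensor-basis argument is the right mechanism (if $K\times L\subseteq\Omega_{i+1,i}$ with $|K|=|L|=r$ and each side's rows form a basis of $\R^r$, the outer products form a basis of $\R^{r\times r}$), and your concern about conditioning on $\Omega$ destroying absolute continuity is legitimate and resolved in the right way. One caveat: extracting a $K_{r,r}$ from a bipartite graph on roughly $\sqrt{n}\times\sqrt{n}$ vertices with a constant fraction of edges via the K\H{o}v\'{a}ri--S\'{o}s--Tur\'{a}n bound only works for $r\lesssim \log n_i/\log 3\approx \log n/(2\log 3)$, which is slightly below the assumed $r\le(\log_2 n)/2$; to cover the full stated range you would have to exploit the (near-)randomness of the observation graph rather than its edge count alone, so ``standard'' is doing some work in that sentence.
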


We defer the proof of this proposition to \Cref{pf:as 1}. While the rank assumption is strict, we empirically observe in  \Cref{subsec:expranks} that it can be relaxed. This rank requirement arises from our proof technique. It is of great interest to find an alternative approach to improve our analysis technique.

\vspace{-0.1in}
\paragraph{Noisy setting.} For the noisy case, i.e., $\bm \Delta \neq  \bm 0$ in \eqref{model:UV}, we construct the collection of index sets mentioned in Assumption \ref{ass:main} in a similar manner to the partition shown in \Cref{figure:sector}. However, it is more difficult to characterize it in high dimensions. To address this, we introduce the concept of $\epsilon$-net (see Definition \ref{defi:eps-net}) to construct these index sets.
Under the Gaussian distribution and ReLU sampling, we prove the following proposition that shows as long as $r \le O(\log n)$ and the noise is small enough, Assumption \ref{ass:main} hold with high probability. We defer the proof to \Cref{pf:as 2}.

\begin{proposition}\label{prop:guassian}
Suppose that  $r\leq \log n/(4\log 3)$, the entries of $\bU^\star$ are i.i.d. sampled from the standard Gaussian distribution, and the noise matrix $\bDelta$ satisfies $\|\bDelta\|_\infty < \min\left\{ \|\bu_k^\star\|^2: k\in [n] \right\}/2$. Then, Assumption \ref{ass:main} holds with probability at least $1 - 1/n$. 
\end{proposition}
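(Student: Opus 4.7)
The plan is to build the collection $\{\mI_1, \ldots, \mI_K\}$ required by Assumption \ref{ass:main} from an angular $\theta$-net on the unit sphere $S^{r-1}$, and then verify the four conditions by a combination of geometry (for (a), (b), (d)) and concentration of measure (for (c)). I would first fix $\theta$ strictly less than $\pi/12$ and take a maximal $\theta$-net $\{\bm v_1, \ldots, \bm v_K\}$ on $S^{r-1}$; a standard volumetric bound gives $K \leq C_1^r$ for an absolute constant $C_1$, and together with the rank hypothesis $r \leq \log n/(4\log 3)$ and a suitable choice of $\theta$, this forces $K \leq n^{1/4}$. Set $\mI_k := \{i \in [n] : \angle(\bu_i^\star, \bm v_k) \leq \theta\}$, so (a) is immediate from the covering property.

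For (b), any $i, j \in \mI_k$ satisfy $\angle(\bu_i^\star, \bu_j^\star) \leq 2\theta < \pi/6$, so
\begin{align*}
\langle \bu_i^\star, \bu_j^\star \rangle \geq \|\bu_i^\star\|\|\bu_j^\star\|\cos(2\theta) \geq \tfrac{1}{2}\min_k \|\bu_k^\star\|^2 > \|\bDelta\|_\infty \geq -\Delta_{ij},
\end{align*}
whence $m_{ij} \geq 0$ and $(i,j) \in \Omega$. For (d), declare $\bm v_k, \bm v_l$ adjacent if $\angle(\bm v_k, \bm v_l) \leq 2\theta$; adjacent net points produce $\angle(\bu_i^\star, \bu_j^\star) \leq 4\theta < \pi/3$ for all $i \in \mI_k, j \in \mI_l$, and the same estimate then gives $\mI_k \times \mI_l \subseteq \Omega$. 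Connectivity of the adjacency graph, inherited from connectivity of $S^{r-1}$, supplies the required paths between any two net points.

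Condition (c) is the probabilistic step. Since $\bu_i^\star$ is standard Gaussian, its normalized direction is uniform on $S^{r-1}$, so each $i$ falls into $\mI_k$ with probability $p$ equal to the normalized measure of a $\theta$-cap, which satisfies $p \geq c_0^r$ for some $c_0 \in (0,1)$ depending only on $\theta$. A Chernoff bound on the binomial count yields $|\mI_k| \geq np/2$ for all $k$ simultaneously with failure probability at most $K\exp(-np/8)$. Conditional on $i \in \mI_k$, the direction is uniform on the cap and independent of $\|\bu_i^\star\|$, so a direct cap computation gives $\mathbb{E}[\bu_i^\star \bu_i^{\star T} \mid i \in \mI_k] \succeq \mu_0 \bI_r$ for some $\mu_0 > 0$ depending only on $\theta, r$. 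Applying matrix Chernoff to $\bU_k^{\star T} \bU_k^\star = \sum_{i \in \mI_k}\bu_i^\star \bu_i^{\star T}$ then gives $\bU_k^{\star T}\bU_k^\star \succeq \lambda \bI_r$ with $\lambda := np\mu_0/4$, with failure probability at most $K r \exp(-c_1 np)$. The hypothesis $r \leq \log n/(4\log 3)$ is designed precisely so that $K \leq n^{1/4}$ and $np \geq n^{3/4}$, making both exponential terms decay super-polynomially; a union bound yields total failure probability at most $1/n$.

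The main obstacle is the quantitative bookkeeping: the constants $C_1$ (covering-number base), $c_0$ (cap-measure base), and $\mu_0$ (isotropic lower bound on the cap covariance) must all be simultaneously controlled for a single choice of $\theta < \pi/12$, and their trade-offs must respect the explicit threshold $\log n/(4\log 3)$ in the rank hypothesis. In particular, verifying $\mathbb{E}[\bu\bu^T\mid \bu/\|\bu\|\in\mathrm{cap}(\bm v_k, \theta)] \succeq \mu_0 \bI_r$ with an explicit $\mu_0$ requires decomposing the conditional covariance into components along and perpendicular to $\bm v_k$ and lower-bounding each explicitly, so that the matrix-Chernoff step delivers a strictly positive $\lambda$ independent of $n$.
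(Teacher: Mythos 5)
Your proposal follows essentially the same route as the paper: a constant-angle net on $\mathcal{S}^{r-1}$ (the paper uses a $\tfrac14$-net, giving cones of half-angle $\pi/12$ and $K\le 9^r\le\sqrt{n}$ under the rank hypothesis), the same geometric cosine bounds $\cos(\pi/6)=\sqrt3/2$ and $\cos(\pi/3)=1/2$ against $\|\bDelta\|_\infty<\tfrac12\min_k\|\bu_k^\star\|^2$ for conditions (b) and (d), and a cap-measure-plus-concentration argument for the sizes $|\mI_k|$. Your treatment of condition (c) via the conditional cap covariance and matrix Chernoff is actually more explicit than the paper, which only asserts $\bU_k^{\star T}\bU_k^\star\succeq\lambda\bI_r$ from $|\mI_k|\ge\sqrt{n}/2\gg r$; the remaining discrepancies (e.g., $K\le n^{1/4}$ versus the paper's $K\le\sqrt{n}$) are quantitative bookkeeping that does not affect the argument.
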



\begin{figure*}[t]
\begin{center}
	\begin{subfigure}{0.32\textwidth}
    	\includegraphics[width = 1\linewidth]{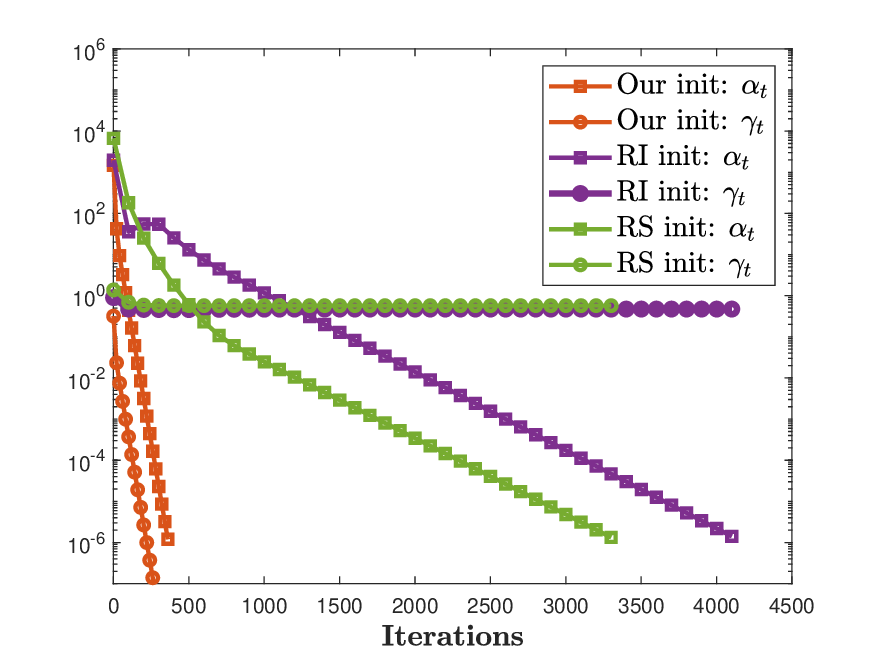}\vspace{-0.05in}
    \caption{Noiseless case: $\sigma = 0$} 
    \end{subfigure} 
    \begin{subfigure}{0.32\textwidth}
    	\includegraphics[width = 1\linewidth]{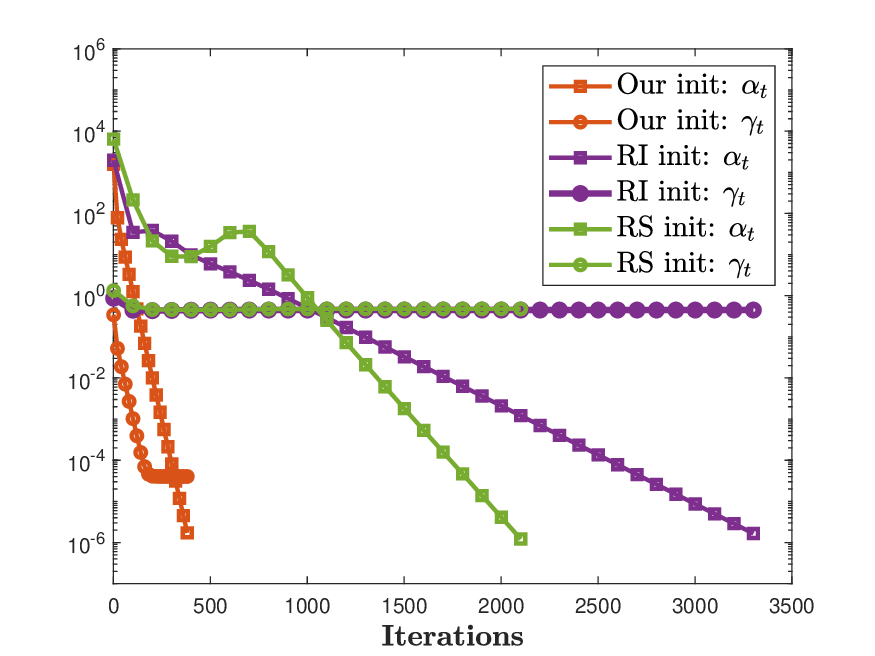}\vspace{-0.05in}
    \caption{Noisy case: $\sigma = 10^{-4}$} 
    \end{subfigure}
    \begin{subfigure}{0.32\textwidth}
    	\includegraphics[width = 1\linewidth]{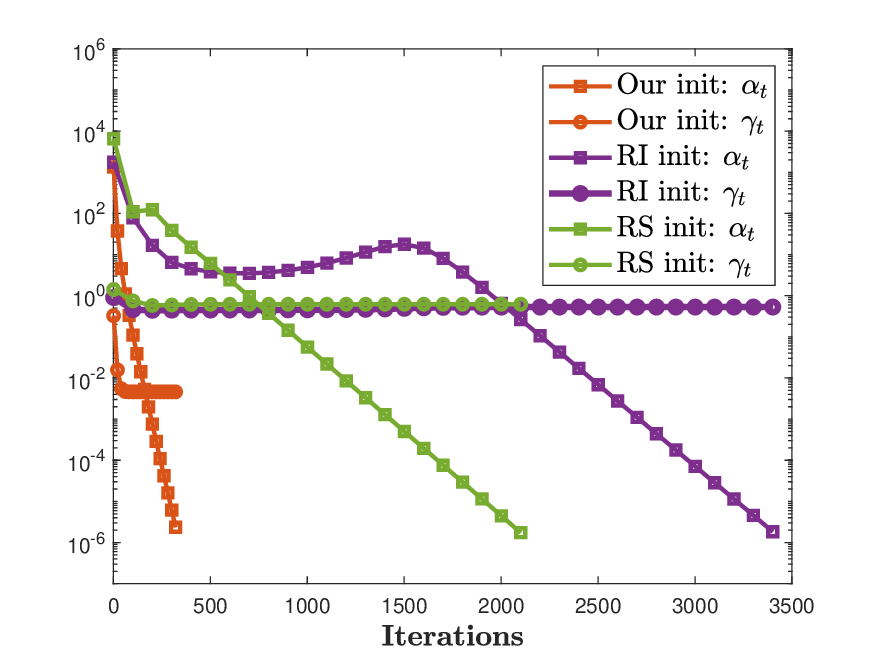}\vspace{-0.05in}
    \caption{Noisy case: $\sigma = 10^{-2}$} 
    \end{subfigure} 
    \vspace{-0.1in}
    \caption{\textbf{Convergence and recovery performance of GD for MC with ReLU sampling under different initialization schemes.} 
    }
    \vspace{-0.15in} 
    \label{fig:init}
\end{center}
\end{figure*}

\section{Experimental Results}\label{sec:expe}

In this section, we conduct numerical experiments to validate our theoretical developments and demonstrate the performance of several state-of-the-art algorithms on the MC problem with ReLU sampling. 
All of our experiments are implemented in MATLAB R2023a on a PC with 32GM memory and Intel(R) Core(TM) i7-11800H 2.3GHz CPU. 
We first introduce our tailor-designed initialization for GD to solve Problem \eqref{eq:MC} in \Cref{subsec:init}. Then, we study the convergence and recovery performance of GD with the proposed initialization and other initialization in \Cref{subsec:conv} to verify the developed theorems. We compare our proposed approach with some existing approaches in \Cref{subsec:comp}. Moreover, we investigate the effect of rank on the accuracy of matrix completion in \Cref{subsec:expranks}. Finally, we conduct experiments with Euclidean distance matrix completion in \cref{sec:eucliddist}.

\begin{table*}[t]
\caption{Comparison of the norm of gradient, function value, and completion error returned by GD with different initialization.}\label{table:1}
\label{sample-table}
\vskip -0.2in
\begin{center}
\begin{tabular}{lcccr}
\toprule
$\sigma = 0$ & $\alpha_{T}$ & $\beta_{T}$ & $\gamma_{T}$ \\
\midrule
{\bf Our init}  & $(9.7 \pm 0.16)\cdot 10^{-7}$ & $(3.3 \pm 0.36) \cdot 10^{-14}$ &  $ (7.6 \pm 0.7) \cdot 10^{-11}$ \\
\text{RI init} & $(6.8 \pm 0.3)\cdot 10^{-3}$ & $ (7.9 \pm 2.1) \cdot 10^{3}$ & $0.5 \pm 0.13$ \\ 
\text{RS init} & $0.1 \pm 0.45$ & $ (7.1 \pm 3.8) \cdot 10^{3} $ & $0.45 \pm 0.24$ \\
\midrule\midrule
$\sigma = 10^{-4}$ & $\alpha_{T}$ & $\beta_{T}$ & $\gamma_{T}$ \\
\midrule
{\bf Our init}    & $(9.7 \pm 0.14)\cdot 10^{-7}$ & $(1.8 \pm 0.03)\cdot 10^{-4}$ & $(4.4 \pm 0.13)\cdot 10^{-5}$ \\
{RI init} & $(0.39 \pm 1.2)\cdot 10^{-5}$ & $(8.4 \pm 2.1) \cdot 10^3$  & $0.51 \pm 0.13$ \\
\text{RS init} & $(1.5 \pm 6.5)\cdot 10^{-3}$ & $(8.4 \pm 2.0) \cdot 10^3$ & $0.51 \pm 0.13$ \\ 
\bottomrule
\end{tabular}
\end{center}
\vskip -0.1in
\end{table*}

\subsection{Our Proposed Algorithm}\label{subsec:init}

A natural approach to solving Problem \eqref{eq:MC} is to apply GD as follows: Given the current iterate $\bm U^{(t)}$, we generate the next iterate via

\vspace{-3mm}
\begin{align}\label{eq:gd}
\bm U^{(t+1)} = \bm U^{(t)} - \eta  \nabla F (\bm U^{(t)}),\ t \ge 0,
\end{align}
where $\bm Z = (\bm U\bm U^T)_{\Omega} - \bm M_{\Omega}$ and $\nabla F(\bm U) = \left( \bm Z +  \bm Z^T \right)\bm U$.
According to \Cref{thm:noiseless:2}, Problem \eqref{eq:MC} with the ReLU sampling exhibits a benign {\em local} optimization landscape, even though it possesses a complicated {\em global} landscape as illustrated in \cref{fig:crit}. Consequently, GD may not be effective for solving Problem  \eqref{eq:MC} unless a proper initial point $\bm U^{(0)}$ is available. This motivates us to propose a tailor-designed initialization as follows: We first randomly generate a matrix $\bm Y \in \R^{n\times r}$, whose entries are i.i.d. sampled from the standard normal distribution. Then, we construct $\bm Q  = \bm Y \bm Y^T$ and generate a new matrix $\bar{\bm Q} \in \R^{n\times n}$ via
\begin{align}\label{eq:init}
    \bar{q}_{ij} = \begin{cases}
        m_{ij},&\ \text{if}\ (i,j) \in \Omega, \\
        -|q_{ij}|,&\ \text{otherwise}.
    \end{cases}
\end{align}
Finally, we apply a truncated SVD to $\bar{\bm Q}$ to obtain an initial point $\bm U^{(0)} \in \mathcal{O}^{n\times r}$, where the columns of $\bm U^{(0)}$ consist of right singular vectors of $\bar{\bm Q}$ corresponding to its top $r$ singular values. Notably, we leverage the low-rank structure of $\bm M$ and the ReLU sampling to design the initialization scheme. According to this and \eqref{eq:gd}, we now summarize our proposed method for solving Problem \eqref{eq:MC} in \Cref{alg:1}. 

\begin{algorithm}[!tb]
   \caption{GD for MC with ReLU sampling}
   \label{alg:1}
\begin{algorithmic}
   \STATE {\bfseries Input:} observed set $\Omega$, observed matrix $\bm M_{\Omega}$
   \STATE Generate $\bar{\bm Q} \in \R^{n\times n}$ using \eqref{eq:init}
   \STATE Apply a truncated SVD to $\bar{\bm Q}$ to obtain $\bm U^{(0)} \in \mathcal{O}^{n\times r}$ 
   \FOR{$t=0,1,2,\dots$}
   \STATE $\bm U^{(t+1)} = \bm U^{(t)} - \eta  \nabla F(\bm U^{(t)})$
   \ENDFOR
\end{algorithmic}
\end{algorithm}

\subsection{Convergence Behavior and Recovery Performance}\label{subsec:conv}

In this subsection, we study the convergence behavior and recovery performance of GD in \Cref{alg:1} for solving Problem \eqref{eq:MC} in the ReLU sampling regime. To measure the convergence and recovery performance of the studied algorithms, we employ the following metrics at the $t$-th iteration: 
\begin{align*} 
    & \text{Norm of gradient:}\ \alpha_t = \|\nabla F(\bm U^{(t)})\|_F, \\
    & \text{Function value:}\ \beta_t = F(\bm U^{(t)}), \\
    & \text{Completion error:}\ \gamma_t =\|\bm U^{(t)}\bm U^{(t)^T} - \bm M\|_F/{\|\bm M\|_F}. 
\end{align*}
Obviously, if $\alpha_t$ is smaller, then $\bm U^{(t)}$ will be closer to a critical point. Moreover, if $\beta_t$ and $\gamma_t$ are smaller, then $\bm U^{(t)}$ will be closer to ground truth. 

\vspace{-0.1in}
\paragraph{Our tailor-designed vs. random-imputation spectral initialization (RI) \& random spectral (RS) initialization.} To demonstrate the effectiveness of our tailor-designed initialization, we compare our proposed initialization with a random-imputation spectral (RI) and a random spectral (RS) initialization. Specifically, the RI initialization proceeds as follows: we generate a matrix $\bm Y \in \R^{n\times r}$, whose entries are i.i.d. sampled from the standard normal distribution. Then, we construct $\bm Q = \bm Y\bm Y^T$ and generate a matrix $\bar{\bm Q}$ via setting $\bar{q}_{ij}  = m_{ij}$ if $(i,j) \in \Omega$. Finally, we generate $\bm U^{(0)} \in \mathcal{O}^{n\times r}$ using the truncated SVD to $\bar{\bm Q}$ as explained in \Cref{subsec:init}. Moreover, the RS initialization proceeds exactly the same way as above, but without the imputation scheme $\bar{q}_{ij}  = m_{ij}$ if $(i,j) \in \Omega$. 

\vspace{-0.1in}
\paragraph{Experimental setup.} In our experiments, we set $n = 200$ and $r=5$. We generate data matrix $\bm M$ according to the model \eqref{model:UV} with different noise level $\sigma \in \{0,10^{-4},10^{-2}\}$ and sample the observed entries via ReLU sampling, e.g. \eqref{eq:Ob}. For each noise level, we generate $20$ data matrices and run GD with our proposed, RI, and RS initialization on each data matrix, respectively. In each test, we terminate the algorithm when the Frobenious norm of the gradient at the $T$-th iteration is less than $10^{-6}$ or $T \ge 5000$. 

\vspace{-0.1in}
\paragraph{Experimental results.} To demonstrate the convergence and recovery performance of our proposed approach, we calculate and report the mean and standard deviation of the norm of gradient $\alpha_T$, function value $\beta_T$, and completion error $\gamma_T$ averaged over 20 runs in \Cref{table:1}. An additional result for $\sigma = 10^{-2}$ can be found in \cref{sec:app:addexpt}. In addition, we select one run from the 20 runs and plot norms of gradients $\{\alpha_t\}_{t\ge 0}$ and completion errors $\{\gamma_t\}_{t\ge 0}$ against the number of iterations in \Cref{fig:init}. 
In the noiseless case, i.e., $\sigma=0$, it is observed that GD with the tailor-designed initialization efficiently finds the ground truth at a linear rate, while GD with the RI or RS initialization converges a critical point that is not globally optimal. In the noisy case, i.e., $\sigma=10^{-4}$, we observe that GD with the tailor-designed initialization converges to a critical point within an $O(\sigma)$-neighborhood of the ground truth, while GD with the RI or RS initialization converges a critical point that is significantly distant from the ground truth. 

The results in \Cref{table:1} and \Cref{fig:init} support our theoretical results in \Cref{thm:noiseless:1} and \Cref{thm:noiseless:2}. They empirically demonstrate that the optimal solutions to Problem \eqref{eq:MC} can recover the ground truth but this problem only has a local benign optimization landscape. 
Additionally, the comparison between our proposed initialization and the RI spectral initialization further highlights the effectiveness of our tailor-designed initialization.

\subsection{Comparison to Existing Methods}\label{subsec:comp}

\begin{figure}
\begin{center}
\centerline{\includegraphics[width = 0.8\linewidth]{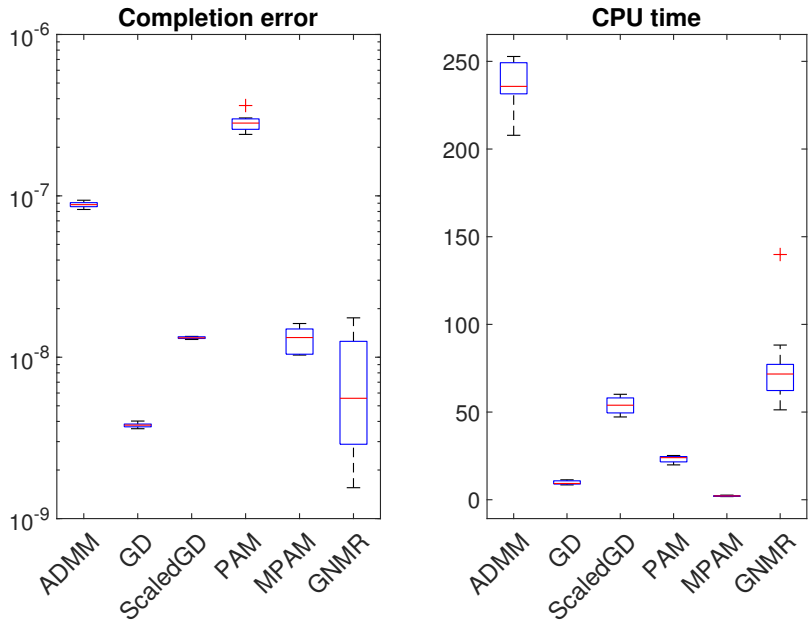}}\vspace{-0.15in}
\caption{Comparison of completion error and CPU time for a variety of MC algorithms in the noiseless case.}\label{fig:time}
\end{center}
\vspace{-0.3in}
\end{figure}

In this subsection, we compare our proposed method with some state-of-the-art methods for solving MC in the ReLU sampling regime in terms of recovery performance and computational efficiency. This includes the alternating direction method of multipliers (ADMM) \cite{boyd2011distributed} for the convex formulation of MC, scaled gradient descent (ScaledGD) for MC studied in \cite{tong2021accelerating}, proximal alternating minimization (PAM) for solving the formulation proposed in \cite{saul2022nonlinear}, momentum proximal alternating minimization (MPAM) for solving nonlinear matrix decomposition studied in \cite{seraghiti2023accelerated}, and Gaussian-Newton matrix recovery (GNMR) method for low-rank MC \cite{zilber2022gnmr}. To guarantee the performance of ScaleGD, we employ our tailored-designed initialization scheme in \Cref{subsec:init} to initialize it. We refer the reader to \Cref{subsec:effi} for the problem formulations and implementation details. 

\vspace{-0.1in}
\paragraph{Experimental setup.} In our experiments, we set $n=1000, r=20$ and generate data matrix $\bm M$ according to the model \eqref{model:UV} with different noise levels $\sigma \in \{0, 10^{-2}\}$. For each noise level, we generate $10$ data matrices and run the MC algorithms on each data matrix. In each test, we terminate our algorithm when the Frobenius norm of the gradient is less than $10^{-4}$. The termination criteria of other algorithms are specified in \Cref{subsec:effi}. 


\vspace{-0.1in}
\paragraph{Experimental results.} To compare the recovery performance and computational efficiency of the tested methods, we calculate and report the completion error and CPU times averaged over 10 runs using box plots in \Cref{fig:time}. See an additional figure for the noisy case in \cref{sec:app:addexpt}. It is observed that our proposed method can achieve a comparable recovery performance to the other tested methods in both noiseless and noisy cases. Moreover, our proposed method is as fast as MPAM, slightly faster than PAM and ScaledGD, and substantially faster than ADMM and GNMR in terms of computational efficiency. 

It is notable that all methods compared here have reasonable completion error on this ReLU MC problem. 
It was shown empirically in \cite{naik2022truncated} that other methods, including direct nuclear norm minimization, do not work as well. It is of great interest to develop a broader theory to understand or clarify this discrepancy among algorithms for the ReLU sampling and more general entry-dependent MC problems.

\vspace{-0.1in}
 \subsection{Completion with Various Ranks} \label{subsec:expranks}

In Figure~\ref{fig:rankvary}, we investigate completion accuracy in the noiseless setting when we increase the rank. In all cases, completion is successful well beyond our assumed rank bound of $r \leq \frac12 \log n$.  Consider $n=200$; even though $\log_2(200) \approx 7.6$, completion only begins to break down around $r=45$. 
This behavior was also reported in \cite{naik2022truncated}.
A deeper understanding of this fundamental limit is an exciting question for future work.

\begin{figure}[t]
\begin{center}
\includegraphics[width = .6\linewidth]{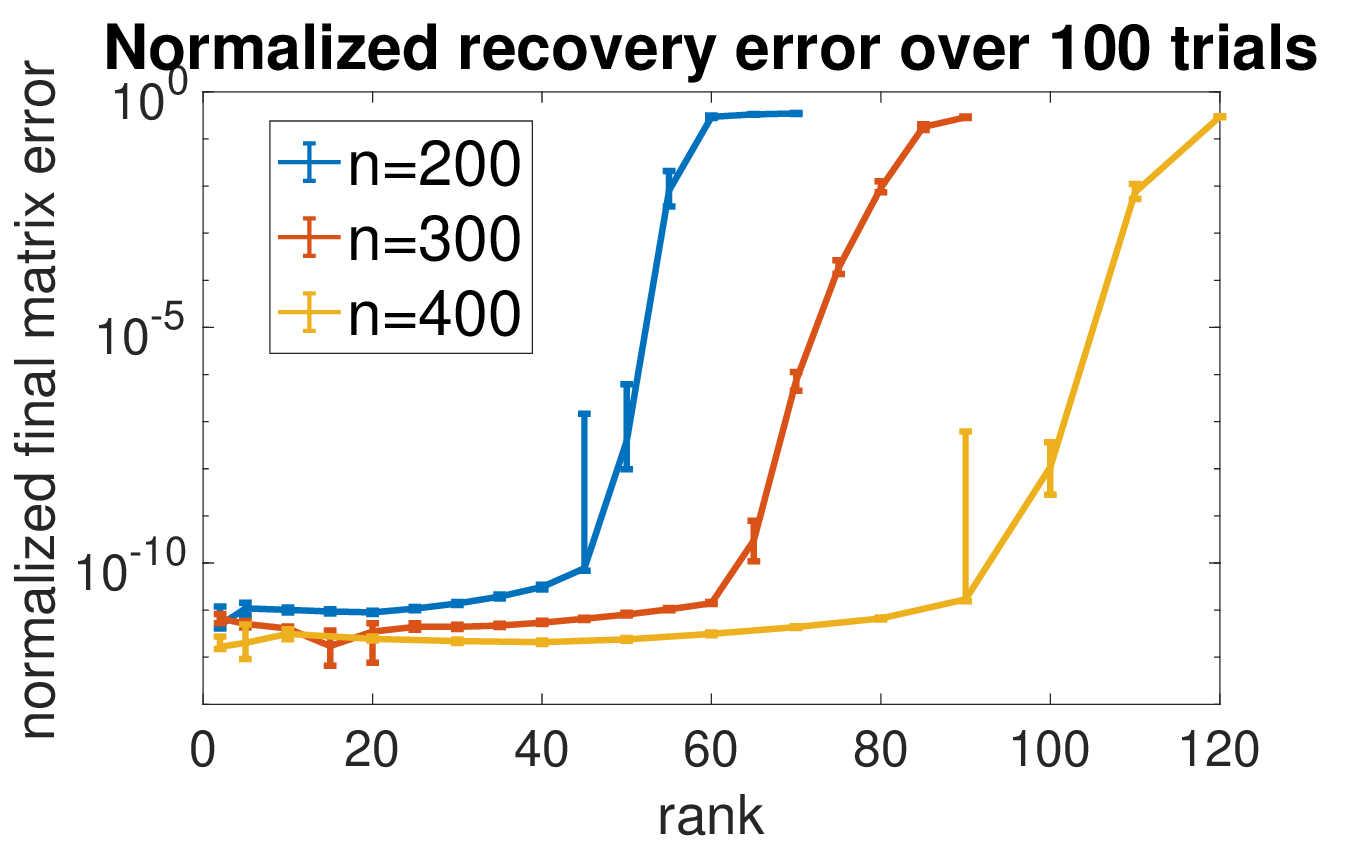}\vspace{-0.1in}
    \caption{Completion in the noiseless case as rank varies, with dimension $n=200, 300, 400$. Line shows the median error, and bars show the 25\% and 75\% error quantiles for 100 trials.} 
    \label{fig:rankvary}
\end{center}
\vspace{-0.25in}
\end{figure} 

\subsection{Euclidean Distance Matrix Completion}\label{sec:eucliddist}

Centered Euclidean distance matrices are low-rank and PSD. In applications, we may only have access to a small number of pairwise distances between points, and we would like to complete the Euclidean distance matrix to know all pairwise distances. 
In this section, we experiment with the scenario where we observe only the smallest distances in the matrix.
We generate Euclidean distance matrices synthetically, with dimension 200, and our observation is a fraction of the smallest entries. 
As shown in \cref{fig:euclid}, depending on rank and fraction of entries,  \cref{alg:1} recovers the matrix exactly.

Note that after centering, 50\% of entries are negative, but for a real Euclidean distance matrix none of the entries are negative. In a real data setting, we would want to design an algorithm to approximately center the matrix given only the observed entries.

\begin{figure}[t]
\begin{center}
\includegraphics[width = .6\linewidth]{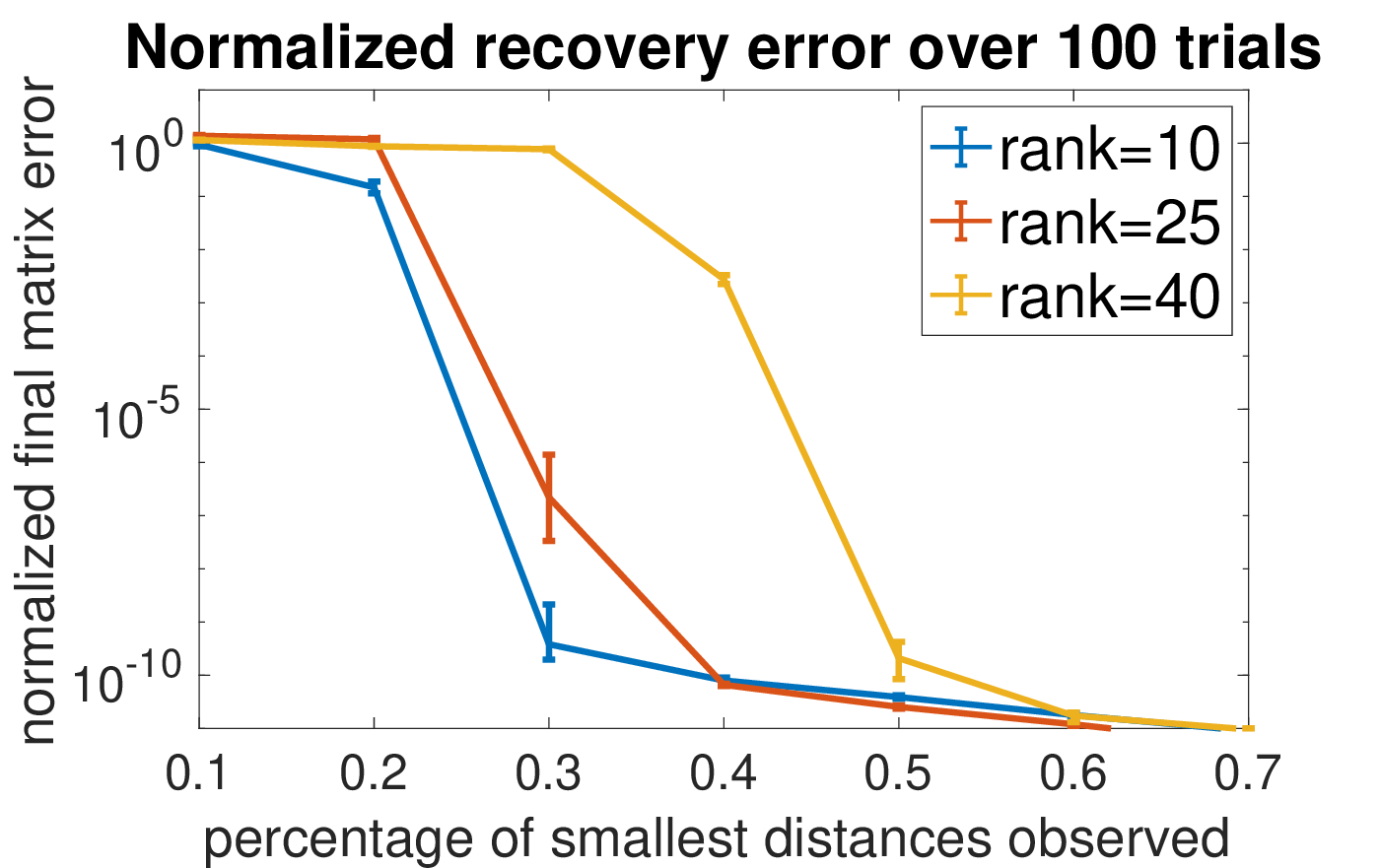}\vspace{-0.1in}
    \caption{Matrix completion for Euclidean distance matrices where only a fraction of the smallest entries are observed. Line shows the median error, and bars show the 25\% and 75\% error quantiles for 100 trials.} 
    \label{fig:euclid}
\end{center}
\vspace{-0.15in}
\end{figure}

\section{Conclusions}\label{sec:concl} 

In this work, we studied the MC problem with ReLU sampling. Under mild assumptions on the underlying matrix, we showed that global optimal solutions of the low-rank formulation of MC recover the underlying matrix in both noiseless and noisy cases. Moreover, we also characterized the local optimization landscape, which is strongly convex on the quotient manifold in the neighborhood of global optimal solutions. Finally, we proposed a tailor-designed initialization for GD to optimize the studied formulation and conducted extensive experiments to showcase the potential of our proposed approach. 

While our work fills a gap in the literature, there are still limitations that need to be addressed, starting with generalizing from SPSD matrices to rectangular matrices and considering unknown rank $r$. Weakening \cref{ass:main}(d) is of great interest, as mentioned before. 
Additionally, we point out that our method and initialization have prior knowledge that may not be available in general, e.g., our initialization explicitly uses the ReLU sampling assumption.
It would be interesting to empirically explore the sensitivity of MC algorithms to this coupling between sampling scheme and initialization.
A final general future direction of study is to extend our landscape analysis to more general deterministic and entry-dependent sampling regimes. 


\section*{Acknowledgements} The work of H.L. was supported in part by NSF China under Grant 12301403 and Young Elite Scientists Sponsorship Program by CAST 2023QNRC001. The work of L.B. and P.W. was supported in part by ARO YIP award W911NF1910027, NSF CAREER award CCF-1845076, and DoE award DE-SC0022186. QQ acknowledges support from NSF CAREER CCF-2143904, NSF CCF-2212066, and NSF CCF-2212326.

\bibliographystyle{abbrvnat}
\bibliography{matrix_completion}

\newpage
\appendix
\onecolumn
\begin{center}
{\Large \bf Supplementary Material}
\end{center}\vspace{-0.15in}
\par\noindent\rule{\textwidth}{1pt}

The appendix is organized as follows: \Cref{sec:appdix:proof} contains all the proofs for the main results. Specifically, in Section \ref{sec:app:noiseless}, we focus on the noiseless ReLU case and provide proofs for all the theorems presented in Section \ref{subsec:noiseless}. In Section \ref{sec:app:noisy:general}, we shift our attention to the noisy general case and provide proofs for the theorems found in Section \ref{subsec:noisy}. In Section \ref{sec:app:gaussian}, we concentrate on the Gaussian and ReLU sampling settings and present all the required proofs for Section \ref{subsec:Gaus}. \Cref{sec:supp:expt} provides detailed setups of our experiments and includes additional numerical results to supplement \Cref{sec:expe}.

\section{Proofs of Main Results}\label{sec:appdix:proof}

Before we proceed, let us recall some definitions and notions. Based on the block partition in \eqref{eq:Upartition},  we denote by $\bM^{(i,j)} \in \mathbb{R}^{n_i \times n_j}$ the $(i,j)$-th block of $\bM$, i.e., $\bM^{(i,j)} = \bU^{\star}_{i} \bU^{\star T}_{j}$ for any pair of indices $(i, j)\in [2^r] \times [2^r]$. Moreover, we denote the set of observations in the $(i,j)$ block by $\Omega_{i,j} = \{ (k, l) \in [n_i] \times [n_j]: m^{(i,j)}_{kl} \geq 0 \}$ 
where $m^{(i,j)}_{kl}$ is the $(k,l)$-th entry of $\bM^{(i,j)}$. Let a column vector $\bu^{\star}_{i, k} \in \R^r$ denote the $k$-th row of $\bU^\star_{i}$ for each $k \in [n_i]$. Then, we have $m^{(i,j)}_{kl} = \bu^{\star T}_{i, k}\bu_{j,l}^{\star}$. Since $\bu^{\star T}_{i,k}\bu^{\star}_{i,l} \geq 0$ always holds for all $k,l \in [n_i]$ due to the fact that the rows of $\bU^\star_i$ have the same sign, we have $\Omega_{i,i} = [n_i] \times [n_i]$ for all $i \in [2^r]$ under the ReLU sampling \eqref{eq:Ob}.

\subsection{Proofs for Section \ref{subsec:noiseless}: Noiseless Case and ReLU Sampling}
\label{sec:app:noiseless}

Firstly, we need to show that the horizontal space $\mathcal{S}^\perp_{\bU}$, which is the orthogonal complement of $\mathcal{S}_{\bU}$ defined in \eqref{defi:verticalspace}, is given by \eqref{defi:horizontalspace}. Indeed, for arbitrary $\bD \in \mathcal{S}^\perp_{\bU}$, due to the orthogonality, we obtain that $\langle \bD, \bU \bR \rangle = \langle \bU^{T} \bD, \bR \rangle = 0$ holds for all skew-symmetric matrix $\bR$. This holds if and only if $\bU^{T} \bD$ is symmetric. Consequently, $\mathcal{S}^\perp_{\bU}$ is given by
\begin{align}\label{eq:form-sperp}
    \mathcal{S}^\perp_{\bU} =  \left\{\bD \in \mathbb{R}^{n \times r} \mid \bU^{T} \bD = \bD^T \bU \right\}.
\end{align}

\subsubsection{Proof of Theorem \ref{thm:noiseless:1}}\label{pf:thm 1}

\begin{proof}[Proof of Theorem \ref{thm:noiseless:1}]
Let $\bm U \in \R^{n\times r}$ be an arbitrary optimal solution to Problem \eqref{eq:MC}. According to \eqref{model:UV} and $\bm \Delta = \bm 0$, we have $\bm M = \bm U^\star \bm U^{\star T}$. This, together with the optimality of Problem \eqref{eq:MC}, implies that  $(\bU \bU^T)_\Omega = (\bU^\star \bU^{\star T})_\Omega$. Let $\bm U = \left[\bU_1^T\ \bU_2^T\ \cdots\ \bU_{2^r}^T\right]^T$ be the same partition as that in \eqref{eq:Upartition}. Since all diagonal blocks are fully observed, we have $\bU_i \bU_i^T = \bU^\star_i \bU^{\star T}_i$. This, together with \Cref{ass:uistar} and \Cref{ineq:tech-lemma}, yields that there exists  $\bQ_i \in \mathcal{O}^r$ such that $\bU_i = \bU^\star_i \bQ_i$ for all $i \in [2^r]$.

Next, we show that the above $\bQ_i$ for all $i \in [2^r]$ are the same. For each tri-diagonal block, it follows from $(\bU \bU^T)_\Omega = (\bU^\star \bU^{\star T})_\Omega$ that $(\bU_{i+1} \bU_{i}^T)_{\Omega_{i+1, i}} = (\bU^\star_{i+1} \bU^{\star T}_i)_{\Omega_{i+1, i}}$ for all $i \in [2^r - 1]$. By substituting the equality $\bU_i = \bU^\star_i \bQ_i$ into it, we get $(\bU_{i+1}^\star \bQ_{i+1} \bQ_{i}^T \bU_{i}^{\star T})_{\Omega_{i+1, i}} = (\bU^\star_{i+1} \bU^{\star T}_i)_{\Omega_{i+1, i}}$. We write its entry-wise form as follows:  
\begin{align}\label{eq:qiqi1}
    \bm u^{\star T}_{i+1, k} \bQ_{i+1} \bQ_i^T \bm u^{\star}_{i, l}  = \bm u^{\star T}_{i+1, k} \bm u^{\star}_{i, l}, \quad \forall (k, l) \in \Omega_{i+1,i}.
\end{align}
According to Assumption \ref{ass:omegai}, $\{ \bm u^{\star}_{i+1, k} \bm u^{\star T}_{i, l} :(k, l) \in \Omega_{i+1,i} \}$ spans the whole $r$-by-$r$ matrix space, so \eqref{eq:qiqi1} holds if and only if $\bQ_{i+1} \bQ_i^T = \bm I_r$. Therefore, $\bQ_i = \bm Q_j$ for all $i \neq j$. This yields $\bU = \bU^\star \bQ$ with $\bQ \in \mathcal{O}^r$ for any optimal solution $\bU$. Then, we complete the proof.
\end{proof} 

\subsubsection{Proof of Theorem \ref{thm:noiseless:2}}\label{pf:thm 2}

Before we proceed, let us compute the gradient and Hessian of $F(\bm U)$. Obviously, the gradient is
\begin{align}\label{eq:grad}
    \nabla F(\bU) = (\bU\bU^T - \bM)_\Omega \bU.
\end{align}
Given a direction $\bD \in \mathbb{R}^{n\times r}$, we have 
    \begin{align*}
        \nabla F(\bU + t\bD) &= (\bU\bU^T + t\bU\bD^T + t\bD\bU^T + t^2\bD\bD^T - \bM)_\Omega (\bU + t\bD) \\
        &= \nabla F(\bU) + t(\bU\bD^T + \bD\bU^T)_\Omega \bU + t^2 (\bD\bD^T)_{\Omega}\bm U + t(\bU\bU^T - \bM)_\Omega \bD \\
        &\quad t^2(\bU\bD^T + \bD\bU^T)_{\Omega}\bm D + t^3 (\bD\bD^T)_{\Omega}\bm D.
    \end{align*}
Then, we compute its bilinear form of the Hessian along a direction $\bD \in \mathbb{R}^{n\times r}$ as follows:
    \begin{align}\label{eq:hess:direc}
    \begin{split}    
        \nabla^2 F(\bm U)[\bm D, \bm D] & =  \left\langle \bm D, \lim_{t \to 0}\frac{\nabla F(\bU + t\bD) - \nabla F(\bm U)}{t} \right\rangle  =  \langle \bD, (\bU\bU^T - \bM)_\Omega \bD + (\bU\bD^T + \bD\bU^T)_\Omega \bU \rangle \\
        & =  \langle \bD\bD^T, (\bU\bU^T - \bM)_\Omega \rangle + \langle \bD\bU^{T}, (\bU\bD^T + \bD\bU^{T})_\Omega  \rangle \\
        & = \langle \bD\bD^T, (\bU\bU^T - \bM)_\Omega \rangle + \frac{1}{2} \|(\bU \bD^T + \bD\bU^{T})_\Omega\|_F^2,
    \end{split}
    \end{align}
where the last inequality holds because $\langle \bD\bU^T, (\bU\bD^T + \bD\bU^T)_\Omega  \rangle = \langle  \bU\bD^T, (\bU\bD^T + \bD\bU^T)_\Omega  \rangle$ and
$$
\langle \bD\bU^T, (\bU\bD^T + \bD\bU^T)_\Omega  \rangle = \frac{1}{2}\langle \bU\bD^T + \bD\bU^T, (\bU\bD^T + \bD\bU^T)_\Omega  \rangle = \frac{1}{2} \langle (\bU\bD^T + \bD\bU^T)_\Omega, (\bU\bD^T + \bD\bU^T)_\Omega  \rangle.
$$

\begin{proof}[Proof of Theorem \ref{thm:noiseless:2}]
Using \eqref{eq:hess:direc} and $\bM = \bU^\star \bU^{\star T}$,  we have 
\begin{align}\label{eq:hessustar}
        \nabla^2 F(\bU^\star)[\bm D, \bm D] = \frac{1}{2} \|(\bU^\star \bD^T + \bD\bU^{\star T})_\Omega\|_F^2 \ge 0.
    \end{align}
This implies that the Hessian of $F$ at $\bU^\star$ is semi-definite positive. 

Next, we consider these directions $\bD$ such that $ \nabla^2 F(\bm U^\star)[\bm D, \bm D] = 0$. We aim to show 
\begin{align}\label{eq1:thm 2}
    \bD = \bU^\star \bV,\ \text{where}\ \bm V + \bm V^T = \bm 0.
\end{align}
First, for each $i \in [2^r]$, since the $i$-th diagonal part is fully observed, i.e., $\Omega_{i,i} = [n_i] \times [n_i]$, we have
\begin{align}\label{eq:uddu}
    \bU^\star_i \bD^T_i + \bD_i\bU^{\star T}_i = \bm 0.
\end{align}
It follows from \Cref{ass:uistar} that $\mathrm{rank}(\bm U_i^\star) = r$, and thus its pseudo-inverse is $\bm U_i^{\star\dagger} = (\bm U_i^{\star T}\bm U_i^{\star})^{-1}\bm U_i^{\star T} \in \R^{r\times n_i}$. 
By multiplying $\bU^{\star^\dag T}_i$ on the right hand side of \eqref{eq:uddu}, we get $\bD_i = - \bU^\star_i \bm D_i^T\bU^{\star^\dag T}_i = \bU^\star_i \bV_i$, where $\bV_i := -\bm D_i^T\bU^{\star^\dag T}_i$. 
Moreover, by multiplying $\bU_i^{\star^\dag}$ and $\bU_i^{\star^\dag T}$ on the left and right sides of \eqref{eq:uddu}, 
we get $\bD_i^T\bU_i^{\star^\dag T} + \bU_i^{\star^\dag}\bD_i = \bm 0$. This implies $\bV_i + \bV_i^T = \bm 0$, which means $\bV_i$ is a skew-symmetric matrix. Now, let us consider the $(i+1, i)$-th block of $(\bU^\star \bD^T + \bD\bU^{\star T})_\Omega$. According to \eqref{eq:uddu} and $\bD_i = \bU^\star_i \bV_i$ for all $i\in [2^r]$, we have
\begin{align*}
    \bm 0 = \left( \bU^\star_{i+1} \bD^T_i + \bD_{i+1}\bU^{\star T}_i \right)_{\Omega_{i+1, i}} =  \left( \bU^\star_{i+1} \bV^T_i \bU^{\star T}_i + \bU^\star_{i+1} \bV_{i+1} \bU^{\star T}_i \right)_{\Omega_{i+1, i}} = \left( \bU^\star_{i+1} \left(\bV^T_i +  \bV_{i+1} \right) \bU^{\star T}_i \right)_{\Omega_{i+1, i}}.
\end{align*}
Using the same argument in \eqref{eq:qiqi1}, we have $\bV^T_i +  \bV_{i+1} = \bm 0$. Thus, we have $\bV_{i} = \bV_{i+1}$ for all $i\in [2^r-1]$, and thus  $\bV_i$ for all $i \in [2^r]$ are the same. Consequently, the space spanned by all the directions $\bD$ satisfying $\nabla F(\bm U^\star)[\bm D, \bm D] = \bm 0$ is exactly the vertical space $\mathcal{S}_{\bU^\star}$ defined in \eqref{defi:verticalspace}, i.e., 
\begin{align*}
    \mathcal{S}_{\bU^\star} = \left\{\bD \in \R^{n\times r}: \nabla F(\bm U^\star)[\bm D, \bm D] = 0 \right\}.
\end{align*}
Define the constant 
\begin{align}\label{defi:gamma}
    \gamma := \min_{\bD \in \mathcal{S}^\perp_{\bU^\star}, \|\bD\|_F = 1} \|(\bU^\star \bD^T + \bD\bU^{\star T})_\Omega\|_F^2 > 0
\end{align}
We claim that $\gamma > 0$. Now, we prove this claim by contradiction. Indeed, suppose that $\gamma = 0$. This implies that there exists a $\bD \in \mathcal{S}^\perp_{\bU^\star} $ with $\|\bD\|_F = 1$ such that $\|(\bU^\star \bD^T + \bD\bU^{\star T})_\Omega\|_F^2 = 0$. This, together with \eqref{eq1:thm 2}, yields $\bD \in \mathcal{S}_{\bU^\star}$ and thus $\bD \in \mathcal{S}_{\bU^\star}\cap \mathcal{S}_{\bU^{\star}}^\perp = \{\bm 0\}$, which contradicts the fact that $\|\bD\|_F = 1$. Then, for any $\bD \in \mathcal{S}^\perp_{\bU^\star}$, \eqref{eq:hessustar} shows that  
\begin{align}\label{ineq:hesslowerbound}
        \nabla^2F(\bm U^\star)[\bm D, \bm D] = \frac{1}{2} \|(\bU^\star \bD_{\mathcal{S}_{\bU^\star}^\perp}^T + \bD_{\mathcal{S}_{\bU^\star}^\perp}\bU^{\star T})_\Omega\|_F^2 \geq \frac{\gamma}{2} \|\bD_{\mathcal{S}_{\bU^\star}^\perp}\|_F^2 = \frac{\gamma}{2} \|\bD\|_F^2.
    \end{align}
where $\bD_{\mathcal{S}_{\bU^\star}^\perp} = \mathrm{Proj}_{\mathcal{S}_{\bU^\star}^\perp}(\bm D)$ is the projection of $\bm D$ on $\mathcal{S}_{\bU^\star}^\perp$. Finally, substituting $\bm D = \mathrm{lift}_{\bU^\star} (\bm \xi) \in \mathcal{S}^\perp_{\bU^\star}$ into the above inequality for any $\bm \xi \in \mathrm{T}_{[\bU^\star]}\mathcal{M} $, together with \eqref{eq:gd Hess}, yields the desired result.
\end{proof}

\paragraph{From the geodesic strong convexity on $\mathcal{M}$ to the strong convexity in Euclidean space.} For any $\bU, \bV \in \mathbb{R}^{n \times r}
$, let $\bU^T \bU^\star = \bP\bm \Sigma \bQ^T$ be an SVD of $\bU^T \bU^\star$, where $\bm P, \bm Q \in \mathcal{O}^{r}$ and $\bm \Sigma \in \R^{r\times r}$. Moreover, let $\widetilde{\bU} := \bU\bP\bQ^T$ and $\bD := \widetilde{\bU} - \bU^\star$, then we must have $\bD \in \mathcal{S}_{\bU^\star}^\perp$. This is because, for any $\bE \in \mathcal{S}_{\bU^\star} = \{\bU^\star \bV: \bV + \bV^T = 0\}$, we have 
\begin{align}\label{eq:eutus}
    \langle \bE, \widetilde{\bU} - \bU^\star\rangle & = \langle \bU^\star \bV, \bU\bP\bQ^T - \bU^\star\rangle  = \langle \bV, \bU^{\star T} \bU\bP\bQ^T -  \bU^{\star T} \bU^\star\rangle \notag \\
    & = \langle \bV, \bQ \bm\Sigma\bQ^T -  \bU^{\star T} \bU^\star\rangle = \bm 0,
\end{align}
where the last equality holds because $\bV$ is a skew-symmetric matrix and $\bQ \bm \Sigma\bQ^T -  \bU^{\star T} \bU^\star$ is symmetric. Therefore, \eqref{ineq:hesslowerbound} implies that 
\begin{align}\label{ineq:strongconvex-euclidean}
        \nabla^2F(\bm U^\star)[\widetilde{\bU} - \bU^\star, \widetilde{\bU} - \bU^\star]  \geq \frac{\gamma}{2} \|\widetilde{\bU} - \bU^\star\|_F^2.
    \end{align}

\subsubsection{Proof of Lemma \ref{ineq:tech-lemma}}\label{sec:app:techlem} 

\begin{proof}[Proof of \Cref{ineq:tech-lemma}]
According to $r = \text{rank}(\bm U)$, we obtain that the pseudo-inverse of $\bU$ is $\bU^{\dagger} := (\bU^T\bU)^{-1}\bU^T \in \R^{r\times n}$. Then, one can verify $\|\bU^\dagger\| \leq 1/\sigma_{\min}(\bm U)$. Next, let $\bm \Psi := \bU\bU^T - \bV\bV^T$. Then, we compute
$$ 
\bm \Psi \bU^{\dagger T} = \bU - \bV\bV^T \bU^{\dagger T}, \quad \bU^{\dagger } \bm \Psi \bU^{\dagger T} = \bI_r - \bU^{\dagger} \bV\bV^T\bU^{\dagger T}. 
$$ 
Note that
\begin{align}\label{eq1:ineq}
    \|\bU^{\dagger } \bm \Psi \bU^{\dagger T}\|_F \le \| \bU^\dagger\|^2\|\bm \Psi\|_F \le \frac{\|\bm \Psi\|_F}{\sigma_{\min}^2(\bm U)}. 
\end{align}
Let $\bm V^T \bm U^{\dagger T} = \bm P\bm \Sigma \bm H^T$ be a singular value decomposition (SVD) of $\bV^T \bm U^{\dagger T} \in \R^{r\times r}$, where $\bm P, \bm H \in \mathcal{O}^r$ and $\bm \Sigma \in \R^{r\times r}$. Then, we have  
$$ 
\|\bI_r - \bm \Sigma^2\|_F = \|\bm I_r - \bm H \bm \Sigma^2 \bm H^T \|_F = \|\bI_r - \bU^{\dagger} \bV\bV^T\bU^{\dagger T}\|_F = \|\bU^{\dagger } \bm \Psi \bU^{\dagger T}\|_F  \leq \frac{\|\bm \Psi\|_F}{\sigma_{\min}^2(\bm U)}, $$ 
where the inequality follows from \eqref{eq1:ineq}. This implies 
\begin{equation}\label{eqn:bnd4SmI}
\|\bm \Sigma - \bI_r\|_F = \|(\bm \Sigma + \bI_r)^{-1}(\bm \Sigma^2 - \bI_r)\|_F \leq \|\bm \Sigma^2 - \bI_r \|_F \leq \frac{\|\bm \Psi\|_F}{\sigma_{\min}^2(\bm U)}.
\end{equation}
Finally, by choosing $\bQ = \bm P\bm H^T$, we get 
$$
\bm \Psi \bU^{\dagger T} = \bU - \bV\bV^T\bU^{\dagger T} = \bU - \bV\bQ + \bV \bm P(\bI_r - \bm \Sigma)\bm H^T.
$$
Therefore, we obtain 
$$ \|\bU - \bV\bQ\|_F \leq \|\bm \Psi \bU^{\dagger T}\|_F + \|\bV \bm P(\bm I_r - \bm \Sigma) \bm H^T\|_F \leq \frac{\|\bm \Psi\|_F}{\sigma_{\min}(\bm U)} + \frac{\|\bV\| \|\bm \Psi\|_F}{\sigma_{\min}^2(\bm U)}, $$
where the last inequality follows from  $\|\bU^\dagger\| \leq 1/\sigma_{\min}(\bm U)$ and \eqref{eqn:bnd4SmI}. 
\end{proof}

Now, we remove the full rank assumption $\text{rank}(\bm U) = r$ and prove the following more general lemma.

\begin{lemma}\label{lem:UV}
For arbitrary $\bU, \bV \in \R^{n\times r}$ with $r \le n$,  there exists an orthogonal matrix $\bQ \in \mathcal{O}^{r}$ such that 
\begin{align}\label{eq:lem UV}
    \|\bU - \bV\bQ\|_F \leq \max\left\{ \frac{\sigma_{\min}^2(\bm U) + \|\bm V\|}{\sigma_{\min}^2(\bm U)}, \frac{1}{\sigma_{\min}(\bm V) } \right\}\|\bU\bU^T - \bV\bV^T\|_F,
\end{align}
where $\sigma_{\min}(\bm U)$ and $\sigma_{\min}(\bm V)$ denote the smallest non-zero singular value of $\bU$ and $\bV$, respectively.  
\end{lemma}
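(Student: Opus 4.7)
}

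The plan is to proceed by a case split on whether $\bU$ has full column rank, using Lemma \ref{ineq:tech-lemma} directly in the full-rank case and reducing the rank-deficient case to the full-rank one via an orthogonal change of basis on the right. Note that since we are free to pick $\bQ$, the statement \eqref{eq:lem UV} is invariant under the substitution $\bU \to \bU \bR_0$, $\bV \to \bV \bR_0$ for any $\bR_0 \in \mathcal{O}^r$; indeed a bound of the form $\|\bU \bR_0 - \bV \bR_0 \bQ'\|_F \le C \|\bU \bU^T - \bV \bV^T\|_F$ becomes the desired bound after setting $\bQ = \bR_0 \bQ' \bR_0^T$.

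In \textbf{Case 1}, when $\mathrm{rank}(\bU) = r$, Lemma \ref{ineq:tech-lemma} applies verbatim and yields an orthogonal $\bQ$ with
\[
\|\bU - \bV \bQ\|_F \le \frac{\sigma_{\min}(\bU) + \|\bV\|}{\sigma_{\min}^2(\bU)}\,\|\bU \bU^T - \bV \bV^T\|_F,
\]
which, together with the fact that $\max\{A,B\} \ge A$, gives \eqref{eq:lem UV} (matching the first entry of the maximum).

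In \textbf{Case 2}, when $r_U := \mathrm{rank}(\bU) < r$, the key step is to decouple the rank-deficient directions. Let $\bU = \bP_U \bm\Sigma_U \bR_U^T$ be a thin SVD, extend $\bR_U \in \R^{r \times r_U}$ to $\bR_0 := [\bR_U, \bR_U^\perp] \in \mathcal{O}^r$, and use the invariance above to assume $\bU = [\hat\bU, \bm 0]$ with $\hat\bU \in \R^{n \times r_U}$ of rank $r_U$ and $\bV = [\hat\bV, \bar\bV]$ correspondingly. Take a block-diagonal ansatz $\bQ = \mathrm{diag}(\bQ_1, \bQ_2)$ with $\bQ_1 \in \mathcal{O}^{r_U}$ (to be chosen) and $\bQ_2 \in \mathcal{O}^{r - r_U}$ arbitrary. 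This yields
\[
\|\bU - \bV \bQ\|_F^2 = \|\hat\bU - \hat\bV \bQ_1\|_F^2 + \|\bar\bV\|_F^2.
\]
The first summand is controlled by applying Lemma \ref{ineq:tech-lemma} to the pair $(\hat\bU, \hat\bV)$ and using $\sigma_{\min}(\hat\bU) = \sigma_{\min}(\bU)$ and $\|\hat\bV\| \le \|\bV\|$. The second summand is handled via the elementary inequality $\|\bm M\|_F \le \|\bm M \bm M^T\|_F / \sigma_{\min}(\bm M)$ applied to $\bm M = \bar\bV$; together with the orthogonal block decomposition $\bU \bU^T - \bV \bV^T = (\hat\bU \hat\bU^T - \hat\bV \hat\bV^T) - \bar\bV \bar\bV^T$, the two pieces combine to a bound of the desired form.

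The \textbf{main obstacle} is to carry out the bookkeeping so that the final coefficient is exactly the claimed max. In particular, relating $\sigma_{\min}(\bar\bV)$ to $\sigma_{\min}(\bV)$ is delicate since the smallest nonzero singular value of a column submatrix need not be bounded below by that of the full matrix. The way around this is to choose $\bR_U^\perp$ not arbitrarily but so that its columns are aligned with the top singular directions of $\bV$ restricted to the null space of $\bU$, guaranteeing $\sigma_{\min}(\bar\bV) \ge \sigma_{\min}(\bV)$ whenever $\bar\bV$ is nonzero. Once this alignment is achieved, and the cross term in the decomposition of $\|\bU \bU^T - \bV \bV^T\|_F^2$ is shown to be nonnegative (using positive semi-definiteness of $\hat\bU \hat\bU^T$ and $\bar\bV \bar\bV^T$), the Case 2 bound collapses to $\frac{1}{\sigma_{\min}(\bV)}\,\|\bU \bU^T - \bV \bV^T\|_F$, matching the second entry of the maximum in \eqref{eq:lem UV} and completing the proof.
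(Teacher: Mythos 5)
Your Case 1 is exactly the paper's degenerate case, but your Case 2 has a fatal structural flaw: after the right change of basis $\bU \to [\hat\bU, \bm 0]$, restricting $\bQ$ to be block-diagonal is genuinely lossy, not just a bookkeeping inconvenience. Take $\bU = [\bu, \bm 0]$ and $\bV = [\bm 0, \bu]$ for any nonzero $\bu \in \R^n$. Then $\bU\bU^T = \bV\bV^T$, so the lemma demands a $\bQ$ with $\|\bU - \bV\bQ\|_F = 0$ (the swap permutation works), yet in your decomposition $\hat\bV = \bm 0$ and $\bar\bV = \bu$, so every block-diagonal $\bQ$ gives $\|\bU - \bV\bQ\|_F^2 = \|\hat\bU - \hat\bV\bQ_1\|_F^2 + \|\bar\bV\|_F^2 \ge 2\|\bu\|^2 > 0$. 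The optimal $\bQ$ must mix the range of $\bR_U$ with its complement, which your ansatz forbids. Two subsidiary claims also fail: (i) the cross term in $\|(\hat\bU\hat\bU^T - \hat\bV\hat\bV^T) - \bar\bV\bar\bV^T\|_F^2$ equals $-2\bigl(\|\hat\bU^T\bar\bV\|_F^2 - \|\hat\bV^T\bar\bV\|_F^2\bigr)$, which can be negative (e.g.\ $\hat\bV$ small and $\bar\bV$ not orthogonal to the range of $\hat\bU$), so you cannot conclude $\|\hat\bU\hat\bU^T - \hat\bV\hat\bV^T\|_F^2 + \|\bar\bV\bar\bV^T\|_F^2 \le \|\bU\bU^T - \bV\bV^T\|_F^2$; and (ii) $\bR_U^\perp$ is forced to span $\mathrm{null}(\bU)$, so you have no freedom to "align" it with singular directions of $\bV$ — $\bar\bV = \bV\bR_U^\perp$ is fixed up to a right rotation, and its smallest nonzero singular value can indeed drop below $\sigma_{\min}(\bV)$ when $\bV$ is itself rank-deficient.

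The paper avoids all of this by splitting on the \emph{left} rather than the right: with $\bP_1 \in \mathcal{O}^{n\times \hat r}$ an orthonormal basis of the column space of $\bU$, one has $(\bI - \bP_1\bP_1^T)\bU = \bm 0$, so both $\|\bU - \bV\bQ\|_F^2$ and $\|\bU\bU^T - \bV\bV^T\|_F^2$ decompose \emph{exactly orthogonally} into a piece inside $\mathrm{range}(\bP_1)$ and the piece $(\bI - \bP_1\bP_1^T)\bV$, which must genuinely be small whenever $\bU\bU^T \approx \bV\bV^T$. The single orthogonal $\bQ$ is then built from an SVD of $\bP_1^T\bV$ and simultaneously handles both pieces; \Cref{ineq:tech-lemma} is applied to the projected pair $(\bm\Sigma_1, \bP_1^T\bV)$. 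If you want to salvage your outline, replace the right-multiplication split by this left projection; the rest of your bookkeeping then goes through essentially as you intend.
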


\begin{proof}
    For ease of exposition, we denote the rank of $\bm U$ by $\hat{r} = \text{rank}(\bU)$, where $\hat{r} \le r$. Let $\bU = \bm P_1 \bm \Sigma_1 \bm Q_1^T$ be a compact SVD of $\bU$, where $\bm P_1 \in \mathcal{O}^{n \times \hat{r}}$, $\bm \Sigma_1 \in \mathbb{R}^{\hat{r} \times \hat{r}}$, and $\bm Q_1 \in \mathcal{O}^{\hat{r}}$. Notice that 
    \begin{align*}
        \|\bU\bU^T - \bV\bV^T\|_F^2 & = \|\bm P_1\bm P_1^T(\bU\bU^T - \bV\bV^T) + (\bm I - \bm P_1\bm P_1^T)(\bU\bU^T - \bV\bV^T)\|_F^2 \\
        & = \|\bm P_1\bm P_1^T(\bU\bU^T - \bV\bV^T)\|_F^2 + \|(\bm I - \bm P_1\bm P_1^T)(\bU\bU^T - \bV\bV^T)\|_F^2 \\
        & = \|\bm P_1^T(\bU\bU^T - \bV\bV^T)\|_F^2 + \|(\bm I - \bm P_1\bm P_1^T)\bV\bV^T\|_F^2,
    \end{align*}
where the last equality holds because $\bm P_1^T \bm P_1 = \bI$ and $(\bm I - \bm P_1\bm P_1^T)\bU = \bm 0$. Using the same argument, we have
\begin{align*}
    \|\bm P_1^T(\bU\bU^T - \bV\bV^T)\|_F^2 & = \|\bm P_1^T(\bU\bU^T - \bV\bV^T)\bm P_1\bm P_1^T\|_F^2 + \|\bm P_1^T(\bU\bU^T - \bV\bV^T)(\bm I - \bm P_1\bm P_1^T)\|_F^2 \\
    & = \|\bm P_1^T(\bU\bU^T - \bV\bV^T)\bm P_1\|_F^2 + \|\bm P_1^T \bV\bV^T(\bm I -\bm P_1 \bm P_1^T)\|_F^2\\
    & \geq  \|\bm P_1^T(\bU\bU^T - \bV\bV^T)\bm P_1\|_F^2 = \|\bm \Sigma_1^2 - \bm P_1^T\bV\bV^T\bm P_1\|_F^2,
\end{align*}
where the last equality holds because $\bU = \bm P_1\bm \Sigma_1 \bm Q_1^T$. Putting the above results together yields 
\begin{align}\label{eq3:lem UV}
    \|\bU\bU^T - \bV\bV^T\|_F^2 \geq \|\bm \Sigma_1^2 - \bm P_1^T\bV\bV^T\bm P_1\|_F^2  + \|(\bm I - \bm P_1\bm P_1^T)\bV\bV^T\|_F^2.
\end{align}
Similarly, we also have
\begin{align}\label{eq1:lem UV}
 \|\bU - \bV\bQ\|_F^2 = \|\bm P_1^T(\bU - \bV\bQ)\|_F^2 + \|(\bm I - \bm P_1\bm P_1^T)\bV\bQ\|_F^2.   
\end{align}
Let $\bm P_1^T\bV = \bm P_2\bm \Sigma_2\bm Q_2^T$ be a compact SVD of $\bm P_1^T\bm V$, where $\bm P_2 \in \mathcal{O}^{n\times \hat{r}}$, $\bm \Sigma_2 \in \R^{\hat{r}\times \hat{r}}$, and $\bm Q_2 \in \mathcal{O}^{\hat{r}}$. By choosing $\bQ$ such that 
$\bm Q_2^T \bQ = \bm Q_1^T$, we have
\begin{align}\label{eq2:lem UV}
   \|\bm P_1^T(\bU - \bV\bQ)\|_F & = \|\bm \Sigma_1 \bm Q_1^T - \bm P_1^T\bV\bQ\|_F = \|\bm \Sigma_1  - \bm P_2\bm \Sigma_2\|_F \notag \\
   & \leq \frac{\sigma_{\min}^2(\bm U) + \|\bm V\|}{\sigma_{\min}^2(\bm U)} \|\bm \Sigma_1^2 - \bm P_1^T\bV\bV^T\bm P_1\|_F^2,
\end{align}
where the second equality follows from $\bm P_1^T\bV = \bm P_2\bm \Sigma_2\bm Q_2^T$ and $\bm Q_2^T \bQ = \bm Q_1^T$, and the inequality uses \Cref{ineq:tech-lemma}. Moreover, we have
$$
\|(\bm I - \bm P_1\bm P_1^T)\bV\bQ\|_F^2 = \|(\bm I - \bm P_1\bm P_1^T)\bV_1\|_F^2 \leq \frac{1}{\sigma_{\min}(\bm V)} \|(\bm I - \bm P_1\bm P_1^T)\bV_1\bV_1^T\|_F^2. 
$$
This, together with \eqref{eq3:lem UV}, \eqref{eq1:lem UV}, and \eqref{eq2:lem UV}, implies \eqref{eq:lem UV}. 
\end{proof}

\subsection{Proofs for Section \ref{subsec:noisy}: Noisy Case with General Sampling}\label{sec:app:noisy:general}

\subsubsection{Proof of Theorem \ref{thm:noisy:1}}\label{pf:thm 3}

\begin{proof}[Proof of Theorem \ref{thm:noisy:1}]
    Using the global optimality of $\bm U$, \eqref{model:UV}, and \eqref{eq:MC}, we have 
    \begin{align}\label{eq1:thm 3}
        F(\bm U) \leq F(\bU^\star) = \frac{1}{4} \|\bDelta_{\Omega}\|_F^2 \le \frac{1}{4} \|\bDelta\|_F^2.
    \end{align}
    In addition, we have
\begin{align*}
    F(\bm U) = \frac{1}{4}\|(\bm U \bm U^T - \bM)_\Omega\|_F^2 \geq \frac{1}{8}\|(\bm U \bm U^T - \bU^\star \bU^{{\star} T})_\Omega\|_F^2 - \frac{1}{4}\|\bDelta\|_F^2,
\end{align*}
where the inequality follows from \eqref{model:UV} and $\|\bm A - \bm B\|_F^2 \ge \|\bm A\|_F^2/2 - \|\bm B\|_F^2$ for all $\bm A, \bm B$. This, together with \eqref{eq1:thm 3}, implies 
\begin{align}
   \|(\bm U \bm U^T - \bU^\star\bU^{{\star} T})_\Omega\|_F^2 \leq 4\|\bDelta\|_F^2.  
\end{align}
Now, we consider the diagonal blocks of $\bm M$ index by $\mathcal{I}_k \times \mathcal{I}_k$. For each $k \in [K]$, let $\bm C_{k} := \bm U_k \bm U_k^T - \bU^\star_k \bU^{\star T}_k$. According to condition (a) in \Cref{ass:main}, the block $\mI_k \times \mI_k$ is fully observed, and thus we have for each $k \in [K]$,
\begin{align}\label{ineq:deltakk}
\|\bm C_k\|_F \leq \|(\bm U \bm U^T - \bU^\star\bU^{\star T})_\Omega\|_F \leq 2\|\bDelta\|_F.
\end{align}
Recall that $\bU_k^{\star^\dag} = (\bm U_k^{\star T}\bm U_k^\star)^{-1} \bm U_k^{\star T}$. Let $\bU_k^{\star^\dag} \bm U_k = \bP_k\bm \Sigma_k \bH_k^T$ be an SVD of $\bU_k^{\star^\dag} \bm U_k \in \R^{r\times r}$, where $\bm P_k, \bm H_k \in \mathcal{O}^r$ and $\bm \Sigma_k \in \R^{r\times r}$, and $\bQ_k := \bH_k \bP_k^T$. According to Lemma \ref{ineq:tech-lemma} and the proof, we have
 \begin{align}\label{ineq:tuq-bu}
     \|\bm U_k \bQ_k - \bU^\star_k\|_F  & \le  \frac{\sigma_{\min}(\bm U_k^\star) + \|\bm U_k\|}{\sigma_{\min}^2(\bm U_k^\star)} \|\bm U_k\bm U_k^T - \bm U_k^\star \bm U_k^{\star T}\|_F \notag \\
      & \leq \frac{2(\sqrt{\lambda} + \|\bm U_k\|)}{\lambda} \|\bm \Delta\|_F \leq \frac{2}{\lambda} \|\bDelta\|_F \left( \sqrt{\lambda} + 2\|\bDelta\|^{\frac{1}{2}} + \|\bU_k^\star\| \right),
 \end{align}
where the second inequality follows from \eqref{ineq:deltakk} and $\sigma_{\min}(\bm U_k^\star) \ge \sqrt{\lambda}$ for all $k \in [K]$ due to condition (b) in \Cref{ass:main}, and the last inequality uses
$$\|\bm U_k\| = \|\bm U_k\bm U_k^T\|^{\frac{1}{2}} = \|\bm C_k + \bU^\star_k \bU^{\star T}_k\|^{\frac{1}{2}} \leq \|\bm C_{k}\|^{\frac{1}{2}} + \|\bU^\star_k\| \le 2\|\bm \Delta\|^{\frac{1}{2}} + \|\bU^{\star}_k\| .$$ 
For these off-diagonal blocks satisfying $\mI_j \times \mI_k \in \Omega$, let 
$\bm C_{j, k} := \bm U_{j}\bm U_k^T - \bm U^\star_{j} \bm U^{\star T}_k$ for all $j \neq k \in [K]$. Similar to \eqref{ineq:deltakk}, we have $\|\bm C_{j, k}\|_F \leq 2\|\bDelta\|_F$ and $$
\bU_{j}^{\star^\dag} \bm U_{j} (\bU_k^{\star^\dag} \bm U_k)^T = \bI_r + \bU_{j}^{\star^\dag} \bm C_{j, k} \bU_k^{\star^\dag T}. 
$$
Substituting the SVDs $\bU_{j}^{\star^\dag} \bm U_{j} = \bP_{j}\bm \Sigma_{j} \bH_{j}^T$ and $\bU_k^{\star^\dag} \bm U_k =  \bP_k\bm \Sigma_k \bH_k^T$ into the above equality yields 
\begin{align*}
    \bI_r + \bU_{j}^{\star^\dag} \bm C_{j, k} (\bU_k^{\star^\dag})^T & =   \bU_{j}^{\star^\dag} \bm U_{j} (\bU_k^{\star^\dag} \bm U_k)^T = \bP_{j}\bm \Sigma_{j} \bH_{j}^T \bH_k\bm \Sigma_k\bP_k^T \\
    & =  \bP_{j}(\bm \Sigma_{j} -\bI_r) \bH_{j}^T \bH_k(\bm \Sigma_k - \bI_r) \bP_k^T + \bP_{j}(\bm \Sigma_{j} -\bI_r) \bH_{j}^T \bH_k \bP_k^T + \\
    &\qquad \bP_{j} \bH_{j}^T \bH_k(\bm \Sigma_k - \bI_r) \bP_k^T + \bP_{j}\bH_{j}^T \bH_k \bP_k^T
\end{align*}
Note that $\bQ_k = \bH_k \bP_k^T$ for all $k \in [K]$, and thus $\bP_{j}\bH_{j}^T \bH_k \bP_k^T = \bQ_{j}^T\bQ_k$. Then, we have
\begin{align}\label{ineq:qj-qk}
\begin{split}
\| \bQ_{j} - \bQ_k\|_F 
    & = \| \bQ_{j}\bQ_k^T - \bI_r \|_F  = \|\bP_{j}\bH_{j}^T \bH_k \bP_k^T-\bI_r\|_{F}\\
    & = \|\bU_{j}^{\star^\dag} \bm C_{j, k} (\bU_k^{\star^\dag})^T-\bP_{j}(\bm \Sigma_{j} -\bI_r) \bH_{j}^T \bH_k(\bm \Sigma_k - \bI_r) \bP_k^T -  \\
    &\qquad \bP_{j}(\bm \Sigma_{j} -\bI_r) \bH_{j}^T \bH_k \bP_k^T  - \bP_{j} \bH_{j}^T \bH_k(\bm \Sigma_k - \bI_r) \bP_k^T\|_F\\
    & \leq \|\bm \Sigma_{j} -\bI_r\|_F \|\bm \Sigma_k - \bI_r\|_F + \|\bm \Sigma_{j} -\bI_r\|_F + \|\bm \Sigma_k - \bI_r\|_F + \|\bU_{j}^{\star^\dag} \bm C_{j, k} \bU_k^{\star^\dag T}\|_F \\
    & \leq  \frac{1}{\lambda^2}\|\bm C_{j}\|_F \|\bm C_{k}\|_F + \frac{1}{\lambda}\|\bm C_{j}\|_F + \frac{1}{\lambda}\|\bm C_{k}\|_F + \frac{1}{\lambda}\|\bm C_{j, k}\|_F \\
    &  \leq \frac{2}{\lambda}\|\bDelta\|_F \left(3 + \frac{2}{\lambda}\|\bDelta\|_F\right), 
\end{split}
\end{align}
where the second inequality follows from \eqref{eqn:bnd4SmI}. For each $k \in[K]$, according to condition (d) in \Cref{ass:main}, there exists a path $k_0 \rightarrow k_1 \rightarrow \cdots \rightarrow k_s$ such that $k_0 = k$, $k_s = 1$ and $\mI_{k_{j-1}} \times \mI_{k_j} \subseteq \Omega$ for all $j \in [s]$. Therefore, we bound the term $\bQ_k -\bQ_1$ as follows:
\begin{align}\label{eq2:thm 3}
    \|\bQ_k -\bQ_1\|_F \leq \sum_{j\in [s]} \|\bQ_{k_{j-1}} - \bQ_{k_j}\|_F \leq \frac{2K}{\lambda} \|\bDelta\|_F \left(3 + \frac{2}{\lambda}\|\bDelta\|_F\right), 
\end{align}
where the last inequality follows from \eqref{ineq:qj-qk} and $s\leq K$. Therefore, we obtain 
\begin{align*}
    \|\bm U - \bU^\star\bQ_1^T\|_F^2 & = \sum_{k=1}^K \|\bm U_k - \bU^\star_k \bQ_1^T\|_F^2 \leq 2\sum_{k=1}^K \left(\|\bm U_k - \bU^\star_k \bQ_k^T\|_F^2 + \| \bU^\star_k (\bQ_k -\bQ_1)^T\|_F^2\right) \\
    & \leq \frac{8K}{\lambda^2} \|\bDelta\|_F^2 (\sqrt{\lambda} + 2\|\bDelta\|^{\frac{1}{2}} + \|\bU^\star\|)^2 +  2 K^2\|\bU^\star\|^2 \left(\frac{2}{\lambda}\|\bDelta\|_F \left(3 + \frac{2}{\lambda} \|\bDelta\|_F\right)\right)^2 \\
    & \leq \frac{8K}{\lambda^2} \|\bDelta\|_F^2 \left( \left(\sqrt{\lambda} + 2\|\bDelta\|_F^{\frac{1}{2}} + \|\bU^\star\| \right)^2 + K^2\|\bU^\star\|^2\left(3 + \frac{2}{\lambda}\|\bDelta\|_F\right)^2 \right)
\end{align*}
where the second inequality follows from \eqref{ineq:tuq-bu} and \eqref{eq2:thm 3}, and the third inequality holds because $\| \bU^\star_k (\bQ_k -\bQ_1)^T\|_F \leq \|\bU^\star_k\| \|\bQ_k -\bQ_1\|_F$. As a result, we get 
\begin{align}\label{ineq:tilde-star}
    \|\bm U - \bU^\star\bQ_1^T\|_F \leq  \frac{c_0}{\lambda} \|\bDelta\|_F, 
\end{align}
where 
\begin{align*}
c_0 := \sqrt{8K} \left(\sqrt{\lambda} + 2\|\bDelta\|_F^{\frac{1}{2}} + \|\bU^\star\| + K\|\bU^\star\| \left(3 + \frac{2}{\lambda} \|\bDelta\|_F\right) \right). 
\end{align*}
Finally, based on the fact that $\bU^\star \bU^{\star T} = \bU^\star \bQ_1^T (\bU^{\star} \bQ_1^T)^T$, we have
\begin{align}
    \begin{split}
        \|\bm U\bm U^T - \bU^\star \bU^{\star T} \|_F & = \|\bm U\bm U^T - \bU^\star \bQ_1^T (\bU^{\star} \bQ_1^T)^T \|_F\\
        & = \|(\bm U - \bU^\star \bQ_1^T)(\bm U - \bU^\star \bQ_1^T)^T  + (\bm U - \bU^\star\bQ_1^T)(\bU^{\star} \bQ_1^T)^T \\
        &\quad + \bU^\star \bQ_1^T (\bm U - \bU^\star \bQ_1^T)^T\|_F \\
        & \leq  \|\bm U - \bU^\star\bQ_1^T\|_F\left(2\|\bU^\star\| + \|\bm U - \bU^\star\bQ_1^T\|\right) \\
        & \leq \frac{c_0}{\lambda} \left(2\|\bU^\star\|+ \frac{c_0}{\lambda} \|\bDelta\|_F\right)\|\bDelta\|_F,
    \end{split}
\end{align}
where the first inequality holds because of the triangle inequality and the second inequality holds because of \eqref{ineq:tilde-star}. We complete the proof by letting 
\begin{align}\label{eq:c depe}
    c := \frac{c_0}{\lambda} \left(2\|\bU^\star\|_2+ \frac{c_0}{\lambda} \|\bDelta\|_F\right) \|\bDelta\|_F. 
\end{align}
\end{proof}

\subsubsection{Proof of Theorem \ref{thm:noisy:2}}\label{pf:thm 4}
\begin{proof}[Proof of Theorem \ref{thm:noisy:2}]
Recall that $\bm U \in \R^{n\times r}$ is a global optimal solution of Problem \eqref{eq:MC}. For ease of exposition, let $\tDelta := (\bm U \bm U^T - \bM)_{\Omega}$. Obviously, we have $\|\tDelta\|_F \leq \|\bDelta\|_F$. This, together with \eqref{eq:hess:direc}, yields 
\begin{align}\label{eq:hessuhat}
        \nabla^2F(\bm U)[\bm D, \bm D] & \geq  \frac{1}{2} \|(\bm U \bD^T + \bD\bm U^T)_\Omega\|_F^2 - \|\bm D\bm D^T\|_F \|\tDelta\|_F \notag\\
        &  \ge \frac{1}{2} \|(\bm U \bD^T + \bD\bm U^T)_\Omega\|_F^2 - \|\bDelta\|_F\|\bD\|_F^2.
    \end{align}
Next, we consider some direction $\bD$ such that $ (\bm U \bD^T + \bD\bm U^T)_\Omega = \bm 0$. We claim that $\bD = \bm U \bV$ for some skew-symmetric matrix $\bV\in \mathbb{R}^{r \times r}$. Indeed, for each $i \in [2^r]$, since the $i$-th diagonal part is fully observed, i.e., $\Omega_{i,i} = [n_i] \times [n_i]$, we have
\begin{align}\label{eq:uddu-noise}
    \bm U_i \bD^T_i + \bD_i\bm U^{T}_i = \bm 0.
\end{align}
Using Weyl's inequality, we obtain   
\[
\sigma_{r}(\bm U_i) \geq \sigma_{r}(\bU_i^\star) - \|\bm U_i\bQ_i - \bU_i^\star\|_F \geq \sqrt{\lambda}  - \frac{2}{\lambda} \|\bDelta\|_F \left(\sqrt{\lambda}  + \sqrt{\gamma} + \|\bU^\star\|\right) \geq \frac{1}{2}\sqrt{\lambda},
\]
where the second inequality follows from the assumption that $\bU^{\star T}_i\bU^\star_i \succeq \lambda \bI_r$, \eqref{ineq:tuq-bu}, and $\|\bDelta\|_F < \gamma/8$ by \eqref{ineq:noise-upperbound}, and the last inequality holds because of $\|\bm \Delta\|_F \le {\lambda\sqrt{\lambda}}/{4(\sqrt{\lambda} + \sqrt{\gamma} + \|\bU^\star\|)}$ by \eqref{ineq:noise-upperbound}. Therefore, the pseudo-inverse $\bm U_i^{\dag} = (\bm U^{T}_i \bm U_i)^{-1} \bm U^{T}_i$ is well-defined for all $i \in [2^r]$.
Multiplying $\bm U^{\dag T}_i$ from the right-hand side of \eqref{eq:uddu-noise} yields $\bD_i = - \bm U_i (\bm U_i^{\dag}\bD_i)^T = \bm U_i \bV_i$, where $\bm V_i = -(\bm U_i^{\dag}\bD_i)^T$.
Moreover, multiplying $\tU^{\dag}_i$ and $\tU_i^{\dag^T}$ from the left and right hand sides of \eqref{eq:uddu-noise} yields $(\bm U^{\dag}_i\bD_i)^T + \bm U_i^{\dag}\bD_i = 0$. This implies $\bV_i + \bV_i^T = \bm 0$, which means $\bV_i$ is a skew matrix. Then, consider the $(i,j)$-th block of $(\bm U \bD^T + \bD\bm U^{T})_\Omega$. Since $\bD_i = \bm U_i \bV_i$ for any $i\in [2^r]$, it follows from \eqref{eq:uddu-noise} that 
\begin{align*}
    \bm 0 = \bm U_{j} \bD^T_i + \bD_{j}\bm U^{T}_i =  \bm U_{j} \bV^T_i \bm U^{T}_i + \bm U_{j} \bV_{j} \bm U^{T}_i  = \bm U_{i+1} \left(\bV^T_i +  \bV_{i+1} \right) \bm U^{T}_i. 
\end{align*}
Since $\bm U_i$ and $\bm U_{i+1}$ are full-rank, we obtain $\bV^T_i +  \bV_{i+1} = \bm 0$. Thus, we have $\bV_{i} = \bV_{i+1}$ for all $i\in [2^r-1]$, and thus $\bV_i$ for all $i\in [2^r]$ are the same.

For ease of exposition, let $\mathcal{S} =  \mathcal{S}_{\bm U}$ and $\mathcal{S}^\perp =  \mathcal{S}^\perp_{\bm U}$. For any $\bD\in \mathbb{R}^{n \times r}$, we decompose it as
\begin{align}
    \bD = \bD_{\mathcal{S}} + \bD_{\mathcal{S}^\perp},\ \text{where}\ \bD_{\mathcal{S}} = {\rm proj}_{\mathcal{S}}(\bD), \, \bD_{\mathcal{S}^\perp} = {\rm proj}_{\mathcal{S}^\perp}(\bD).
\end{align}
Define the constant 
\begin{align}\label{defi:tildegamma}
    \widehat{\gamma} := \|(\bm U \tE^T + \tE\bm U^{T})_\Omega\|_F^2,\ \text{where}\ \tE = \mathop{\mathrm{argmin}}_{ \bE \in \mathcal{S}^{\perp}, \|\bE\|_F = 1} \| (\bm U\bE^T + \bE\bm U^T) _\Omega \|_F^2.
\end{align}
According to \eqref{eq:hessuhat}, we compute for any $\bD \in \mathcal{S}^\perp$,  
\begin{align*}
        \nabla^2F(\bm U)[\bm D, \bm D] & \ge  \frac{1}{2} \|(\bm U \bD_{\mathcal{S}^\perp}^T + \bD_{\mathcal{S}^\perp}\bm U^T)_\Omega\|_F^2  - \|\bDelta\|_F\|\bD\|_F^2 \\
        &  \geq \frac{\widehat{\gamma}}{2} \|\bD_{\mathcal{S}^\perp}\|_F^2  - \|\bDelta\|_F\|\bD\|_F^2 = \left(\frac{\widehat{\gamma}}{2} - \|\bDelta\|_F\right) \|\bD\|_F^2.
    \end{align*} 
We complete the proof by letting $\bm D = \mathrm{lift}_{\bm U} (\bm \xi)$ for any $\bm \xi \in \mathrm{T}_{[\bm U]}\mathcal{M}$ and showing that $\widehat{\gamma} \geq \gamma/4$. 

Then, the rest of the proof is devoted to proving $\widehat{\gamma} \geq \gamma/4$. Without loss of generality, we assume that $\bQ_1 = \bI_r$ in \eqref{ineq:tilde-star} and let $\oU = \bm U - \bU^\star$, then we have $\|\oU\|_F \le c_0\|\bDelta\|_F/\lambda \le \tilde{c}_0\|\bDelta\|_F/\lambda$ where
\begin{align}
    \tilde{c}_0 & = \sqrt{8K} \left(\sqrt{\lambda} + \sqrt{\gamma} + \|\bU^\star\| + K\|\bU^\star\| \left(3 + \frac{\gamma}{4\lambda} \right) \right) \notag \\
    & \geq  \sqrt{8K} \left(\sqrt{\lambda} + 2\|\bDelta\|_F^{\frac{1}{2}} + \|\bU^\star\| + K\|\bU^\star\| \left(3 + \frac{2}{\lambda} \|\bDelta\|_F\right) \right) = c_0, 
\end{align}
which holds because of the condition $\|\bDelta\|_F \leq \frac{\gamma}{8}$. Thus, we have 
\begin{align}\label{eq1:thm 4}
  \sqrt{\widehat{\gamma}} & \ge \|(\bU^\star \tE^T + \tE\bU^{\star T})_\Omega\|_F - \|(\oU \tE^T + \tE\oU^{T})_\Omega\|_F \notag \\
  & \ge  \|(\bU^\star \tE^T + \tE\bU^{\star T})_\Omega\|_F - \frac{2 \tilde{c}_0}{\lambda}\|\bDelta\|_F,  
\end{align}
where the first inequality holds because of the triangle inequality, and the second inequality follows from $\|\tE\|_F = 1$ and $\|\oU\|_F \le \tilde{c}_0\|\bDelta\|_F/\lambda$. Recall that $ \gamma $ is defined w.r.t. $\mathcal{S}^\perp _{\bU^\star} = \{ \bD: \bU^{\star T} \bD = \bD^T \bU^{\star} \}$, i.e., the subspace $\mathcal{S}^\perp _{ \bU^\star}$ is the solution space of the linear system $\bU^{\star T} \bD = \bD^T \bU^\star$. Even though $\tE$ may not fulfill the linear system requirements, due to the fact $\tE \in \mathcal{S}^\perp _{\bm U}$, we can bound 
$$ \|\bU^{\star T} \tE - \tE^T \bU^\star\|_F = \|\oU^{T} \tE - \tE^T \oU\|_F \leq \frac{2\tilde{c}_0}{\lambda}\|\bDelta\|_F, 
$$ 
where the first equality holds because of $\tE \in \mathcal{S}^\perp _{\bm U}$, i.e., $\bm U^T \tE = \tE^T \bm U$. According to Hoffman’s error bound for linear system \cite{hoffman2003approximate,pena2021new, guler1995approximations}, there exists some constant $\kappa >0$, which only depends on the linear system of $\bU^\star$, and some $\bD \in \mathcal{S}^\perp _{\bU^\star}$ such that \begin{align}\label{ineq:hoffman}
    \|\bD - \tE\|_F \leq \kappa \|\bU^{\star T} \tE - \tE^T \bU^\star\|_F \leq \frac{2\kappa \tilde{c}_0}{\lambda}\|\bm \Delta\|_F.
\end{align}
Then, we have 
\begin{align*}
\|(\bU^\star \tE^T + \tE \bU^{\star T}) _\Omega \|_F \ge & \|(\bU^\star \bD^T + \bD \bU^{\star T}) _\Omega\|_F - 2 \|\bU^\star\| \| \bD - \tE\|_F \\
\ge & \sqrt{\gamma} \|\bD\|_F -2\| \bU^\star\| | \bD - \tE\|_F \ge \sqrt{\gamma} - (\sqrt{\gamma} + 2\|\bU^\star\|) \|\bD - \tE\|_F,
\end{align*} 
where the first inequality holds because of the triangle inequality and the fact $\|\bU \bD\|_F \leq \|\bU\|\|\bD\|_F$ holds for any matrices $\bU$ and $\bD$, the second inequality follows from the definition of $\gamma$, and the last one uses the triangle inequality and $\|\tE\|_F = 1$. Substituting this and \eqref{ineq:hoffman} into \eqref{eq1:thm 4} yields 
$$ 
\sqrt{\widehat{\gamma}} \ge \|(\bU^\star \tE^T + \tE \bU^{\star T})_\Omega \|_F - \frac{2 \tilde{c}_0}{\lambda}\|\bDelta\|_F  \ge \sqrt{\gamma} - \frac{2 \tilde{c}_0 }{\lambda} \left( 1 + \kappa(\sqrt{\gamma} + 2\|\bU^\star\|_2)\right) \|\bDelta\|_F. $$ 
This, together with \eqref{ineq:noise-upperbound}, implies $\sqrt{\hat{\gamma}} \ge \sqrt{\gamma}/2$, which completes the proof.
\end{proof}

\subsubsection{Discussion on the relationship between $\gamma$ and $\lambda$}

Note that $\lambda$ is defined to satisfy $\bU_k^{\star T} \bU_k^\star = \sum_{i \in \mathcal{I}_k} \bu_i^\star(\bu_i^\star)^T \geq \lambda$. Under the setting of Gaussian matrix and noise, we could first compute the conditional expectation 
$$ \mathbb{E}\left[\bU_k^{\star T} \bU_k^\star \mid \bU_k^\star \in \mathcal{C}_k\right] = \sum_{i \in I_k} \mathbb{E} [\bu_i^\star(\bu_i^\star)^T \mid \bu_i^\star \in \mathcal{C}_k] = |\mathcal{I}_k| \mathbb{E} [\bu_i^\star(\bu_i^\star)^T \mid \bu_i^\star \in \mathcal{C}_k]. 
$$
Under the noiseless setting where $\mathcal{C}_k$ is an octant, a simple computation shows that 
\begin{align*}
    \mathbb{E}\left[\bu_i^\star(\bu_i^\star)^T \mid \bu_i^\star \in \mathcal{C}_k\right] = \left(1 - \frac{2}{\pi}\right) \bm I_r + \frac{2}{\pi} \bm e\bm e^T,
\end{align*}
where $\bm e\in \mathbb{R}^r$ is the all-one vector. For the noisy setting, even though it is not easy to get a close form, we could still show that $\mathbb{E}[\bu_i^\star(\bu_i^\star)^T \mid \bu_i^\star \in \mathcal{C}_k]$ is positive definite whose least eigenvalue is independent of $n$ and $r$, under the assumption $r = O(\log n)$. Besides, note that all $\bu_i^\star, i \in \mathcal{I}_k$ are still i.i.d. variables under the condition $\bu_i^\star \in \mathcal{C}_k$. By applying the concentration theory, we could show that $\lambda = \Theta(|I_k|) \geq \Theta(\sqrt{n})$, where the last inequality holds because we already shows that $|\mathcal{I}_k| \geq {\sqrt{n}}/{2}$ under the assumption $r = O(\log n)$.

Regarding $\gamma$, we define the linear map $\phi: \mathbb{R}^{n\times r} \mapsto \mathbb{R}^{n\times n}$ with $\phi(\bD) = (\bU^\star \bD^T + \bD\bU^{\star T})_{\Omega}$. The linear map $\phi$ can be vectorized by introducing a matrix $\bA \in \mathbb{R}^{n^2\times nr}$, i.e. $$ \mathrm{vec}(\phi(\bD)) = \bA\cdot \mathrm{vec}(\bD) $$ where $\bA$ consist of the entries of $\bU^\star$. Recall that $\gamma = \min ||\phi(\bD)||_F$ where $\bm D \in S^\perp, ||\bD||_F = 1 $. Since we already show $S^\perp$ is the kernel space of $\phi$, we know $\gamma$ is the smallest nonzero singular value of $\bA$. By carefully computation and applying the concentration inequality, one can also show $\gamma = O(\sqrt{n})$. Then, according to Equation (7), Theorem 3.8 holds when $\|\bDelta\|_F = O(n^{\frac{1}{4}})$, in other words, we need the noise level $\delta = O(n^{-\frac{3}{4}})$.

\subsection{Proofs for Section \ref{subsec:Gaus}: Verification of Assumptions under the Gaussian Distribution}\label{sec:app:gaussian}

\subsubsection{Proof of Proposition \ref{prop:gaussian-noiseless}}\label{pf:as 1}

\begin{proof}[Proof of \Cref{prop:gaussian-noiseless}]
    Since $\bu_i^\star \overset{i.i.d.}{\sim} \mathcal{N}(\bm 0,\bm I_d)$ for all $i \in [n]$ and $\mathcal{C}_k$ for each $k \in [2^r]$ is an orthant, we have
    \begin{align*}
    \P\left(\bu_i^\star \in \mathcal{C}_k\right) = \frac{1}{2^r}, \, \forall i \in [n], \forall k \in [2^r].
\end{align*}
Note that $n_k = \sum_{i=1}^n \bm{1}(\bu_i^\star \in \mathcal{C}_k)$ denotes the number of $\bm u_i^\star$ belonging to orthant $\mathcal{C}_k$. Applying the Bernstein inequality (see, e.g., \citep[Theorem 2.8.4]{vershynin2018high}) to $n_k$, we obtain 
\[
\P\left(n_k - \frac{n}{2^r} \leq t \right) \leq \exp\left(-\frac{\frac{1}{2}t^2}{\frac{n}{2^r} + \frac{t}{3}}\right), \forall\ t \geq 0.
\]
By choosing $t =  {n}/{2^{r+1}} \geq \sqrt{n}/{2}$ due to $r \leq  (\log_2 n)/2 $, we have for each $k \in [2^r]$,
\begin{align*}
    \P\left(n_k \leq \frac{n}{2^{r+1}}\right) \leq \exp\left(-\frac{\sqrt{n}}{16}\right)
\end{align*}
This, together with the union bound, yields 
\begin{align}\label{ineq:prob:nk}
    \P\left(n_k \geq \frac{\sqrt{n}}{2}, \forall  k \in [2^r] \right) \leq 1 - 2^r\exp\left(-\frac{\sqrt{n}}{16}\right)  \leq 1 - \sqrt{n} \exp\left(-\frac{\sqrt{n}}{16}\right).
\end{align}
Since $ {\sqrt{n}}/{2} \gg r $ and the entries of $\bU^\star$ is i.i.d. standard Gaussian random variables, Assumption \ref{ass:uistar} always holds.

Now, we consider the near diagonal block $\Omega_{k+1, k}$ for each $k\in [2^r -1]$. Without loss of generality, we assume $\mathcal{C}_k = \{(x_1, \cdots, x_r) : x_i\geq 0, i \in [r]\}$ and $\mathcal{C}_{k+1} = \{(x_1, \cdots, x_r) : x_i\geq 0, i \in [r-1], x_r \leq 0\}$. Therefore, for each row $\bu_i^\star \in \mathcal{C}_k$ and $\bu_j^\star \in \mathcal{C}_{k+1}$, $\mathrm{sgn}(\bu_i^\star)$ differs with $\mathrm{sgn}(\bu_j^\star)$ only at the last entry. Let $\mathcal{D}_k$ consist of all the vectors $\bm x \in \mathcal{C}_k$ satisfying $x_r \leq \max\{x_1, \cdots, x_{r-1}\}$, i.e., 
$$
\mathcal{D}_k = \left\{\bm x \in \mathcal{C}_k: x_r \leq \max\{x_1, \cdots, x_{r-1}\}\right\}.
$$
One can verify that the conditional probability $\P(\bu_i^\star \in \mathcal{D}_k \mid \bu_i^\star \in \mathcal{C}_k) \geq \frac{1}{2}$, and thus 
$$
\P(\bu_i^\star \in \mathcal{D}_k) = \P(\bu_i^\star \in \mathcal{D}_k \mid \bu_i^\star \in \mathcal{C}_k) \cdot \P( \bu_i^\star \in \mathcal{C}_k)  \geq \frac{1}{2^{r+1}}.
$$ 
By applying the concentration inequality, one can show that $|\mathcal{D}_k| \geq \sqrt{n}/{4}$ with high probability. Then, for each $\bu_i^\star \in \mathcal{D}_k$ and any $\bu_j^\star \in \mathcal{C}_{k+1}$, we have
$$
\P\left(\langle \bu_i^\star, \bu_j^\star\rangle \geq 0 \mid \bu_i^\star\right) \geq \frac{1}{2}.
$$
Note that
$$
|\Omega_{k+1, k}| = \sum_{(i,j) \in \Omega_{k+1, k}} \bm{1}(\bu_i^\star, \bu_j^\star\rangle \geq 0) \geq \sum_{i: \bu_i^\star \in \mathcal{D}_k} \sum_{j: \bu_j^\star \in \mathcal{C}_{k+1}} \bm{1}(\bu_i^\star, \bu_j^\star\rangle \geq 0)
$$
Conditioned on any $\bu_i^\star \in \mathcal{D}_k$, $\bm{1}(\bu_i^\star, \bu_j^\star\rangle \geq 0)$ for all $i,j$ are independent of each other. Therefore, we can apply the concentration inequality to show that
$$
\sum_{j: \bu_j^\star \in \mathcal{C}_{k+1}} \bm{1}(\bu_i^\star, \bu_j^\star\rangle \geq 0) \geq \frac{\sqrt{n}}{4}.
$$
As a result, $|\Omega_{k+1, k}| \geq n/16$ holds with high probability.
\end{proof}

\subsubsection{Proof of Proposition \ref{prop:guassian}}\label{pf:as 2}

For the noisy case, we need to first introduce the $\epsilon$-net to proceed.

\begin{definition}[Net and Covering Number]\label{defi:eps-net}
    Let $\mathcal{S}^{r-1} = \{\bu \in \mathbb{R}^{r} : \|\bu\| = 1\}$ be the unit sphere in $\mathbb{R}^{r}$ and $\epsilon > 0$ be a parameter. A subset $\mathcal{N}_{\epsilon} \subseteq \mathcal{S}^{r-1}$ is said to be an $\epsilon$-net of $\mathcal{S}^{r-1}$ if for any point $\bu \in \mathcal{S}^{r-1}$, there exists a point $\ba \in \mathcal{N}_{\epsilon}$ such that $\|\bu - \ba\| \leq \epsilon$. The cardinality of the smallest $\epsilon$-net is called the $\epsilon$-covering number, denoted by $N(\mathcal{S}^{r-1}, \epsilon)$. 
\end{definition}
According to \citet{szarek1997metric}, we know that, for any $\epsilon >0$,  
\begin{align}\label{ineq:covering-number}
    N(\mathcal{S}^{r-1}, \epsilon) \leq \left(1 + \frac{2}{\epsilon} \right)^r.
\end{align}
Let $\mathcal{N}_{\frac{1}{4}}$ denote the $\frac{1}{4}$-net $\mathcal{S}^{r-1}$ with the smallest cardinality. Then, \eqref{ineq:covering-number} implies that $K = |\mathcal{N}_{\frac{1}{4}}| \leq 9^r$. Let $\mathcal{N}_{\frac{1}{4}} = \{\ba_1, \ba_2, \dots, \ba_M\}$, then we define
\begin{align*}
    \mathcal{C}_k := \left\{\bu \in \mathbb{R}^r : \bu^T \ba_k \geq \frac{\sqrt{6} + \sqrt{2}}{4} \|\bu\| \right\}, \, \forall k \in [K].
\end{align*}
Since $\ba_k\in \mathcal{N}_{\frac{1}{2}} \subseteq \mathcal{S}^{r-1}$, the inequality $\bu^T \ba_k \geq ({\sqrt{6} + \sqrt{2}}) \|\bm u\| / 4$ means the angle between $\bu$ and $\ba_i$ is less than $\arccos\left(\frac{\sqrt{6} + \sqrt{2}}{4}\right) = \frac{\pi}{12}$. According to the definition of  $\epsilon$-net, we can show the following lemma. 
\begin{lemma}\label{lem:union-C}
    $\mathbb{R}^r = \bigcup_{k\in [K]} \mathcal{C}_k$.
\end{lemma}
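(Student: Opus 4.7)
The plan is straightforward: leverage the definition of the $1/4$-net together with the polarization identity to convert a small Euclidean distance on the sphere into a large inner product, and then check numerically that the resulting bound exceeds the threshold $\tfrac{\sqrt{6}+\sqrt{2}}{4} = \cos(\pi/12)$ used in the definition of $\mathcal{C}_k$.

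First, I would dispose of the trivial case $\bu = \bm 0$, which lies in every $\mathcal{C}_k$ since both sides of the defining inequality vanish. For $\bu \neq \bm 0$, consider the unit vector $\bar{\bu} := \bu/\|\bu\| \in \mathcal{S}^{r-1}$. By the defining property of the $1/4$-net $\mathcal{N}_{1/4}$, there exists an index $k \in [K]$ such that $\|\bar{\bu} - \ba_k\| \le 1/4$.

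The key step is then the standard identity $\|\bar{\bu} - \ba_k\|^2 = \|\bar{\bu}\|^2 + \|\ba_k\|^2 - 2\,\bar{\bu}^T \ba_k = 2 - 2\,\bar{\bu}^T \ba_k$, since both $\bar{\bu}$ and $\ba_k$ are unit vectors. Combining with the net bound $\|\bar{\bu} - \ba_k\|^2 \le 1/16$ yields $\bar{\bu}^T \ba_k \ge 1 - \tfrac{1}{32} = \tfrac{31}{32}$. Finally, a direct numerical check gives $\tfrac{31}{32} = 0.96875 > 0.9659\ldots = \tfrac{\sqrt{6}+\sqrt{2}}{4}$. Multiplying through by $\|\bu\|$ gives $\bu^T \ba_k \ge \tfrac{\sqrt{6}+\sqrt{2}}{4}\|\bu\|$, i.e., $\bu \in \mathcal{C}_k$.

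There is no real obstacle here; the only thing to watch is that the constant $\tfrac{\sqrt{6}+\sqrt{2}}{4}$ was presumably chosen precisely so that a $1/4$-net suffices (indeed $\arccos(31/32) < \pi/12$). If one wanted a cleaner write-up, one could phrase the argument in terms of angles: a $1/4$-net guarantees angular coverage within $\arccos(31/32) \approx 0.2527$ rad, which is strictly less than $\pi/12 \approx 0.2618$ rad.
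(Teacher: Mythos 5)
Your proof is correct and follows essentially the same route as the paper's: find the net point $\ba_k$ within distance $1/4$ of $\bu/\|\bu\|$, expand the squared norm to get $\bar{\bu}^T\ba_k \ge 31/32$, and compare against $\cos(\pi/12) = \tfrac{\sqrt{6}+\sqrt{2}}{4} \approx 0.9659$. The only difference is that you spell out the polarization identity and the $\bu = \bm 0$ case explicitly, which the paper leaves implicit.
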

\begin{proof} 
    For any $\bu \in \mathbb{R}^r \setminus \{\bm{0}\}$ and $ {\bu}/{\|\bu\|} \in \mathcal{S}^{r-1}$, there exists some $\ba_i \in \mathcal{N}_{\frac{1}{4}}$ such that $\|\ba_i - {\bu}/{\|\bu\|}\| \leq \frac{1}{4}$, which implies 
    \begin{align*}
        \left\langle \ba_i, \frac{\bu}{\|\bu\|} \right\rangle \geq \frac{31}{32} > \frac{\sqrt{6} + \sqrt{2}}{4}.
    \end{align*}
    This implies $\bu \in \mathcal{C}_i$. 
\end{proof}
We can now define the collection of index set as follows:
\begin{align*}
    \mI_k = \{i \in [n] : \bu^\star_i \in \mathcal{C}_k\}, \, \forall k \in [K],
\end{align*}
where $\bu^\star_i$ is the $i$-th row of $\bU^\star$. Lemma \ref{lem:union-C} implies that the collection of index sets $\{\mI_1, \mI_2, \dots, \mI_K\}$ satisfies condition $(a)$ in Assumption \ref{ass:main}. 

\begin{proof}[Proof of \Cref{prop:guassian}] 
First, Lemma \ref{lem:union-C} already implies condition (a). Next, for any $k \in [K]$ and any $(i, j) \in \mI_k \times \mI_k$, we know that $\bu^\star_i, \bu^\star_j \in \mathcal{C}_k$ according to the definition of $\mI_k$. Owing to the property of $\mathcal{C}_k$, we know that the angle between $\bu_i^\star$ and $\ba_k$ is less than ${\pi}/{12}$ and so is the angle between $\bu_j^\star$ and $\ba_k$. This implies that the angle between $\bu_i^\star$ and $\bu_j^\star$ is less than ${\pi}/{6}$. This further implies
\begin{align*}
    \bu_i^{\star T} \bu_j^{\star} \geq \cos\left(\frac{\pi}{6}\right) \|\bu_i^\star\| \|\bu_j^{\star T}\| = \frac{\sqrt{3}}{2}\|\bu_i^\star\| \|\bu_j^{\star T}\|.
\end{align*}
Then, we have $m_{ij} = \bu_i^{\star T} \bu_j^{\star} + \delta_{ij}\geq 0$ due to $\|\bDelta\|_\infty < \frac{1}{2}\min_{k\in [n]} \|\bu_k^\star\|_2^2$. It implies that $(i,j) \in \Omega$ for all $(i, j) \in \mI_k \times \mI_k$, i.e., condition (b) holds. 

Next, since each row $\bu_i^\star$ is an i.i.d. Gaussian vector, we have $ {\bu_i^\star}/{\|\bu_i^\star\|_2}$ follows the uniform distribution on the unit sphere $\mathcal{S}^{r-1}$. One can evaluate the probability that $\P(\bu_i^\star \in \mathcal{C}_k)$ in the following way: the area of $\mathcal{S}^{r-1}$ is equal to $\frac{2\pi^{r/2}}{\Gamma(r/2)} $, while the area of $\mathcal{S}^{r-1} \cap \mathcal{C}^k$ is bigger than volume of $(r-1)$-dimensional ball with radius equal to $\sin(\pi/12)$, which is $\frac{\pi^{(r-1)/2}}{\Gamma((r+1)/2)} \cdot \sin^{r-1}(\pi/12)$. This implies 
\begin{align*}
    \P (\bu_i^\star \in \mathcal{C}_k) \geq \frac{\frac{\pi^{(r-1)/2}}{\Gamma((r+1)/2)} \cdot \sin^{r-1}(\pi/12)}{\frac{2\pi^{r/2}}{\Gamma(r/2)} } = \frac{\Gamma(r/2)}{2\sqrt{\pi} \Gamma((r+1)/2)}  \sin^{r-1}(\pi/12) \geq \frac{1}{\sqrt{\pi}r} \sin^{r}(\pi/12),
\end{align*}
where the last inequality holds because $\Gamma((r+1)/2) = \frac{r-1}{2}\Gamma((r-1)/2)\leq \frac{r}{2}\Gamma(r)$ holds for all positive integers. Since $r\leq \log n/({4\log 3})$, we have
\begin{align*}
    \P\left(\bu_i^\star \in \mathcal{C}_k\right) \geq  \frac{1}{\sqrt{\pi}(r-1)}  \sin^{r-1}(\pi/12) \geq \frac{4\log 3}{\log n} \cdot n^{\frac{\log(\sin(\pi/12))}{4\log 3}} >  \frac{4\log 3}{\log n} n^{-1/3}.
\end{align*}
Similar to \eqref{ineq:prob:nk}, one can easily show that, with probability at least $1 - 1/n$, we have $|\mI_k| \geq  {\sqrt{n}}/{2}$ for all $k \in [K]$. Moreover, since $ \sqrt{n}/{2} \gg r$ and $\bU^\star$ is a matrix with i.i.d. Gaussian entries, it is easy to see that $\bU_k^{\star T} \bU_k^\star \succeq \lambda \bI_r$ holds for some $\lambda >0$.

Finally, for any two $\mathcal{C}_k$ and $\mathcal{C}_l$ satisfying $\mathcal{C}_k \cap \mathcal{C}_l \neq \emptyset$ and for any $\bu_i^\star \in \mathcal{C}_k$ and $\bu^\star_j \in \mathcal{C}_l$, let $\bu_0^\star \in \mathcal{C}_k \cap \mathcal{C}_l$, then the angle between $\bu_i^\star$ and $\bu_0^\star$ is less than ${\pi}/{6}$ and so is the angle between $\bu_j^\star$ and $\bu_0^\star$, which implies the angle between $\bu_i^\star$ and $\bu_j^\star$ is less than $\frac{\pi}{3}$. Thus, we have 
\begin{align*}
    \bu_i^{\star T} \bu_j^{\star} \geq \cos\left(\frac{\pi}{3}\right) \|\bu_i^\star\| \|\bu_j^{\star T}\| = \frac{1}{2}\|\bu_i^\star\| \|\bu_j^{\star T}\|.
\end{align*}
so $m_{ij} = \bu_i^{\star T} \bu_j^{\star} + \delta_{ij}\geq 0$ given the condition that $\|\bDelta\|_\infty < \min_{k\in [n]} \|\bu_k^\star\|^2 / 2$. It implies that $(i,j) \in \Omega$ for all $(i, j) \in \mI_k \times \mI_l$. For any $k, l \in [K]$, we can find a path $k_0 \rightarrow k_1 \rightarrow \cdots \rightarrow k_s$ such that $k_0 = k$, $k_s = l$ and $\mathcal{C}_{k_t} \cap \mathcal{C}_{k_{t+1}} \neq \emptyset$. This complete the whole proof.

\end{proof}

\section{Experimental Setups and Results}\label{sec:supp:expt}

\subsection{Additional results and figures} \label{sec:app:addexpt}

In this section, we provide the full version of \cref{sample-table}, which includes noise level $\sigma = 10^{-2}$. See Table \ref{sample-table-full}. We also provide an additional figure along the lines of \cref{fig:time}, also including noise level $\sigma = 10^{-2}$. In Figure \ref{fig:time-supp}, we see that all the algorithms perform very similarly in terms of completion accuracy with $\sigma = 10^{-2}$, and their computation time is similar to that for the noiseless case. 

\vspace{-0.1in}
\begin{table*}[t]
\caption{Comparison of the norm of gradient, function value, and completion error returned by GD with different initialization.}\label{table:2}
\label{sample-table-full}
\vskip -0.15in
\begin{center}
\begin{tabular}{lcccr}
\toprule
$\sigma = 0$ & $\alpha_{T}$ & $\beta_{T}$ & $\gamma_{T}$ \\
\midrule
{\bf Our init}  & $(9.7 \pm 0.16)\cdot 10^{-7}$ & $(3.3 \pm 0.36) \cdot 10^{-14}$ &  $ (7.6 \pm 0.7) \cdot 10^{-11}$ \\
\text{RI init} & $(6.8 \pm 0.3)\cdot 10^{-3}$ & $ (7.9 \pm 2.1) \cdot 10^{3}$ & $0.5 \pm 0.13$ \\ 
\text{RS init} & $0.1 \pm 0.45$ & $ (7.1 \pm 3.8) \cdot 10^{3} $ & $0.45 \pm 0.24$ \\
\midrule\midrule
$\sigma = 10^{-4}$ & $\alpha_{T}$ & $\beta_{T}$ & $\gamma_{T}$ \\
\midrule
{\bf Our init}    & $(9.7 \pm 0.14)\cdot 10^{-7}$ & $(1.8 \pm 0.03)\cdot 10^{-4}$ & $(4.4 \pm 0.13)\cdot 10^{-5}$ \\
{RI init} & $(0.39 \pm 1.2)\cdot 10^{-5}$ & $(8.4 \pm 2.1) \cdot 10^3$  & $0.51 \pm 0.13$ \\
\text{RS init} & $(1.5 \pm 6.5)\cdot 10^{-3}$ & $(8.4 \pm 2.0) \cdot 10^3$ & $0.51 \pm 0.13$ \\ 
\midrule\midrule
$\sigma = 10^{-2}$ & $\alpha_{T}$ & $\beta_{T}$ & $\gamma_{T}$ \\
\midrule
{\bf Our init}    & $(9.7 \pm 0.17)\cdot 10^{-7}$ & $1.8 \pm 0.03$ & $(4.4 \pm 0.22) \cdot 10^{-3}$ \\
{RI init} & $(8.1 \pm 0.35) \cdot 10^{-5}$ & $(7.3 \pm 3.3) \cdot 10^3$ & $0.46 \pm 0.2$ \\
\text{RS init} & $(6.4 \pm 0.19)\cdot 10^{-5}$ & $(7.9 \pm 3.6) \cdot 10^3$ & $0.418 \pm 0.21$ \\ 
\bottomrule
\end{tabular}
\end{center}
\vskip -0.1in
\end{table*}

\begin{figure*}[t]
\begin{center}
	\begin{subfigure}{0.48\textwidth}
    	\includegraphics[width = 1\linewidth]{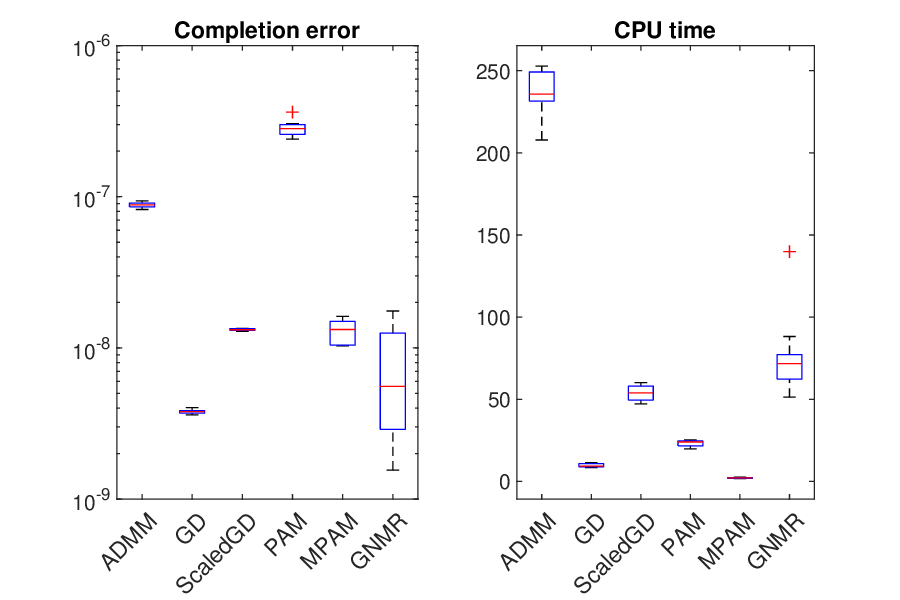}\vspace{-0.05in}
    \caption{Noiseless case: $\sigma = 0$} 
    \end{subfigure} 
    \begin{subfigure}{0.48\textwidth}
    	\includegraphics[width = 1\linewidth]{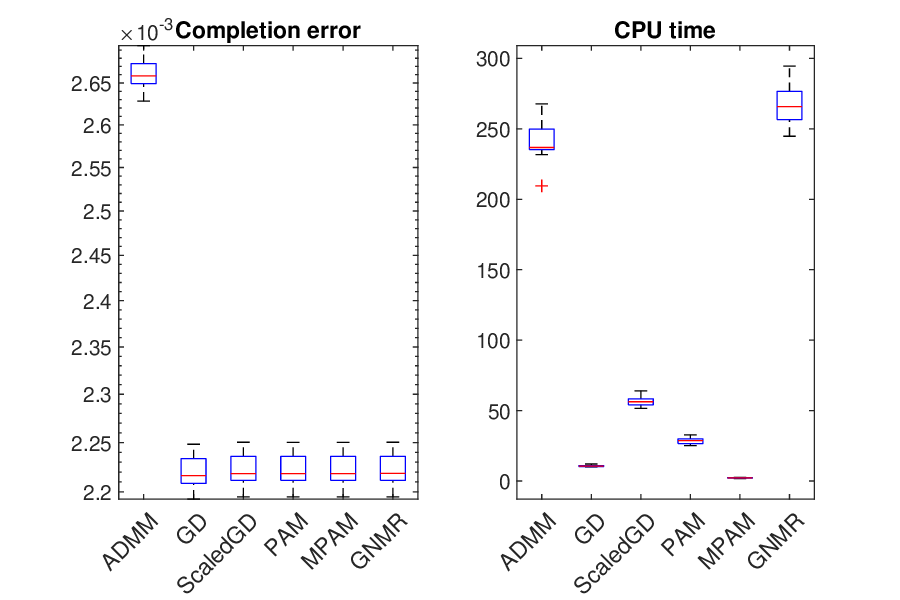}\vspace{-0.05in}
    \caption{Noisy case: $\sigma = 10^{-2}$} 
    \end{subfigure}\vspace{-0.1in} 
    \caption{\textbf{Completion error and CPU time of the tested methods for MC with ReLU sampling.}}\vspace{-0.15in} 
    \label{fig:time-supp}
\end{center}
\end{figure*}

\subsection{Experimental Setup for \Cref{subsec:comp}} \label{subsec:effi}

In this subsection, we give details of the studied methods in \Cref{subsec:comp}. 

\vspace{-0.1in}
\paragraph{ADMM for the convex formulation of MC.} Noting that the noisy model in \eqref{model:UV} is considered in this work, then we study the following convex formulation to complete the missing entries as studied in \cite{candes2010matrix}: 
\begin{align}\label{eq:cvx}
    \min_{\bm X \in \R^{n\times n}} \|\bm X\|_*\qquad \mathrm{s.t.}\ \left\| \bm X_{\Omega} - \bm M_{\Omega} \right\|_F^2 \le \delta. 
\end{align}
In our implementation, we simply set $\delta = \|\bm \Delta\|_F^2$. It is obvious that the classic formulation in \cite{recht2011simpler} is a special case of the above formulation when $\delta = 0$ in the noiseless case.  Then, one can efficiently solve this convex problem using ADMM. By introducing an auxiliary variable $\bm Y$, we first rewrite Problem \eqref{eq:cvx} as
\begin{align}
     \min_{\bm X \in \R^{n\times n}} \|\bm Y\|_*\qquad \mathrm{s.t.}\ \bm X = \bm Y,\ \left\| \bm X_{\Omega} - \bm M_{\Omega} \right\|_F^2 \le \delta. 
\end{align}
By introducing a dual variable $\bm \Lambda \in \R^{n\times n}$, one can write the augmented Lagrangian as follows:
\begin{align}
    \mathcal{L}(\bm X, \bm Y; \bm \Lambda) =  \|\bm Y\|_* + \delta_{\mathcal{X}}(\bm X) - \langle \bm \Lambda, \bm X - \bm Y \rangle + \frac{\rho}{2}\|\bm X - \bm Y\|_F^2,
\end{align}
where $\mathcal{X} =  \{\bm X \in \R^{n\times n}:\  \left\| \bm X_{\Omega} - \bm M_{\Omega} \right\|_F^2 \le \delta \}$. In the implementation, we randomly genearte an initial point $(\bm Y^{(0)}, \bm \Lambda^{(0)})$, whose entries are i.i.d. sampled from the standard normal distribution. Given the current iterate $(\bm X^{(t)}, \bm Y^{(t)}, \bm \Lambda^{(t)})$, ADMM generates the next iterate as follows: 
\begin{itemize}
    \item To update $\bm X$, we compute 
\begin{align*}
    \bm X^{(t+1)} = \mathrm{argmin}_{\bm X } \mathcal{L}(\bm X, \bm Y^{(t)}; \bm \Lambda^{(t)}). 
\end{align*}
By letting $\bm W^{(t)} = \bm Y^{(t)} + \bm \Lambda^{(t)} / \rho$, we compute its closed-form solution as follows: 
\begin{align*}
   & \bm X^{(t+1)} = \bm W^{(t)},\ \text{if}\ \| \bm W^{(t)}_{\Omega}  - \bm M_{\Omega} \|_F^2 \le \delta, \\
   & \bm X^{(t+1)}_{\Omega^C} = \bm W^{(t)}_{\Omega^C},\ \bm X^{(t+1)}_{\Omega} = \frac{\bm W^{(t)}_{\Omega} + t\bm M_{\Omega}}{1+t},\ \text{where}\ t = \frac{\|\bm W^{(t)}_{\Omega} - \bm M_{\Omega}\|_F^2}{\delta} - 1, \ \text{otherwise}. 
\end{align*}
\item  To update $\bm Y$, we compute 
\begin{align*}
    \bm Y^{(t+1)} = \mathrm{argmin}_{\bm Y} \mathcal{L}(\bm X^{(t+1)}, \bm Y; \bm \Lambda^{(t)}) =  \mathrm{argmin}_{\bm Y} \frac{1}{2}\left\| \bm Y - \left( \bm X^{(t+1)} - \bm \Lambda^{(t)}/\rho \right) \right\|_F^2 + \frac{1}{\rho}\|\bm Y\|_*,
\end{align*}
This is to compute the proximal mapping of the nuclear norm that admits a closed-form solution. 
\item To update $\bm \Lambda$, we have
\begin{align}
    \bm \Lambda^{(t+1)} = \bm \Lambda^{(t)} - \rho\left(\bm X^{(t+1)} - \bm Y^{(t+1)}\right)
\end{align}
\end{itemize}
We terminate the algorithm when $\left\| \bm X^{(t)} - \bm Y^{(t)} \right\|_F \le 10^{-4}$ for some $t \ge 0$.


\vspace{-0.1in}
\paragraph{ScaledGD for the formulation in \cite{tong2021accelerating}.} According to \cite{tong2021accelerating}, one can solve the following formulation for MC: 
\begin{align}\label{eq:MC1}
    \min_{\bm L, \bm R \in \R^{n\times r}} H(\bm L, \bm R) = \frac{1}{2}\left\|(\bm L \bm R^T)_{\Omega} - \bm M_{\Omega} \right\|_F^2. 
\end{align}
Here, we directly use their MATLAB codes downloaded from \url{https://github.com/Titan-Tong/ScaledGD} to implement ScaledGD for solving the above problem. Notably, we employ our tailor-designed initialization in \Cref{subsec:init} to initialize ScaledGD, which demonstrates great performance in our experiments. We terminate the algorithm when $\left\|  \nabla H(\bm L^{(t)}, \bm R^{(t)}) \right\|_F \le 10^{-4}$ for some $t \ge 0$.  

\vspace{-0.1in}
\paragraph{PAM for the formulation in \cite{saul2022nonlinear}.} To complete sparse nonnegative matrices with low-dimensional structures, \citet{saul2022nonlinear} studied the following formulation: 
\begin{align}\label{eq:sparse}
    \min_{\bm X, \bm \Theta \in \R^{n\times n}} \|\bm X - \bm \Theta\|_F^2\qquad \mathrm{s.t.}\ \mathrm{rank}(\bm \Theta) = r,\ \bm X_{\Omega} = \bm M_{\Omega},\ \bm X_{\Omega^c} \le 0. 
\end{align}
Since the variables $\bm X, \bm \Theta$ are separable in this problem, we can directly proximal alternating minimization (PAM) to solve this problem. In the implementation, we randomly genearte an initial point $(\bm X^{(0)}, \bm \Theta^{(0)})$, whose entries are i.i.d. sampled from the standard normal distribution. Given the current iterate $(\bm X^{(t)}, \bm \Theta^{(t)})$, PAM generates the next iterate as follows: 
\begin{align*}
    & \bm X^{(t+1)} \in \mathrm{argmin}_{\bm X}  \|\bm X - \bm \Theta^{(t)}\|_F^2 + \frac{\alpha}{2}\|\bm X - \bm X^{(t)}\|_F^2 \qquad \mathrm{s.t.}\ \bm X_{\Omega} = \bm M_{\Omega},\ \bm X_{\Omega^c} \le 0,\\
    & \bm \Theta^{(t+1)} \in \mathrm{argmin}_{\bm \Theta}  \|\bm X^{(t+1)} - \bm \Theta\|_F^2 + \frac{\beta}{2}\|\bm \Theta - \bm \Theta^{(t)}\|_F^2 \qquad \mathrm{s.t.}\ \mathrm{rank}(\bm \Theta) = r. 
\end{align*}
Obviously, the above subproblems both admit closed-form solutions. We terminate the algorithm when $\|\bm X^{(t+1)} - \bm X^{(t)}\|_F +  \|\bm \Theta^{(t+1)} - \bm \Theta^{(t)}\|_F \le 10^{-4}$ for some $t \ge 0$. 

\vspace{-0.1in}
\paragraph{Momentum PAM for the NMD formulation in \cite{seraghiti2023accelerated}.} Noting that $\bm \Theta$ in Problem \eqref{eq:sparse} admits the low-rank structure, one can naturally reformulate this problem into the following nonlinear matrix decomposition (NMD) formulation: 
\begin{align*} 
    \min_{\bm X \in \R^{n\times n}, \bm W, \bm H \in \R^{n\times r}} \|\bm X - \bm W \bm H\|_F^2\qquad \mathrm{s.t.}\ \ \bm X_{\Omega} = \bm M_{\Omega},\ \bm X_{\Omega^c} \le 0. 
\end{align*}
In particular, \citet{seraghiti2023accelerated} have proposed a momentum PAM method for solving this problem. Here, we directly use their MATLAB codes downloaded from \url{https://gitlab.com/ngillis/ReLU-NMD} to implement momentum PAM for solving this problem. We terminate the algorithm when the relative error $\|\bm X^{(t)} - (\bm W^{(t)} \bm H^{(t)})_{\Omega}\|_F/\|\bm X^{(t)}\|_F \le 10^{-4}$. 

\vspace{-0.1in}
\paragraph{Gaussian-Newton matrix recovery (GNMR) for low-rank MC \cite{zilber2022gnmr}.} \citet{zilber2022gnmr} employed the Gauss-Newton linearization to design a GNMR method for solving the non-convex formulation of MC. Here, we directly use the MATLAB codes in \cite{naik2022truncated} to implement this method for solving MC with ReLU sampling.

\end{document}